\declaretheorem{theorem}
\declaretheorem[sibling=theorem]{proposition}
\declaretheorem{assumption}
\DeclareMathOperator*{\argmin}{argmin}
\newcommand{\ifcp}{ACP\xspace}
\newcommand{\model}{\ensuremath{\theta}\xspace}
\newcommand{\reg}{\ensuremath{\lambda}\xspace}
\newcommand{\modelspace}{\ensuremath{\Theta}\xspace}
\newcommand{\loss}{\ensuremath{\ell}\xspace}
\newcommand{\risk}{\ensuremath{R}\xspace}
\newcommand{\ncm}{\ensuremath{A}\xspace}
\newcommand{\score}{\ensuremath{\alpha}\xspace} 
\newcommand{\pval}{\ensuremath{p}\xspace}
\newcommand{\predset}{\ensuremath{\Gamma}\xspace}
\newcommand{\xset}{\ensuremath{X}\xspace}
\newcommand{\yset}{\ensuremath{Y}\xspace}
\newcommand{\zset}{\ensuremath{Z}\xspace}
\newcommand{\objspace}{\ensuremath{\mathcal{X}}\xspace}
\newcommand{\labelspace}{\ensuremath{\mathcal{Y}}\xspace}
\newcommand{\bigoh}[1]{\ensuremath{\mathcal{O}(#1)\xspace}}
\newcommand{\mlpA}{MLP\textsubscript{A}\xspace}
\newcommand{\mlpB}{MLP\textsubscript{B}\xspace}
\newcommand{\mlpC}{MLP\textsubscript{C}\xspace}
\newcommand{\modelz}{\ensuremath{\model_{Z}\xspace}}
\newcommand{\modelloo}{\ensuremath{\model_{Z \setminus \{z_i\}}\xspace}}
\newcommand{\modelzhat}{\ensuremath{\model_{Z \cup \{\hat{z}\}}\xspace}}
\newcommand{\modelncm}{\ensuremath{\model_{Z \cup \{\hat{z}\} \setminus \{z_i\}}\xspace}}
\newcommand{\influence}{\ensuremath{I}\xspace}
\newcommand{\ifscore}{\ensuremath{\tilde{\score}}\xspace} 
\newcommand{\ifmodel}{\ensuremath{\tilde{\model}}\xspace}
\newcommand{\ifmodelloo}{\ensuremath{\tilde{\model}_{\zset \setminus \{z_i\}}}\xspace}
\newcommand{\ifmodelncm}{\ensuremath{\ifmodel_{\zset \cup \{\hat{z}\} \setminus \{z_i\}}}\xspace}
\newcommand{\ifmodelzhat}{\ensuremath{\ifmodel_{\zset \cup \{\hat{z}\}}\xspace}}
\newcommand{\ifloss}{\ensuremath{\tilde{\loss}}\xspace}
\newcommand\card{\text{\ttfamily\#}} 
\newcommand{\suchthat}{\ensuremath{\,:\,}} 
\newbox{\bigpicturebox}
\definecolor{algochange}{HTML}{005AB5}
\definecolor{oldalgo}{HTML}{AE422B}
\definecolor{star}{HTML}{F08080}
\title{Approximating Full Conformal Prediction at Scale via
Influence Functions}
\author {
    Javier Abad\textsuperscript{\rm 1 2},
    Umang Bhatt \textsuperscript{\rm 1 3},
    Adrian Weller \textsuperscript{\rm 1 3},
    Giovanni Cherubin \textsuperscript{\rm 4}
}
\begin{document}

\maketitle

\begin{abstract}

Conformal prediction (CP) is a wrapper around traditional machine learning models, giving coverage guarantees under the sole assumption of exchangeability; in classification problems, for a chosen significance level $\varepsilon$, CP guarantees that the error rate is at most $\varepsilon$, irrespective of whether the underlying  model is misspecified. However, the prohibitive computational costs of ``full'' CP led researchers to design scalable alternatives, which alas do not attain the same guarantees or statistical power of full CP. In this paper, we use influence functions to efficiently approximate full CP. We prove that our method is a consistent approximation of full CP, and empirically show that the approximation error becomes smaller as the training set increases; e.g., for $10^{3}$ training points the two methods output p-values that are $<10^{-3}$ apart: a negligible error for any practical application. Our methods enable scaling full CP to large real-world datasets. We compare our full CP approximation (\ifcp{}) to mainstream CP alternatives, and observe that our method is computationally competitive whilst enjoying the statistical predictive power of full CP.
\end{abstract}
\section{Introduction}

Conformal prediction (CP) is a post-hoc approach to providing validity guarantees on the outcomes of machine learning (ML) models;
in classification, an ML model wrapped with ``full''
CP outputs prediction sets
that contain the true label with (chosen)
probability $1-\varepsilon$,
under mild distribution assumptions.
Unfortunately, full CP is notoriously computationally expensive.
Many have proposed alternative methods to avoid the full CP objective;
these include:
split (or ``inductive'') CP~\cite{papadopoulos2002inductive},
cross-CP~\cite{vovk2012cross},
jackknife+~\cite{barber2021predictive},
RAPS~\cite{Angelopoulos2021UncertaintySF},
CV+~\citep{Romano2020ClassificationWV}.
While these methods have shown practical promise,
they do not attain the
same validity guarantee as full CP
or its statistical power
(e.g., prediction set size).
Recent work optimized full CP for ML models that support incremental and
decremental learning by speeding
up the leave-one-out (LOO) procedure
required for the prediction set
calculation~\citep{pmlr-v139-cherubin21a};
however, this approach may not scale to complex models such as neural networks.

\begin{figure*}[t]
\vspace{-0.75cm}
\hspace{-0.65cm}
\centering
\begin{minipage}{.28\textwidth}
\begin{center}
\begin{subfigure}{.3\textwidth}
\adjustbox{raise=-5pc}{\scalebox{2}[2]{
\includegraphics[width=0.55\linewidth]{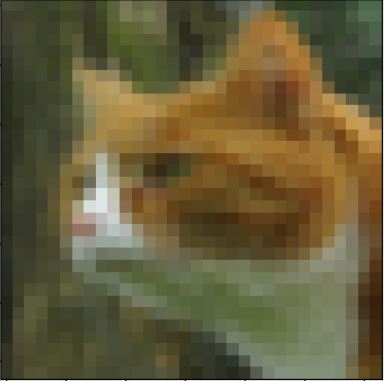}}}
\end{subfigure}
\end{center}
\begin{subfigure}[b]{.3\textwidth}
\begin{center}
\footnotesize
\begin{tabular}{ll}
    \toprule
     Method & Prediction set \\ \midrule
     ACP  & bird, \underline{\textbf{cat}}, deer, frog \\ 
     SCP & bird, deer, frog \\ 
     RAPS & bird, \underline{\textbf{cat}}, deer, dog, frog \\ 
     CV+ & bird, \underline{\textbf{cat}}, deer, dog, frog 
 \end{tabular}
\end{center}
\end{subfigure}
\end{minipage}
\begin{minipage}{.33\textwidth}
\begin{center}
\begin{subfigure}{.33\textwidth}
\adjustbox{raise=-5pc}{\scalebox{1.7}[1.7]{ \includegraphics[width=0.5\linewidth]{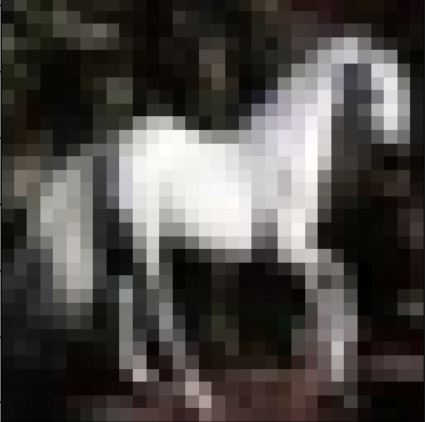}}}
\end{subfigure}
\end{center}
\begin{subfigure}[b]{.33\textwidth}
\begin{center}
\footnotesize		 
\begin{tabular}{ll}
    \toprule
     Method & Prediction set \\ \midrule
     ACP  & auto, cat, frog, \underline{\textbf{horse}}, truck \\ 
     SCP & auto, deer, frog, truck \\ 
     RAPS & plane, auto, bird, deer, frog, ship, truck\\ 
     CV+ & plane, auto, deer, frog, \underline{\textbf{horse}}, truck 
 \end{tabular}
\end{center}
\end{subfigure}
\end{minipage}
\hspace{0.75cm}
\begin{minipage}{.3\textwidth}
\vspace{0.06cm}
\begin{center}
\begin{subfigure}{.3\textwidth}
\adjustbox{raise=-5pc}{\scalebox{2}[2]{
\includegraphics[width=0.5\linewidth]{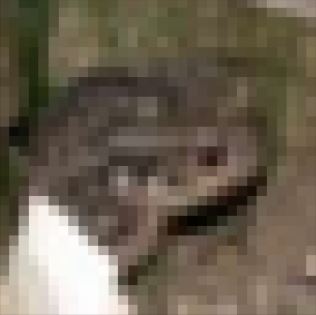}}}
\end{subfigure}
\end{center}
\begin{subfigure}[b]{.26\textwidth}
\begin{center}
\footnotesize
\begin{tabular}{ll}
    \toprule
     Method & Prediction set \\ \midrule
     ACP  & cat, deer, \underline{\textbf{frog}}, horse \\  
     SCP & cat, deer, dog, \underline{\textbf{frog}}, horse, truck \\  
     RAPS & cat, deer, dog, \underline{\textbf{frog}}, horse, truck \\ 
     CV+ & cat, deer, dog, \underline{\textbf{frog}}, horse
 \end{tabular}
\end{center}
\end{subfigure}
\end{minipage}
\caption{Prediction sets generated by CP methods ($\varepsilon=0.2$) for CIFAR-10 examples. Our method (ACP) yields prediction sets that (1) contain the true label and (2) are the smallest.
ACP approximates well full CP in large training sets, such as CIFAR-10, inheriting its statistical power.
}
\label{fig:figure1}
\end{figure*}

Herein, we first discuss how to approximate the full CP objective.
We focus on full CP for classification, and optimize it for
ML models trained via ERM (e.g., logistic regression, neural networks).
The key insight we leverage is that,
for each test point, full CP:
(i) retrains the underlying ML model on the additional test point, 
and (ii) performs a LOO procedure for each training point.
We observe we can approximate both steps, and avoid retraining each time, by using first order influence functions~\citep{hampel1974influence}.
We term our method \textit{Approximate full Conformal Prediction} (\ifcp{}), and we prove finite-sample error guarantees:
as the training set grows,
\ifcp{} approaches full CP.
We then show that a stronger regularization parameter
for training the underlying ML model improves
the approximation quality.

We empirically demonstrate that \ifcp{} is competitive with existing methods on MNIST~\cite{lecun1998mnist}, CIFAR-10~\cite{Cifar}, and US Census~\cite{ding2021retiring}.
Unlike full CP, \ifcp{} scales to large datasets
for real-world ML models (logistic regression, multilayer perceptrons, and convolutional neural networks). 
Performance-wise, \ifcp{} is consistently better than existing
alternatives in terms of statistical power: it attains the
desired error rate $\varepsilon$ with tighter prediction
sets;
\Cref{fig:figure1} shows on CIFAR-10 examples where, unlike other methods, \ifcp{} learns smaller prediction sets that still contain the true label.

\section{Preliminaries}
\label{sec:back}

We describe full CP, and then introduce influence functions, our main
optimization tool.

\subsection{Notation and full CP}\label{subsec:notation}
Consider a training set $\zset = (\xset, \yset) \in (\objspace \times \labelspace)^N$.
For a test object $x \in \objspace$ and a chosen
significance level $\varepsilon \in [0, 1]$, a CP returns
a set $\predset^\varepsilon_x \subseteq \labelspace$
containing $x$'s true label with probability
at least $1-\varepsilon$.
This guarantee (\textit{validity})
holds for any exchangeable distribution on
$\zset \cup \{(x, y)\}$.
Since the error rate of a CP is guaranteed by validity,
a data analyst only needs to control the tightness (\textit{efficiency})
of
its prediction set;
average
$|\predset^\varepsilon_x|$ is a common efficiency criterion~\citep{vovk2016criteria}.
Efficiency is controlled by improving the
underlying model that CP wraps.

\paragraph{Underlying model.}
A CP can be built around virtually any ML model $\model$.
We assume the underlying model is trained via
ERM by minimizing the risk:
$\risk(\zset, \hat{\model}) \equiv \frac{1}{N}\sum_{z_i \in \zset} \loss(z_i, 
\hat{\model})$;
$\loss(z, \model)$ is the loss of the model at a point $z$, which
we assume to be convex and twice differentiable
in $\model$. This assumption holds for many popular loss functions.
Let
$\modelz \equiv \argmin_{\hat{\model} \in \modelspace} \risk(\zset, \hat{\model})$
be the ERM solution;
we assume $\modelz$ to be unique, and discuss
relaxations in \Cref{sec:discussion}.

\paragraph{Nonconformity measure.}
The underlying model is used to construct a nonconformity
measure, which defines a CP.
A nonconformity measure is a function
$\ncm: (\objspace\times\labelspace) \times (\objspace\times\labelspace)^N \rightarrow \mathbf{R}$ which
scores how \textit{conforming} (or \textit{similar})
an example $(x, y)$ is to a bag of examples
$\bar{\zset}$.
We focus on the two most common
ways of defining nonconformity measures (and, hence, CP) on the basis of
a model: the \textit{deleted} and the \textit{ordinary} scheme~\cite{vovk2005algorithmic}.

\paragraph{Full CP (deleted).}
Consider example $\hat{z}$ and a training set $\zset$.
The nonconformity measure can be defined from the
deleted (LOO) prediction: $\ncm(z_i, \zset) = \loss(z_i, \modelncm)$,
for all $z_i \in \zset \cup \{\hat{z}\}$.
Computing this nonconformity measure
requires training the model on
$\zset \cup \{\hat{z}\} \setminus \{z_i\}$. This scheme computes the loss at a point
after removing it from the model's training data.

Algorithm 1 shows how the nonconformity
measure is used in full CP.
For a test point $x$, CP runs a statistical test
for each possible label $\hat{y} \in \labelspace$
to decide if it should be included in the prediction
set $\predset^\varepsilon_x$.
The statistical test requires computing a nonconformity score
$\alpha_i$ by running $\ncm$ for each point
in the \textit{augmented} training set $\zset \cup \{(x, \hat{y})\}$;
then, a p-value is computed, and a decision is
taken based on the threshold $\varepsilon$.
This algorithm is particularly expensive.
Crucially, for each test point, and for every candidate
label, one needs to retrain the underlying
ML model $N+1$ times.

\paragraph{Full CP (ordinary).}
A computationally faster scheme
is achieved by taking the loss at the point: $\ncm(z_i, \zset) = \loss(z_i, \modelzhat)$.
We refer to this as the \textit{ordinary} scheme (\Cref{algo:cp-ordinary}).
This method is inherently faster than
the deleted approach, as it
only requires training one model per test example and candidate label.
However, the ordinary scheme generally leads to less efficient
predictions (\Cref{sec:experiments}).

\paragraph{Optimizing CP.}
The complexity of full CP depends on: (i) 
the number of training points $N$, and (ii) the number of labels $|\labelspace|$.
Optimizing w.r.t.  $|\labelspace|$ 
is necessary for regression settings, where full CP is
not applicable directly; this was done, for specific choices
of nonconformity measures, by \citet{papadopoulos2011regression,nouretdinov2001ridge,lei2019fast,ndiaye2019computing,ndiaye2022stable}.
Our work focuses on optimizing w.r.t. $N$; this enables
applying full CP classification to large datasets.
Future work may combine our optimizations and CP regression
strategies to obtain faster regressors on large training sets (e.g., \citealp{pmlr-v139-cherubin21a}).

\subsection{Influence functions}
Influence functions (IF) are at the
core of our proposal.
For a training example $z_i \in \zset$,
let $\influence_\model(z_i) = -\frac{1}{N}H_\model^{-1} \nabla_\model \loss(z_i, \model)$
be the \textit{influence} of $z_i$ on model $\model$,
where $H_\model = \nabla^2_\model \risk(\zset, \model)$ is the Hessian;
by assumption, $H_\model$ exists and is invertible.
A standard result by \citet{hampel1974influence} shows that:
\begin{equation}
	\modelloo - \model \approx - \influence_\model(z_i).
	\label{eq:if-model}
\end{equation}
$\influence_\model$ says
how much $z_i$ affects
$\model$
during training.
We can apply influence functions for computing the influence of a point
$z_i$ on any functional.
In our work, we are interested in the influence
on the loss function at a point $z$. Let
$\influence_\loss(z, z_i) = \nabla_\model \loss(z, \model)^\top \influence_\model(z_i)$.
Then, similarly to above, we have
\begin{equation}
	\loss(z, \modelloo) - \loss(z, \model)
		\approx -\influence_\loss(z, z_i).
	\label{eq:if-loss}
\end{equation}

\begin{figure*}[tb]
\begin{minipage}{0.5\textwidth}
    \input{figures/algo-full-cp}
\end{minipage}
\hfill
\begin{minipage}{0.5\textwidth}
    \input{figures/algo-our-method}
\end{minipage}
\caption{Full CP (left) and our proposal, \ifcp{}, (right). In \ifcp{}, the
underlying ML model is only trained once,
and nonconformity scores are approximated
via influence functions in Line 6.
}
\label{fig:algorithms}
\end{figure*}

\section{Approximate full Conformal Prediction}
\label{sec:method}

Our proposal (\ifcp{}) hinges on approximating
the nonconformity scores via IF.
We describe our approach, and prove theoretical
results on its consistency and
approximation error.

\subsection{Approach}
The bottleneck of running full CP is the
computation of the nonconformity scores $\score_i = \loss(z_i, \modelncm)$.
Each score is determined by computing the loss of the model
at point $z_i \in \zset \cup \{\hat{z}\}$ after
adding point $\hat{z}$ and removing point $z_i$
from the model's training data $\zset$.
There are two ways to approximate $\score_i$ via IF:
we can approximate the contribution of adding
and removing the points to the learned model
$\modelz$, and then evaluate its loss
at $z_i$ (\textit{indirect} approach),
or we can directly approximate the contribution
of the points
on the loss function
(\textit{direct} approach).
We describe both below.

\paragraph{Indirect approach.}
We can use \Cref{eq:if-model} to
approximate model $\modelncm$ and then
compute its loss.
That is, let
$\ifmodelncm \equiv \model_{\zset}
+\influence_{\model_\zset}(\hat{z})
-\influence_{\model_\zset}(z_i)$.
Then:
\begin{equation}
    \label{eq:score-approx-model}
    \score_i \approx \loss(z_i, \ifmodelncm) \,.
\end{equation}

\paragraph{Direct approach.}
We can directly compute the
influence on the loss.
Let $\model_{\zset}$ be a model trained via ERM on
the entire training set $\zset$.
The direct approximation for the score
is:
\begin{align}
    \label{eq:score-approx-loss}
    \score_i
    &\approx
        \ifloss(z_i, \modelncm)
    \equiv
        \loss(z_i, \model_\zset) +
        \influence_\loss(z_i, \hat{z}) -
        \influence_\loss(z_i, z_i).
\end{align}
$\influence_\loss(z_i, \hat{z})$
and $-\influence_\loss(z_i, z_i)$ are the influence
of \textit{including} point $\hat{z}$ and \textit{excluding}
$z_i$ (\Cref{eq:if-loss}).
Algorithm 2 (\ifcp{}) shows how both approaches enable
approximating full CP.

\ifcp{} gives a substantial speed-up over full CP.
In contrast to full CP, \ifcp{} has
a training phase, in which we:
compute the Hessian, the gradient for each
point $z_i$,
and provisional scores
$\loss(z_i, \modelz)$ for
$z_i \in \zset$.
For predicting a test point $\hat{z}$, it suffices
to compute its influence by using the Hessian
and gradients at $z_i$ and $\hat{z}$,
which is cheap, and update the provisional scores (see time complexities in \Cref{tab:complexities}). This enables \ifcp{} to scale to large real-world datasets such as CIFAR-10 (\Cref{sec:experiments}). As a reference, running full CP for the synthetic dataset (\Cref{sec:synthetic}) with 50 features,  1000 training points, and 100 test points took approximately 5 days, whereas ACP took less than 1 hour (CPU time with an Intel Core i7-8750H).

\subsection{Theoretical analysis}\label{sec:theory}

In this section, we establish the consistency of \ifcp{}: its approximation error gets smaller as the training set grows.
Further, we study its \textit{finite-sample} validity, and how the underlying model's regularization parameter affects its approximation error.
The consistency of \ifcp{} for the indirect approach comes from a result by \citet{giordano2019swiss}.
Proving consistency for the direct approach requires a condition, which we state in the next part as a conjecture.

\subsubsection*{Direct approximation is better than indirect}
We 
conjecture that the direct approach approximates better than the indirect one. Intuitively, it is much easier to approximate the loss at a point (direct) than to estimate the effect of a training point on the model weights, which lay in a high-dimensional space (indirect).
We observed this conjecture to hold consistently across a number of simulations (\Cref{sec:synthetic}).
Formally:

\begin{restatable}[]{condition}{directvsindirect}
\label{thm:approx-loss-vs-model}
Assume that the loss $\loss$ is convex and differentiable.
Then
the direct method (\Cref{eq:score-approx-loss})
is a better approximation
than the indirect one (\Cref{eq:score-approx-model}).
That is, let
$\score_i = \loss(z_i, \modelncm)$; then:
$$|\ifloss(z_i, \modelncm) - \score_i| \leq
    |\loss(z_i, \ifmodelncm) - \score_i| \,.$$
\end{restatable}
Without \Cref{thm:approx-loss-vs-model}, we can prove consistency for the indirect approach, but not for the direct approach.

\subsubsection*{Consistency of \ifcp{}}
We show that \ifcp{} is a consistent estimator of full CP.
We establish this equivalence in the most generic form possible:
we demonstrate that nonconformity scores produced by
Algorithm 2
approximate those produced by full CP.
In turn, the p-values
(and, consequently, error rates) of the two methods
get increasingly closer.

Our result is an extension of the work by \citet{giordano2019swiss},
who showed that IF
consistently estimate a model's parameters in a LOO
setting.
This result holds under a set of assumptions
(\Cref{assumption:giordano} in
\Cref{proofs}), which \citet{giordano2019swiss} showed
to hold for a variety of settings;
e.g., they hold when $\zset$ are well-behaved IID data
and $\loss(\cdot, \model)$ is an appropriately smooth
function.
Note that \Cref{assumption:giordano} limits the set of applicable nonconformity measures, e.g., by assuming them to be continuously differentiable.

\begin{restatable}[Consistency of approximate full CP]{theorem}{consistency}
Under \Cref{assumption:giordano} and \Cref{thm:approx-loss-vs-model},
let $\score_i = \loss(z_i, \modelncm)$,
and suppose $\loss$ is $K$-Lipschitz.
For every $N$ there is a constant $C$ such
that for every $z_i \in \zset \cup \{\hat{z}\}$:
$$|\ifloss(z_i, \modelncm) - \score_i| \leq
KC\frac{\max\{C_g, C_h\}^2}{N} \,,$$
for finite constants $C_g, C_h$ s.t.
	$\sup\limits_{\model \in \modelspace} \frac{1}{\sqrt{N}}
		||\nabla_\model\loss(z, \model)||_2 \leq
			C_g$ and
	$\sup\limits_{\model \in \modelspace} \frac{1}{\sqrt{N}}
	||\nabla^2_\model\loss(z, \model)||_2 \leq
	C_h$. \label{thm:consistency}
\end{restatable}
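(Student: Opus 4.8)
The plan is to pass through the \emph{indirect} approximation and then invoke the finite-sample influence-function guarantee of \citet{giordano2019swiss}. By \Cref{thm:approx-loss-vs-model} we have $|\ifloss(z_i,\modelncm) - \score_i| \le |\loss(z_i,\ifmodelncm) - \score_i|$, so it suffices to bound the right-hand side. Since $\score_i = \loss(z_i,\modelncm)$ and $\loss$ is $K$-Lipschitz in $\model$, $|\loss(z_i,\ifmodelncm) - \loss(z_i,\modelncm)| \le K\,\|\ifmodelncm - \modelncm\|_2$; the problem thus reduces to controlling the parameter-space error $\|\ifmodelncm - \modelncm\|_2$ of the first-order IF estimate obtained by simultaneously inserting $\hat{z}$ and deleting $z_i$.

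Next I would phrase this parameter error as an instance of the weighted-ERM perturbation analyzed by \citet{giordano2019swiss}. Introduce per-example weights $w$ in the empirical risk so that $\modelz$ is the ERM solution at $w=\mathbf{1}$ on $\zset$ (weight $0$ on $\hat{z}$) and $\modelncm$ is the solution at the weight vector obtained by setting the $\hat{z}$ coordinate to $1$ and the $z_i$ coordinate to $0$; this perturbation $w-\mathbf{1}$ has a fixed $\ell_2$-norm, independent of $N$ and of $i$. By construction $\ifmodelncm = \modelz + \influence_{\modelz}(\hat{z}) - \influence_{\modelz}(z_i)$ is exactly the infinitesimal-jackknife (first-order Taylor) expansion of the solution map $w \mapsto \model_w$ about $w=\mathbf{1}$, the two influence terms being its directional derivatives along the $\hat{z}$ and $z_i$ coordinates.

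I would then apply the main approximation bound of \citet{giordano2019swiss}: under \Cref{assumption:giordano} it controls $\|\model_w - \ifmodel_w\|_2$ by a constant times $\|w-\mathbf{1}\|_2^2$, the constant being governed by the operator-norm bound on the inverse Hessian of the normalized risk together with the uniform bounds on $\nabla_\model\loss$ and $\nabla^2_\model\loss$ and the Hessian-Lipschitz constant supplied by \Cref{assumption:giordano}. Tracking the $1/N$ that appears from the normalization of $\risk$ and substituting the definitions of $C_g$ and $C_h$ (which absorb the $1/\sqrt{N}$ scaling of the gradient and Hessian suprema) rewrites this constant in the form $C\,\max\{C_g,C_h\}^2/N$; multiplying by $K$ from the Lipschitz step gives $|\ifloss(z_i,\modelncm)-\score_i| \le KC\,\max\{C_g,C_h\}^2/N$, uniformly over $z_i \in \zset \cup \{\hat{z}\}$ because the perturbation norm is the same for each $i$.

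The main obstacle is this last step: correctly instantiating the hypotheses and, above all, the constant of the \citet{giordano2019swiss} theorem for the specific ``add one / remove one'' reweighting, and verifying that it is genuinely of the advertised form. One must (i) check that \Cref{assumption:giordano} indeed supplies finiteness of every quantity invoked (inverse-Hessian operator norm, gradient and Hessian bounds, Hessian-Lipschitz constant); (ii) confirm that the $O(1/N)$ rate is precisely what their bound yields once $\risk$ is written with its $1/N$ prefactor, so that a fixed-size two-point reweighting is an $O(1/N)$ perturbation of the estimating equations; and (iii) respect the ``for every $N$ there is a $C$'' phrasing, i.e., keep track of whether $C$ can be taken $N$-independent or only $N$-dependent but controlled. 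A minor additional check is that encoding the base point $\modelz$ (trained on $N$ points) and the target $\modelncm$ (trained on a different set of $N$ points) through a single weight vector introduces no spurious $O(1)$ mismatch.
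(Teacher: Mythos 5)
Your proposal matches the paper's proof in its skeleton: apply \Cref{thm:approx-loss-vs-model} to reduce the direct error to the indirect one, use $K$-Lipschitzness to pass to the parameter error $\|\ifmodelncm-\modelncm\|_2$, and then call on \citet{giordano2019swiss}. You differ only in how that final bound is obtained. The paper never treats ``add $\hat z$, remove $z_i$'' as a single reweighting: it inserts $\modelzhat$ and uses the triangle inequality
\begin{equation*}
\|\modelz+\influence_\model(\hat z)-\influence_\model(z_i)-\modelncm\|_2 \;\le\; \|\modelzhat-\influence_\model(z_i)-\modelncm\|_2+\|\modelzhat-\influence_\model(\hat z)-\modelz\|_2,
\end{equation*}
so that each summand is a plain leave-one-out error relative to the augmented set $\zset\cup\{\hat z\}$, and the LOO special case of \citet{giordano2019swiss} is applied twice, yielding the constant $C+C'$. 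You instead invoke their general weighted-ERM bound once, for a two-coordinate perturbation of the weight vector. Both routes are viable, but the paper's restriction to the LOO case is deliberate: it notes that the general leave-$k$-out statement requires two further assumptions beyond \Cref{assumption:giordano}, which are automatically satisfied for LOO; your route must verify them for the two-point reweighting (they do hold for any fixed-size perturbation, but this needs to be said), and must also reconcile the $1/N$ versus $1/(N+1)$ normalization between the base model $\modelz$ and the augmented-set solution map, both of which you correctly flag as open checks. What your framing buys in return is a cleaner accounting of where the influence terms are evaluated: in the paper's decomposition the first summand is a genuine LOO error only if the influence of $z_i$ is computed at $\modelzhat$ over the augmented set, whereas $\ifmodelncm$ evaluates it at $\modelz$ over $\zset$ --- a discrepancy the paper's write-up glosses over and your single-perturbation formulation avoids.
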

	
This result gives a finite-sample bound for the error of the direct approach for \ifcp{};
the error of the indirect approach
is also bounded as a byproduct of the same
proof.
We conclude that,
as $N$ grows, \ifcp{}'s
scores get
increasingly closer to those produced
by full CP.
We evaluate this in
\Cref{sec:synthetic}.

\subsubsection*{Validity of ACP}
\citet{Lin2021locally} state that \textit{finite-sample} validity is not guaranteed when the LOO is estimated with IF since they cannot be exactly computed. They exemplify this issue in the Discriminative Jackknife \cite{alaa2020discriminative}, which approximates the IF using Hessian-Vector-Products \cite{pearlmutter1994fast}. 

Although we alleviate part of the issue by computing the exact Hessian, we cannot guarantee that our LOO estimation is exact. \citet{basu2021influence} also summarize several issues with using IF in deep learning. 
Nevertheless, ACP still inherits the high efficiency of full CP, and we observe that validity holds in practice (\Cref{sec:experiments}). Future work can prove if the approximate scores follow the same distribution as the true ones and, consequently, if exchangeability still holds.

\subsubsection*{Relation to regularization parameter}
By extending a result by \citet{koh2019accuracy},
we investigate the effect of the ERM regularization parameter on
ACP's approximation error.
This result makes fairly simplistic assumptions (\Cref{appendix:theory-regularization}).

\begin{restatable}[Approximation goodness w.r.t.\ regularizer]{theorem}{regularizer}
\label{thm:approx-regularizer}
Suppose the model is trained via ERM with
regularization parameter $\reg$.
Under the assumptions of \Cref{thm:koh-self-loss},
Assumption~\ref{assumption:influence-zhat},
and neglecting $\bigoh{\lambda^{-3}}$ terms,
we have the following cone constraint between
the true nonconformity measure
$\score_i = \loss(z_i, \modelncm)$ and its direct
approximation
$\ifloss(z_i, \modelncm) \equiv
    \loss(z_i, \modelz) +
    \influence_\loss(z_i, \hat{z}) -
    \influence_\loss(z_i, z_i)$, where $g(\lambda) = (1 + \nicefrac{3\sigma_{max}}{2\reg} + \nicefrac{\sigma_{max}^2}{2\reg^2})$,
and $\sigma_{max}$ is the maximum eigenvalue
of the Hessian $H$:
\begin{equation*}
    \loss(z_i, \modelz) +
    \influence_\loss(z_i, \hat{z}) -
    g(\lambda)\influence_\loss(z_i, z_i) 
\leq \score_i
    \leq \ifloss(z_i, \modelncm) .
\end{equation*}
\end{restatable}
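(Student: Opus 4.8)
The plan is to split the perturbation that carries $\modelz$ to $\modelncm$ into two elementary ones — inserting $\hat{z}$ (so $\modelz \to \modelzhat$) and deleting $z_i$ (so $\modelzhat \to \modelncm$) — and to analyse each piece separately, writing
\begin{equation*}
\score_i = \loss(z_i,\modelz) + \bigl[\loss(z_i,\modelzhat) - \loss(z_i,\modelz)\bigr] + \bigl[\loss(z_i,\modelncm) - \loss(z_i,\modelzhat)\bigr].
\end{equation*}
Assumption~\ref{assumption:influence-zhat} is what I would use to dispose of the first bracket and of the choice of base point simultaneously: it bounds the discrepancy introduced by $\hat{z}$ — both the change it induces in $\loss(z_i,\cdot)$ and the change it induces in the Hessian/influence operator — so that, up to the neglected terms, the first bracket equals $\influence_\loss(z_i,\hat{z})$ with $H$ and all influences evaluated at $\modelz$. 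The claim then reduces to sandwiching the second bracket, the \emph{self-loss effect of a pure leave-one-out}, between $-\influence_\loss(z_i,z_i)$ and $-g(\reg)\influence_\loss(z_i,z_i)$; this is the step that extends \citet{koh2019accuracy} and that invokes the assumptions of \Cref{thm:koh-self-loss}.

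For the leave-one-out bracket I would first derive an exact (for a quadratic loss) or almost-exact (otherwise) leave-one-out identity: writing the first-order optimality condition of $\modelncm$ as a rank-one perturbation of that of $\modelzhat$ and inverting via Sherman--Morrison — with an $\bigoh{\reg^{-3}}$ remainder stemming from $\nabla^3_\model\loss$ in the non-quadratic case — one obtains $\modelncm - \modelzhat$ equal to the influence-predicted move $-\influence_{\modelz}(z_i)$ rescaled by a \emph{leverage factor} $(1-h_i)^{-1}$, where the ``leverage'' $h_i$ is controlled by the regularizer: $\reg$ lower-bounds the spectrum of $H$, hence $h_i \le \nicefrac{\sigma_{max}}{(\sigma_{max}+\reg)}$ and $(1-h_i)^{-1} \le 1 + \nicefrac{\sigma_{max}}{\reg}$. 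Second, I would Taylor-expand $\loss(z_i,\cdot)$ to second order along this move: the gradient term reproduces $-\influence_\loss(z_i,z_i)$ scaled by the same leverage factor, while the curvature term $\tfrac12\Delta^\top\nabla^2_\model\loss(z_i,\modelz)\Delta$ with $\Delta = \modelncm - \modelzhat$ is nonnegative and, relative to the gradient term, at most $\nicefrac{h_i}{2(1-h_i)} \le \nicefrac{\sigma_{max}}{2\reg}$, contributing a further factor $1 + \nicefrac{\sigma_{max}}{2\reg}$. Multiplying the two factors gives precisely $g(\reg) = (1+\nicefrac{\sigma_{max}}{\reg})(1+\nicefrac{\sigma_{max}}{2\reg}) = 1 + \nicefrac{3\sigma_{max}}{2\reg} + \nicefrac{\sigma_{max}^2}{2\reg^2}$, and — since each factor lies in $[1,g(\reg)]$ and the quantities they scale are sign-definite — the self-loss effect is squeezed between $-\influence_\loss(z_i,z_i)$ and $-g(\reg)\influence_\loss(z_i,z_i)$.

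Plugging these bounds back into the decomposition and using Assumption~\ref{assumption:influence-zhat} for the $\hat{z}$-bracket places $\score_i$ between $\loss(z_i,\modelz) + \influence_\loss(z_i,\hat{z}) - \influence_\loss(z_i,z_i) = \ifloss(z_i,\modelncm)$ and $\loss(z_i,\modelz) + \influence_\loss(z_i,\hat{z}) - g(\reg)\influence_\loss(z_i,z_i)$ — the two faces of the stated cone — once the $\bigoh{\reg^{-3}}$ cross terms generated by the product of the two correction bounds and by the non-quadratic remainder are discarded, as permitted. I expect the main obstacle to be exactly this non-quadratic leave-one-out identity: Sherman--Morrison is exact only for a quadratic loss, so for general convex $\loss$ one needs an implicit-function / contraction argument to solve $\nabla_\model\risk(\zset\cup\{\hat{z}\}\setminus\{z_i\},\cdot)=0$ around $\modelzhat$ and to bound the resulting $\nabla^3_\model\loss$ remainder uniformly in $z_i$; it is precisely the wish to keep this controllable — and to let Assumption~\ref{assumption:influence-zhat} absorb the $\hat{z}$ discrepancy even when $\hat{z}$ is atypical, so that a single Hessian $H$ at $\modelz$ governs everything — that forces the ``fairly simplistic assumptions'' flagged in the statement. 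A lighter but necessary check is the sign bookkeeping: the orientation of the cone follows from $\influence_\loss(z_i,z_i)$ being sign-definite (since $H \succ 0$) together with both correction factors being at least $1$, which is what pins down $\ifloss(z_i,\modelncm)$ and its $g(\reg)$-inflated counterpart as the two faces.
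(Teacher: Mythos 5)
Your proof is correct and follows essentially the same route as the paper's: decompose the perturbation of $\modelz$ into adding $\hat{z}$ and deleting $z_i$, dispatch the insertion bracket with Assumption~\ref{assumption:influence-zhat}, and sandwich the leave-one-out self-loss bracket between $-\influence_\loss(z_i,z_i)$ and $-g(\reg)\influence_\loss(z_i,z_i)$. The only difference is that you re-derive the content of \Cref{thm:koh-self-loss} from scratch (a Sherman--Morrison leverage factor $1+\nicefrac{\sigma_{max}}{\reg}$ times a second-order curvature factor $1+\nicefrac{\sigma_{max}}{2\reg}$, whose product is exactly $g(\reg)$), whereas the paper simply invokes that proposition as a black box --- it is granted by the theorem's hypotheses --- and applies it verbatim to the augmented training set $\zset\cup\{\hat{z}\}$, silently absorbing the same base-point/Hessian discrepancy into Assumption~\ref{assumption:influence-zhat} that you make explicit.
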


\section{Experiments on synthetic data}
\label{sec:synthetic}
We study the properties of \ifcp{} outlined in~\Cref{sec:theory} on synthetic
data (\Cref{app:exp});
the underlying model is logistic regression with cross-entropy loss.
Results are averaged across 100 test points.

\paragraph{Direct and indirect approximation.}
We empirically evaluate \Cref{thm:approx-loss-vs-model}, which
claims that the direct method (\Cref{eq:score-approx-loss})
is never worse than indirect (\Cref{eq:score-approx-model}).
\Cref{fig:methods_comparison_norm} shows
the absolute
distance between full CP and
\ifcp{}'s nonconformity scores as a function of the training
set size.
Results confirm that
direct is always better than indirect,
although the two get close for large $N$.
Importantly, 
the nonconformity scores produced by \ifcp{}
get increasingly better at approximating those
of full CP as the training set grows
(cf. \Cref{thm:consistency}). We shall now focus on the direct approach.

\begin{figure}
\centering
    \includegraphics[width=0.23\textwidth]{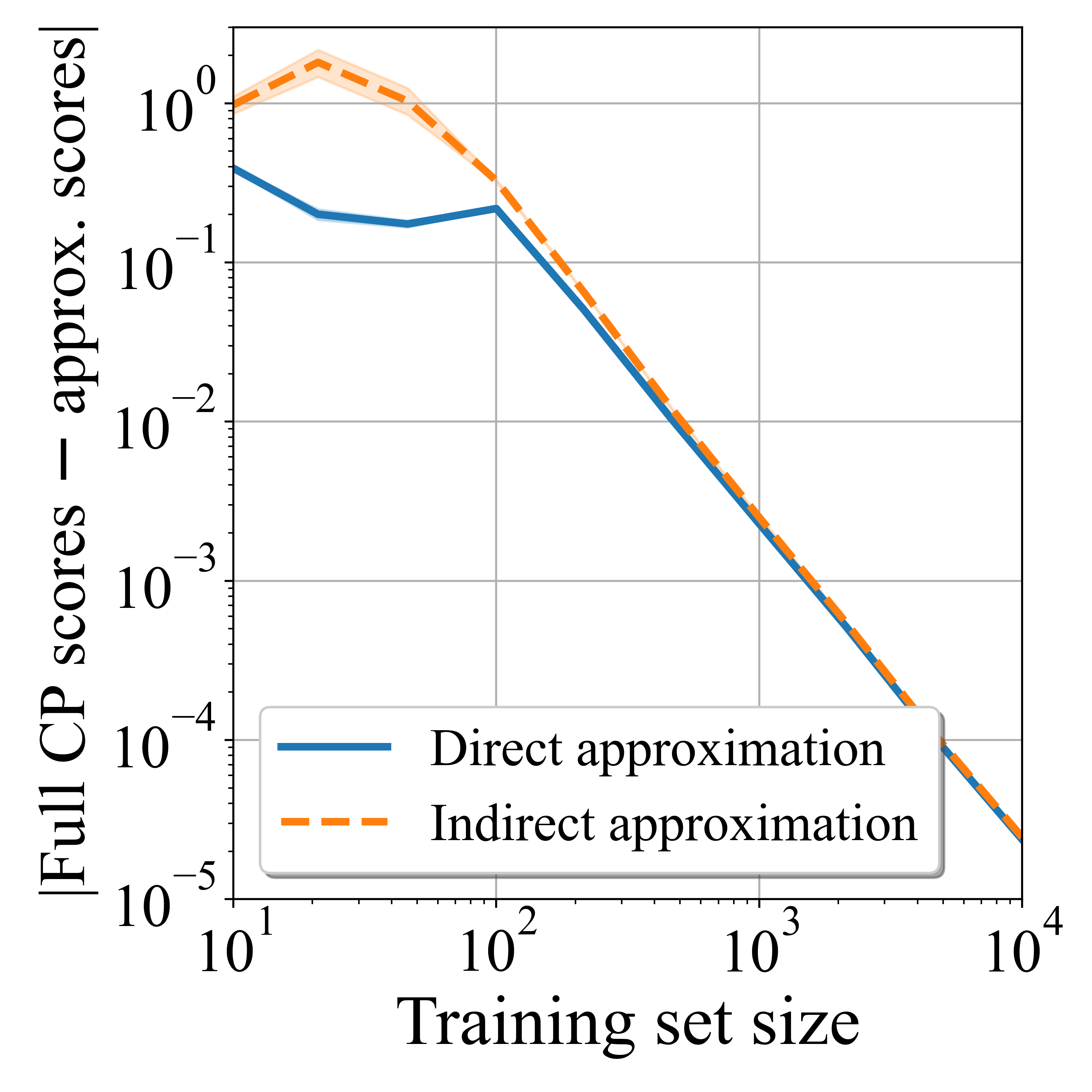}
  \caption{Comparison between direct and indirect approximations. We show the difference between the nonconformity scores of full CP and their approximation as a function of the training set size, averaged across 100 test points. These results support that the direct method is never worse than the indirect (\Cref{thm:approx-loss-vs-model}), and that both
  approximate full CP increasingly better (\Cref{thm:consistency}).
  The standard deviation for the direct approach (blue) is negligible.}
  \label{fig:methods_comparison_norm}
\end{figure}

\paragraph{Approximation goodness.}
\label{goodness}
We evaluate how well \ifcp{} approximates full CP,
under various parameter choices, as the
training set grows.
\Cref{fig:nonconf_features} shows the difference
between the nonconformity scores of full CP and \ifcp{}
as the number of features ranges in 5-100.
The number of features does impact the
IF approximation, although the error becomes
negligible as the training set increases.
\Cref{thm:approx-regularizer} shows that, unsurprisingly,
a larger $\reg$ (i.e.,
stronger regularization) implies better approximation.
We confirm this in \Cref{fig:nonconf_reg}.
Our analysis focuses on the approximation error between nonconformity
scores;
yet, we remark that a small error between scores implies
a more fundamental equivalence between full CP and \ifcp{}:
their p-values should also have a small distance.

\begin{figure*}[t]
\begin{subfigure}{.24\textwidth}
  \centering
  \includegraphics[width=0.99\linewidth]{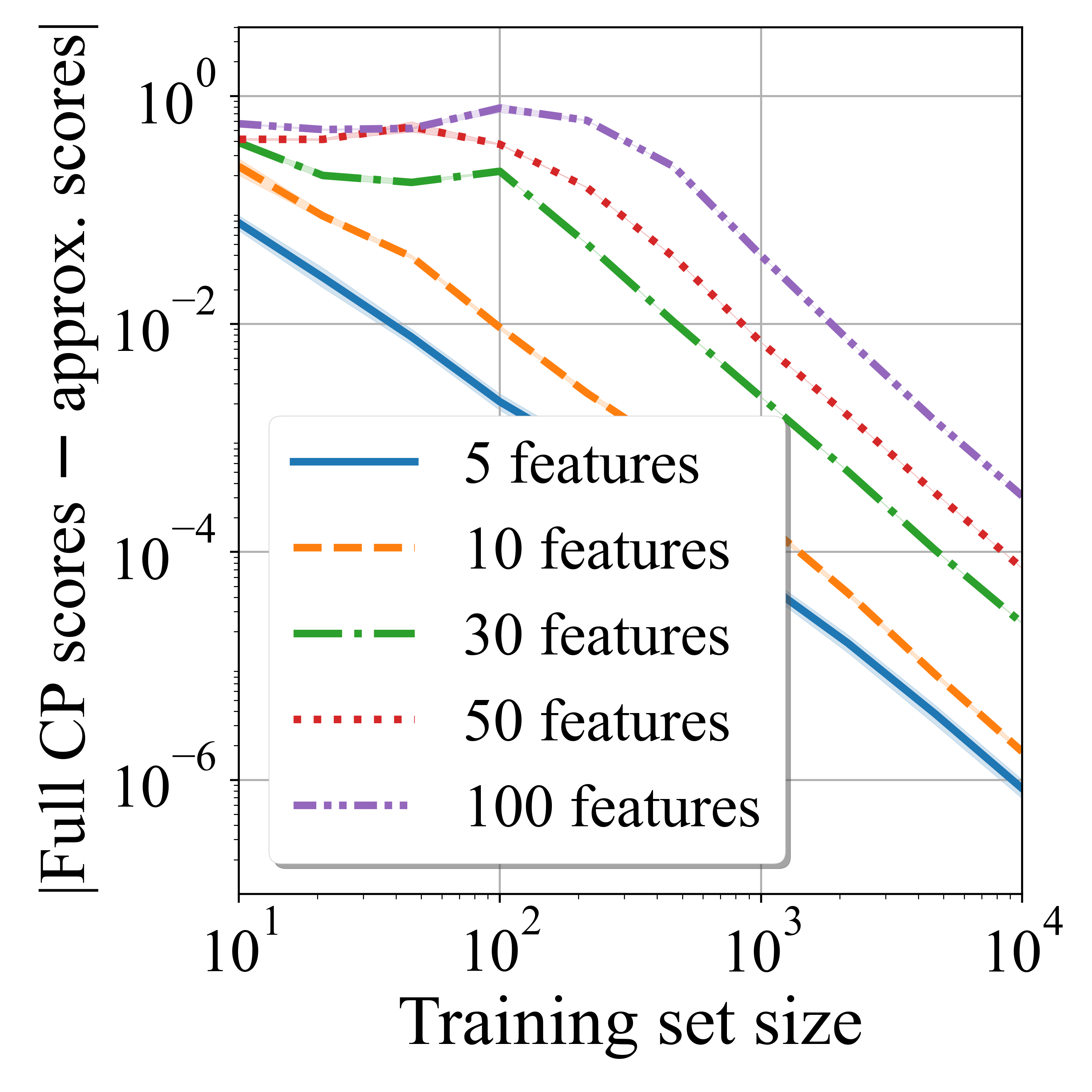}
  \caption{ Vary $\#$ of features}
  \label{fig:nonconf_features}
\end{subfigure}
\begin{subfigure}{.24\textwidth}
  \centering
  \includegraphics[width=0.99\linewidth]{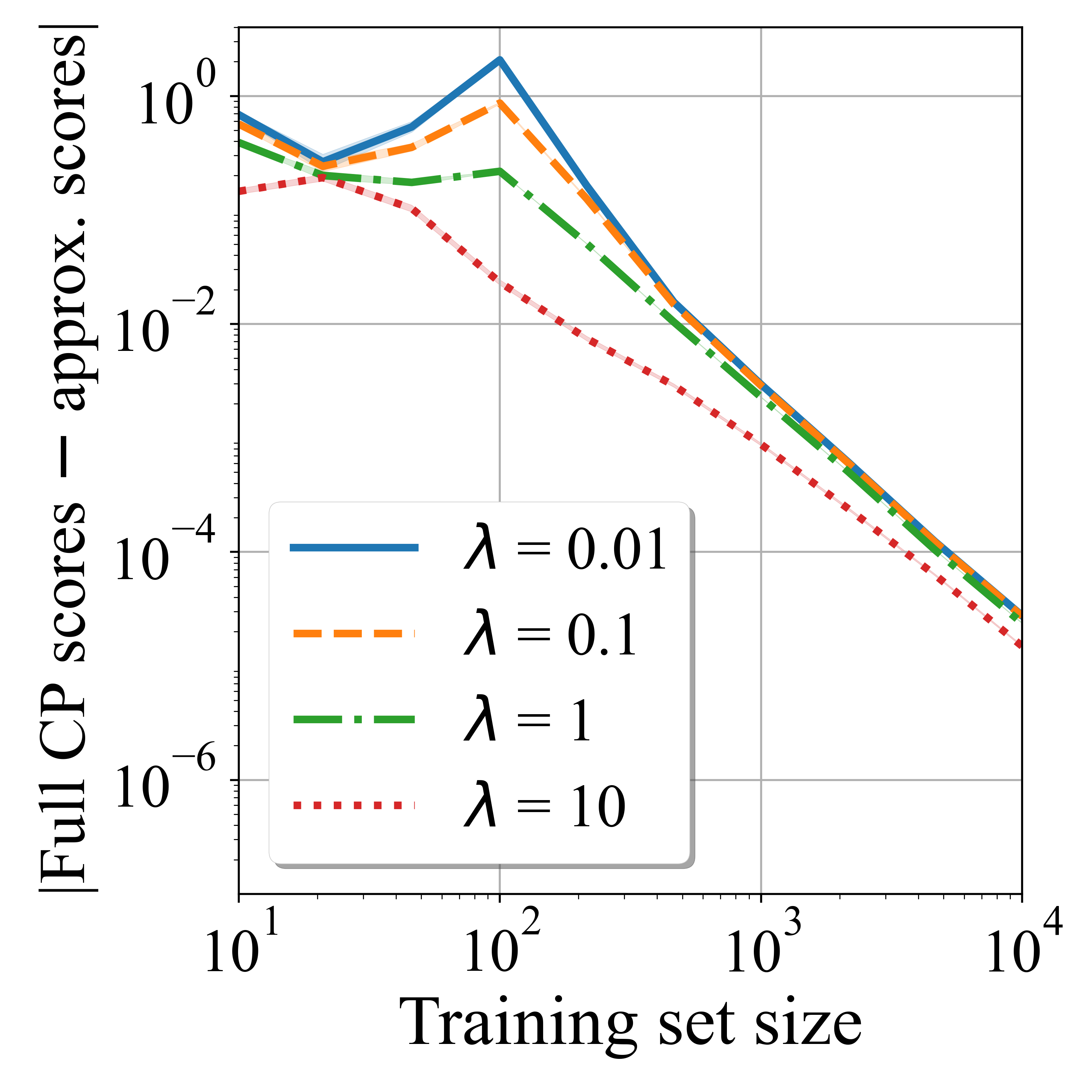}
  \caption{ Vary regularization}
 \label{fig:nonconf_reg}
\end{subfigure}
\begin{subfigure}{.24\textwidth}
  \centering
  \includegraphics[width=0.99\linewidth]{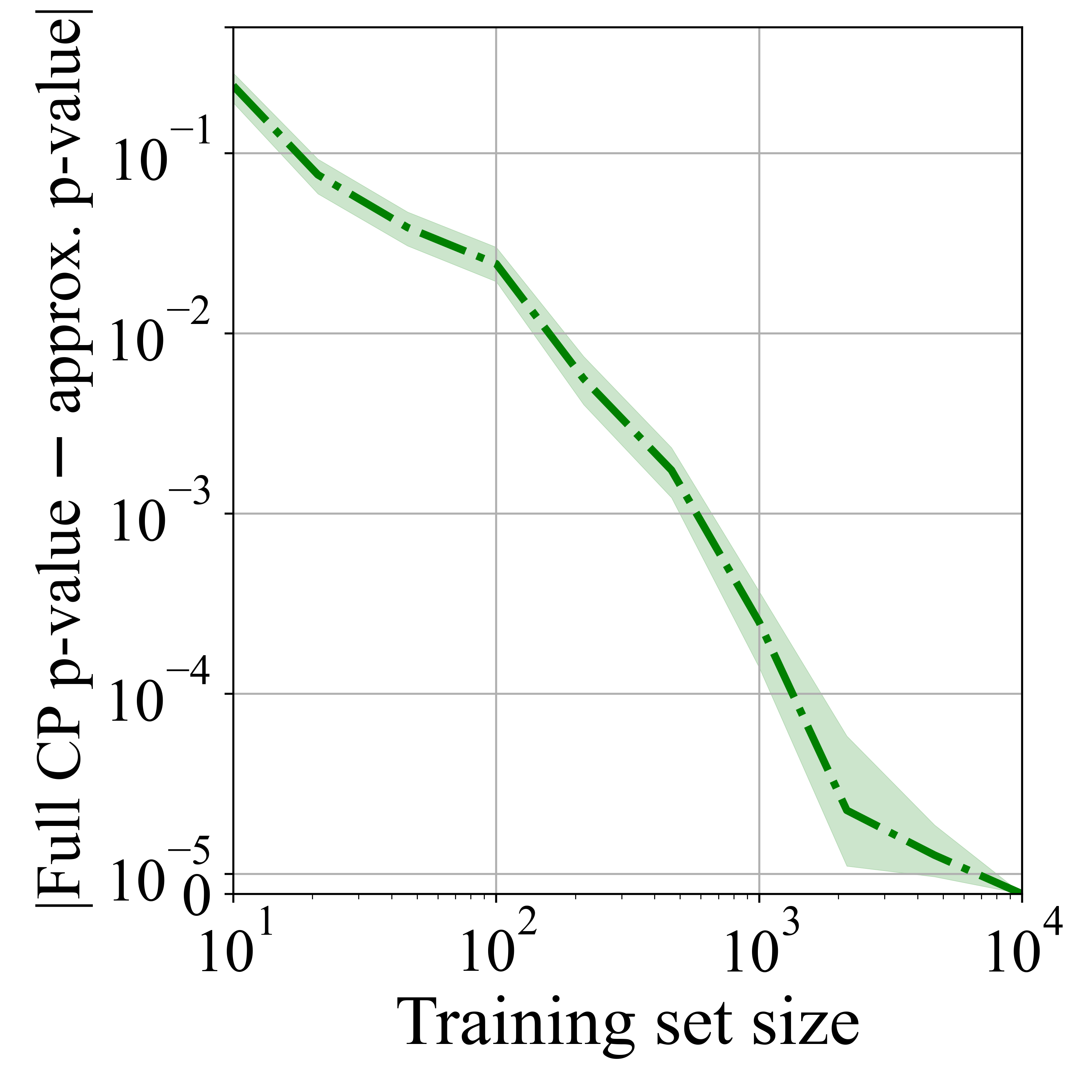}
  \caption{ P-values difference}
 \label{fig:p-values_30_features}
\end{subfigure}
\begin{subfigure}{.24\textwidth}
  \centering
  \includegraphics[width=0.99\linewidth]{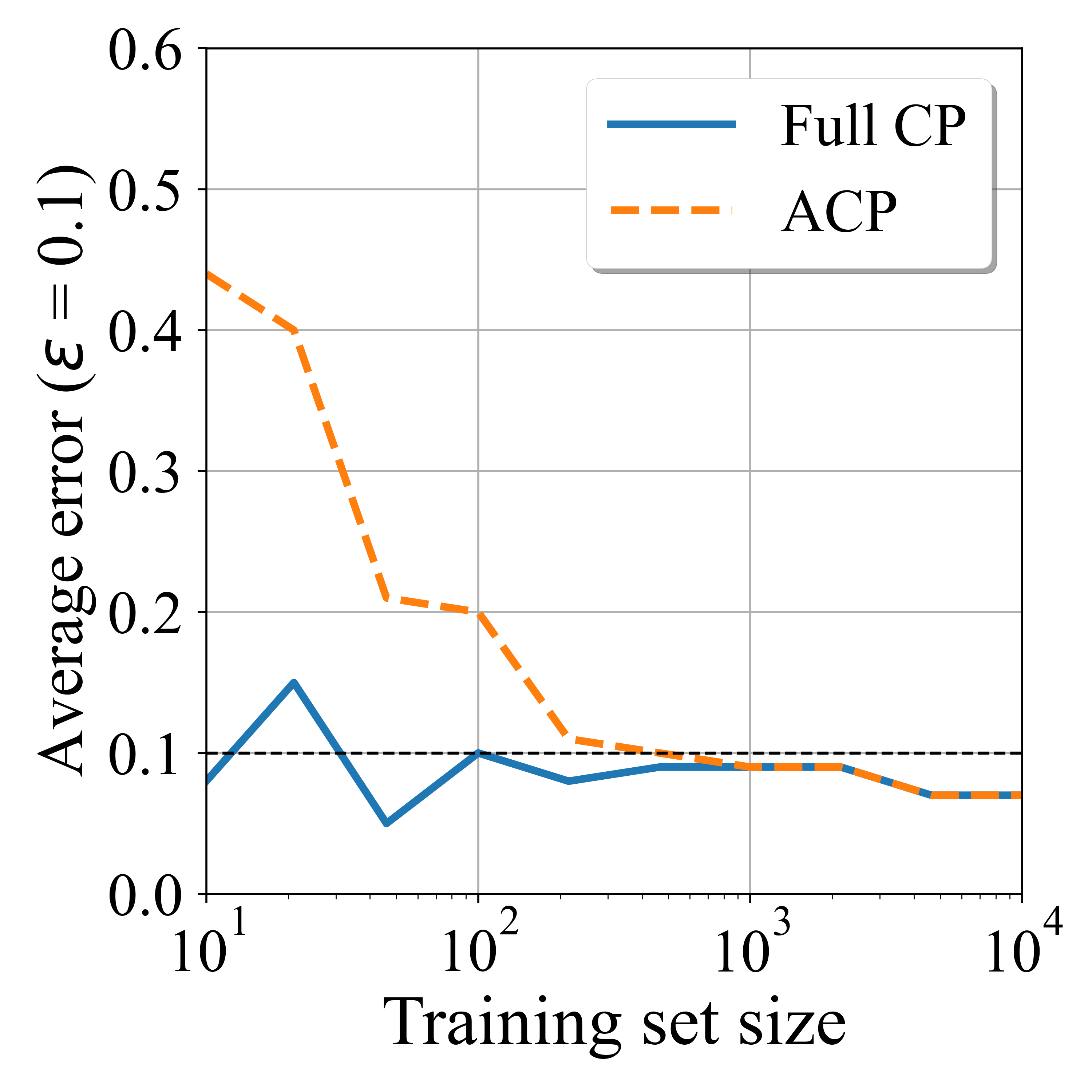}
  \caption{ Validity}
  \label{fig:validity}
\end{subfigure}
\caption{ Difference in nonconformity scores for different numbers of features (\ref{fig:nonconf_features}) and for various regularization strengths (\ref{fig:nonconf_reg}). The differences in scores are averaged across 100 test points.
We also report the difference between p-values (\ref{fig:p-values_30_features}) and error rate
(\ref{fig:validity}).
}
\label{fig:syn}
\end{figure*}

\Cref{fig:p-values_30_features} compares
 full CP and \ifcp{}'s p-values.
The difference is smaller than
$10^{-3}$ with a training set of $600$,
and it becomes negligible with $N = 10k$ training
examples.
Observe that in CP the p-value is thresholded
by the significance value $\varepsilon$
to obtain a prediction 
(Algorithm 1
).
As practitioners are generally interested
in values $\varepsilon$ with no more than 2 decimals of precision (e.g., 
$\varepsilon=0.15$),
we argue that an approximation error smaller than
$10^{-3}$ between p-values is more than sufficient
for any practical application.
\Cref{fig:validity} compares the error rate
(for $\varepsilon=0.1$) between full CP and \ifcp{}.
We observe that, after 500 training points,
the two methods have the same error.

\section{Experiments with real data}
\label{sec:experiments}

We compare mainstream CP alternatives with \ifcp{}
on the basis of their predictive power (efficiency).
Because of its computational complexity, it
is infeasible
to include full CP in these experiments.
Nevertheless, given the size of the training
data, the consistency of \ifcp{}
(\Cref{thm:consistency}), and the results in \Cref{goodness}, we expect \ifcp{} to perform similarly to full CP.

\subsection{Existing alternatives to Full CP}

There are several alternative approaches to CP for classification. In this work, we compare \ifcp{} with:

\begin{itemize}
\item Split (or ``inductive'') Conformal Prediction (SCP) \citep{papadopoulos2002inductive} works by dividing the training set into \textit{proper training set} and \textit{calibration set}. The model is fit on the proper training set, and the calibration set is used to compute the nonconformity scores.
\item Regularized Adaptive Prediction Sets (RAPS) \citep{Angelopoulos2021UncertaintySF}, is a regularized version of Adaptive Prediction Sets (APS) \citep{Romano2020ClassificationWV}.
APS constructs \textit{Generalized inverse quantile conformity scores} from a calibration set adaptively w.r.t. the data distribution. RAPS uses regularization to minimize the prediction set size while satisfying validity.
\item Cross-validation+ (CV+) \citep{Romano2020ClassificationWV} exploits a cross-validation approach while constructing the conformity scores similarly to APS. CV+ does not lose predictive power due to a data-splitting procedure, but it is computationally more expensive.
\end{itemize}

\paragraph{Datasets.}
We select datasets to illustrate the performance of \ifcp{} in various scenarios: a simple classification problem with images (MNIST~\cite{lecun1998mnist}), a more complex setting (CIFAR-10~\citep{Cifar}), and a binary classification with tabular data (US Census~\cite{ding2021retiring});
details in \Cref{appendix:datasets}.

\begin{figure*}
\centering
\begin{minipage}[t!]{.38\textwidth}
    \begin{subfigure}[t]{\textwidth}
    \begin{center}
    \begin{subfigure}{.29\textwidth}
        \includegraphics[width=\linewidth]{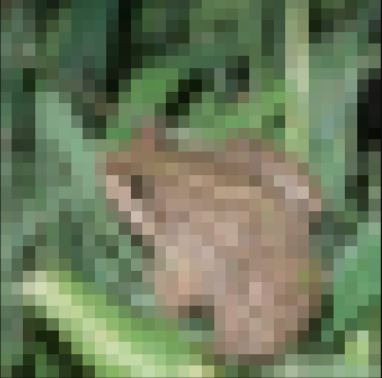}
    \end{subfigure}
    \end{center}
    \begin{subfigure}[!ht]{.26\textwidth}
    \begin{center}
    \scriptsize 
        \begin{tabular}{ll}
            \toprule
             Method & Prediction set \\ \midrule
             ACP (D) & bird, cat, deer, dog, \underline{\textbf{frog}}, horse \\ 
             ACP (O) & plane, bird, cat, deer, dog, \underline{\textbf{frog}}, horse \\
             SCP &  plane, bird, cat, deer, dog, \underline{\textbf{frog}}, horse \\ 
             RAPS &  plane, bird, cat, deer, dog, \underline{\textbf{frog}}, horse, truck \\ 
             CV+ &  plane, auto, bird, cat, deer, dog, \underline{\textbf{frog}}, horse \\ 
         \end{tabular}
    \end{center}
    \end{subfigure}
    \caption{ Test image and prediction set ($\varepsilon=0.05$)}
    \label{fig:motivating-example-table}
    \end{subfigure}
\end{minipage}%
\begin{minipage}[t!]{.31\textwidth}
    \begin{subfigure}[t]{\textwidth}
      \includegraphics[width=\linewidth]{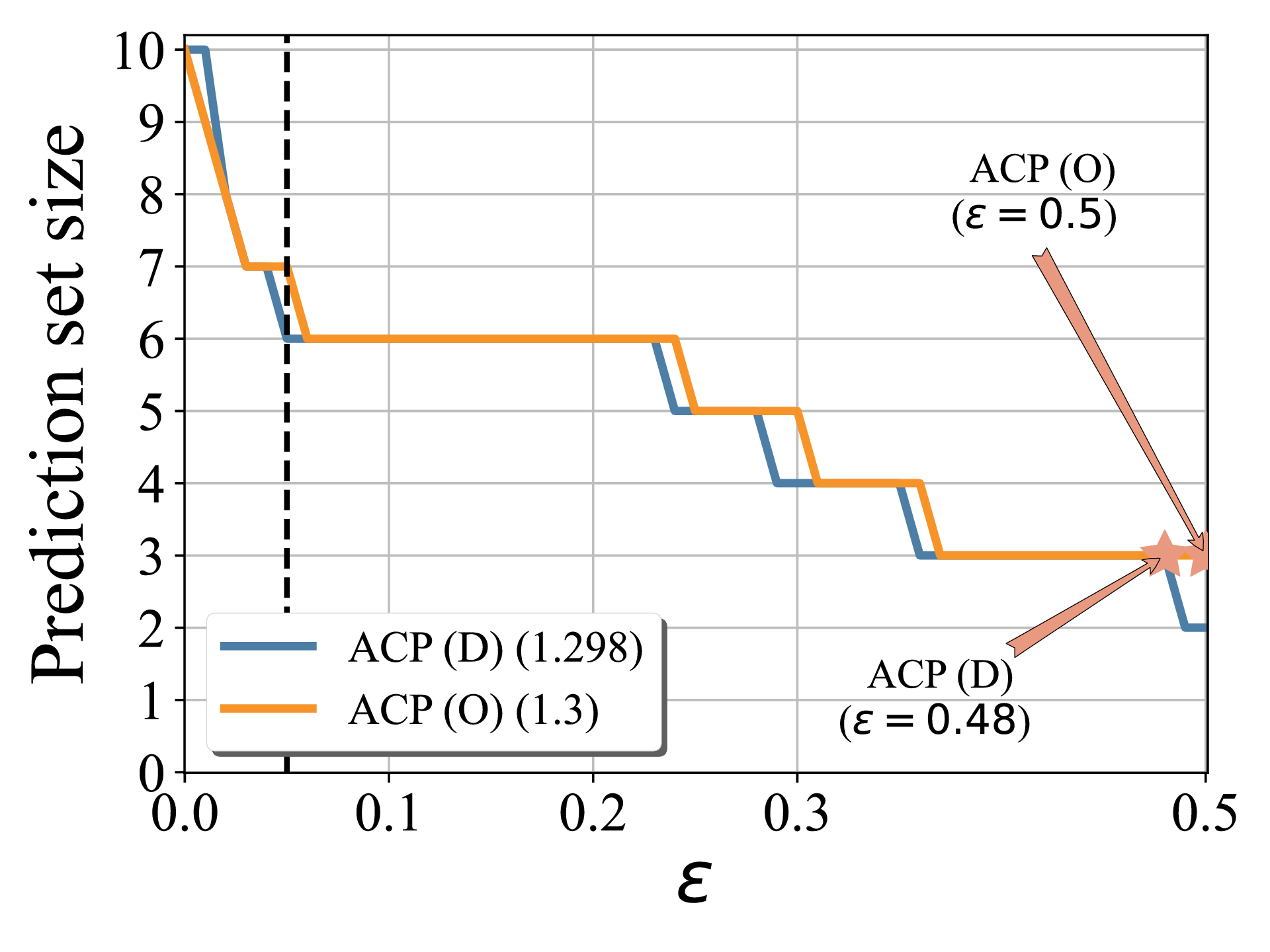}
    \caption{ Prediction set size (\ifcp{})}
    \label{fig:motivating-example-acp}
    \end{subfigure}
\end{minipage}%
\begin{minipage}[t!]{.31\textwidth}
    \begin{subfigure}[t]{\textwidth}
      \includegraphics[width=\linewidth]{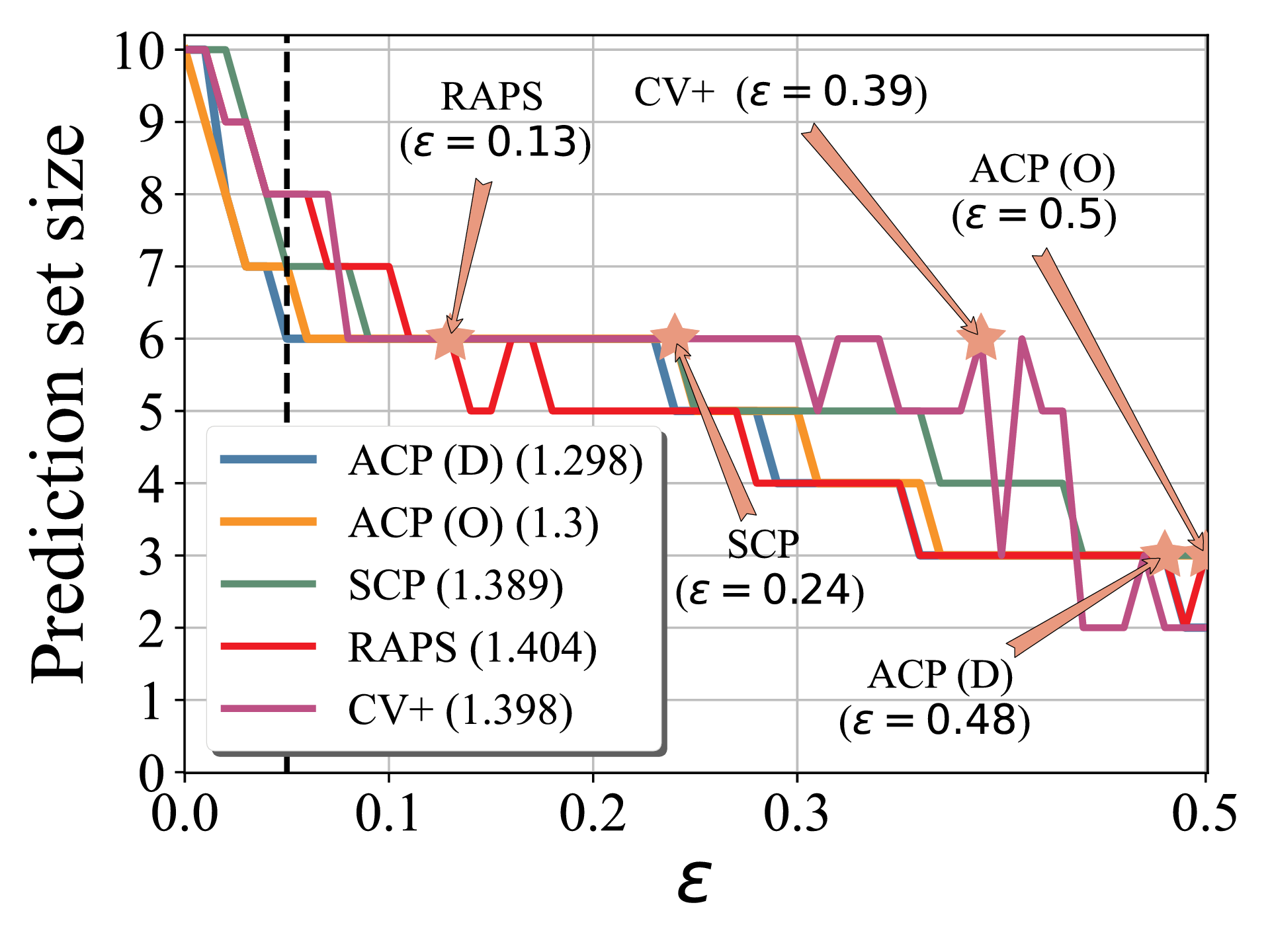}
    \caption{ Prediction set size (all)}
    \label{fig:motivating-example-all}
    \end{subfigure}
\end{minipage}%
\caption{Prediction set for a fixed
    $\varepsilon$ (\ref{fig:motivating-example-table}), and
    prediction set size w.r.t.
    $\varepsilon$ for \ifcp{} (\ref{fig:motivating-example-acp})
    and comparing methods (\ref{fig:motivating-example-all}).
    For each method,
    $\textcolor{star}{\bigstar}$ is the highest
    $\varepsilon$ for which the prediction set includes
    the true label; higher is better. We also show the AUC in the interval $\varepsilon \in [0, 0.2]$; lower is better.
    }
\label{fig:motivating-example}
\end{figure*}

\subsection{A warm-up example}

We run an illustrative experiment
for a CIFAR-10 test point picked uniformly at random.
We consider a neural network with
3 layers of 100, 50, and 20 neurons;
we refer to this network as \mlpC. \Cref{appendix:implementations} gives implementation details.

\Cref{fig:motivating-example-table} shows
the prediction set for a fixed $\varepsilon=0.05$;
we observe that, while all the methods output
the true label, the prediction set of
\ifcp{} (deleted) is the tightest
(i.e., more efficient).
We also report how the prediction set size
changes w.r.t. $\varepsilon$,
for \ifcp{} (\Cref{fig:motivating-example-acp})
and for all methods (\Cref{fig:motivating-example-all}).
As a way of comparing the curves, we
include the AUC for the interval
$\varepsilon\in[0,0.2]$.
ACP (deleted and ordinary) have the smallest AUC.
Finally, for each method we report the highest
$\varepsilon$ for which the prediction set
contains the true label.
A higher value indicates that, for this
test example, the method
would still be accurate with an $\varepsilon$ larger than $0.05$,
which would correspond to tighter prediction
sets.
Once again, \ifcp{} (deleted and ordinary)
have the largest values. 
We show similar instances in \Cref{appendix:additional:examples}.
In the next part, we observe this behavior
generalizes to larger test sets.

We observe an unstable behavior in the predictions of RAPS and CV+: their prediction
set size considerably oscillates as $\varepsilon$
increases.
The reason is that their prediction sets are not guaranteed
to be nested; that is, $\varepsilon > \varepsilon'$
does not imply that the prediction sets $\Gamma^\varepsilon \subseteq \Gamma^{\varepsilon'}$.
Specifically, because RAPS and CV+ use randomness to
decide whether to include a label in
the set, the true label may appear
and then disappear for a smaller significance level.
This may not be desirable in some practical
applications.\footnote{RAPS allows a non-randomized version, although with a more conservative behavior and considerably larger prediction sets.}
The prediction set for \ifcp{} and SCP monotonically decreases with $\varepsilon$, by construction.

\subsection{Experimental setup}
We evaluate the methods for five
underlying models: three multilayer perceptrons with architectures (neurons per layer): 20-10 (\mlpA), 100 (\mlpB) and 100-50-20 (\mlpC);
logistic regression (LR); and a convolutional neural network (CNN). In experiments with MNIST and CIFAR-10, the dimensionality is first reduced with an autoencoder (AE) in all settings except the CNN. We defer implementation details
to \Cref{appendix:implementations}-\ref{designs}.

Considering these five settings enables
comparing
the methods both for underparametrized
regimes (e.g., LR) and for better performing models (e.g., CNN).
Note that CP's guarantees hold regardless of
whether the underlying model is misspecified.
Further, observe that most of these models
are non-convex, where the ERM optimization problem does not have a unique solution;
this contradicts the IF assumption
(\Cref{sec:back}).
Nonetheless, our empirical results show that \ifcp{} works well -- it performs better than
the other proposals;
in \Cref{sec:discussion} we discuss
relaxations of this assumption.

\subsection{Results}
For each experiment and method, we report averaged metrics over 100 test points;
we also run statistical tests to check if differences are (statistically) significant.

\paragraph{Prediction set size.}
We measure efficiency as the
average prediction set size.
We report this as a function of
$\varepsilon$, discretized with a step $\Delta \varepsilon = 0.01$.
\Cref{fig:curve_MNIST,fig:curve_CIFAR-10}
show the average prediction set size
in MNIST and CIFAR-10
for \mlpC.
\ifcp{} (deleted and ordinary)
consistently outperform all other methods;
deleted is better than ordinary.
\ifcp{} is significantly more efficient than RAPS and CV+.
SCP and \ifcp{} (O) perform similarly on
MNIST, but their difference is remarked on
CIFAR-10; this suggests that SCP is a cheap
effective alternative to \ifcp{}
on relatively easier tasks.
\Cref{appendix:additional_curves} reports
results for the rest of the
models and for the US Census, showing similar behavior.

\begin{figure}[htb!]
        \centering
        \begin{subfigure}[hb]{0.21\textwidth}
             \includegraphics[width=\linewidth]{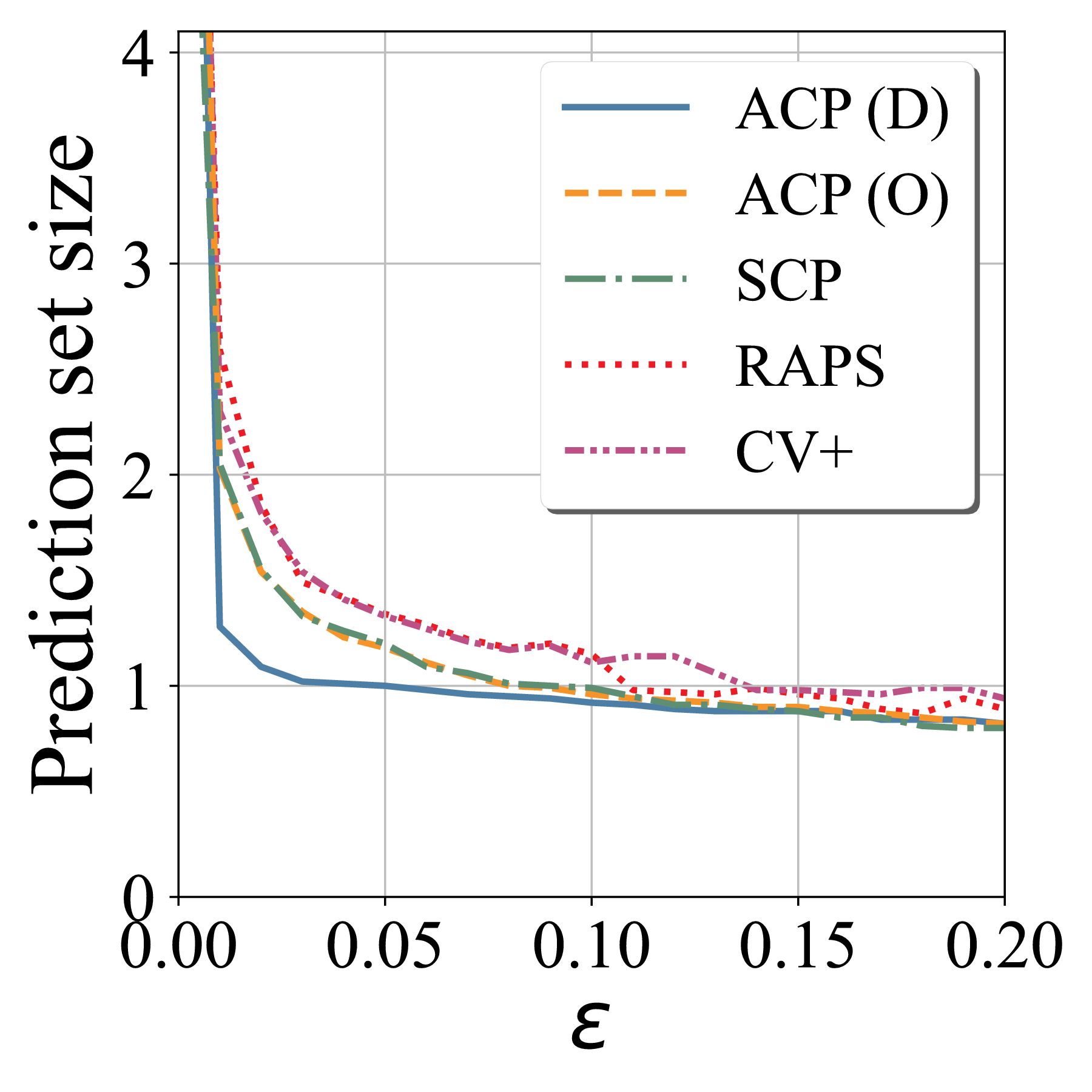}
                \caption{MNIST} \label{fig:curve_MNIST}
        \end{subfigure}
        \begin{subfigure}[hb]{0.21\textwidth}
                \includegraphics[width=\linewidth]{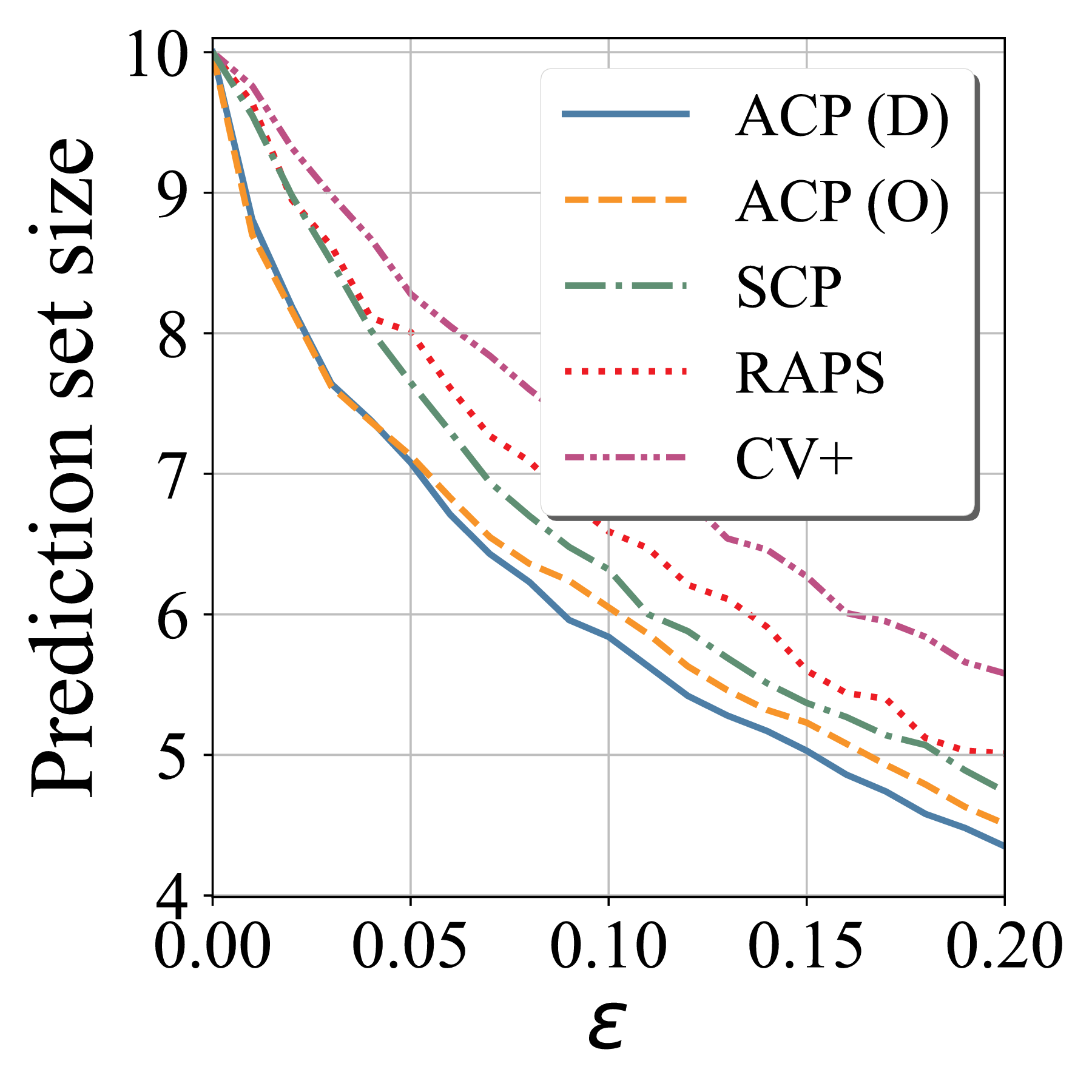}
                \caption{CIFAR-10} \label{fig:curve_CIFAR-10}
        \end{subfigure} 

        \caption{Average prediction set size as a function of the significance level $\varepsilon$ in MNIST (\ref{fig:curve_MNIST}) and CIFAR-10 (\ref{fig:curve_CIFAR-10}) for \mlpC.
        Smaller prediction set size indicates better efficiency.
        }
        \label{fig:global_sizes_C} 
\end{figure}

\begin{table}[!htb]
    \centering
    \begin{footnotesize}
    \begin{tabular}{l|lllll}
    \multicolumn{1}{c}{}  & \multicolumn{5}{c}{US Census}\\  \midrule 
     Model & ACP (D) & ACP (O) & SCP & RAPS & CV+  \\ \midrule
     \mlpA & $.301$ & $\mathbf{.281}$ & $.302^\dagger$ & $.318^{* \dagger}$ & $.374^{* \dagger}$ \\
     \mlpB & $.306$ & $\mathbf{.304}$ & $.351^{* \dagger}$ & $.351^{* \dagger}$ & $.377^{* \dagger}$ \\
     \mlpC & $\mathbf{.273}$ & $.275$ & $.280$ & $.299^{* \dagger}$ & $.324^{* \dagger}$ \\
     LR & $.276$ & $.276$ & $.284^{* \dagger}$ & $\mathbf{.183}$ & $.344^{* \dagger}$ \\
    \bottomrule
    \multicolumn{6}{c}{}\\[-1.5ex]
     \multicolumn{1}{c}{} & \multicolumn{5}{c}{MNIST}\\  \midrule 
    Model & ACP (D) & ACP (O) & SCP & RAPS & CV+  \\ \midrule
   \mlpA & $\mathbf{.252}$ & $.253$ & $.261$ & $.299^{* \dagger}$ & $.322^{* \dagger}$ \\
    \mlpB & $\mathbf{.220}$ & $.230$ & $.233$ & $.266^{* \dagger}$ & $.277^{* \dagger}$ \\
    \mlpC & $\mathbf{.198}$ & $.231$ & $.230^*$ & $.258^{* \dagger}$ & $.267^{* \dagger}$ \\
    LR   & $.385$ & $.386$ & $\mathbf{.379}$ & $.438^{* \dagger}$ & $.467^{* \dagger}$ \\
    CNN  & $\mathbf{.175}$ & $.182$ & $.237^{* \dagger}$ & $.197^{* \dagger}$ & $.199^{* \dagger}$ \\
    \bottomrule
    \multicolumn{6}{c}{}\\[-1.5ex]
    \multicolumn{1}{c}{} & \multicolumn{5}{c}{CIFAR-10}\\  \midrule 
     Model & ACP (D) & ACP (O) & SCP & RAPS & CV+  \\ \midrule
     \mlpA & $1.261$ & $\mathbf{1.259}$ & $1.281^{* \dagger}$ & $1.311^{* \dagger}$ & $1.373^{* \dagger}$ \\
     \mlpB & $1.385$ & $\mathbf{1.364}$ & $1.397^\dagger$ & $1.416^{* \dagger}$ & $1.514^{* \dagger}$ \\
     \mlpC & $\mathbf{1.226}$ & $1.250$ & $1.327^{* \dagger}$ & $1.377^{* \dagger}$ & $1.475^{* \dagger}$ \\
     LR   & $\mathbf{1.409}$ & $1.411$ & $1.419$ & $1.436^{* \dagger}$ & $1.476^{* \dagger}$ \\
     CNN  & $\mathbf{.976}$ & $1.108$ & $1.019$ & $1.110^{*}$ & $1.467^{* \dagger}$ \\
    \bottomrule
    \end{tabular}
    \end{footnotesize}
    \caption{
    Efficiency AUC ($\varepsilon\in[0, 0.2]$).
    The table indicates that differences in the AUC are statistically significant as compared to ACP (D) ($*$) and ACP (O) ($\dagger$).}
    \label{tab:results_CIFAR_MNIST_CENSUS}
\end{table}

\paragraph{Efficiency AUC.}
We use an $\varepsilon$-independent
metric to aid this comparison:
the area under the curve (AUC)
of the prediction set size in the
interval $\varepsilon \in [0, 0.2]$;
a smaller AUC means better efficiency.
\ifcp{} (deleted or ordinary) prevails
on all methods, datasets, and model combinations (\Cref{tab:results_CIFAR_MNIST_CENSUS}).
An exception is
LR on US Census, where RAPS has a better efficiency than
\ifcp{} and SCP;
simpler tasks and
models may be well served by the computationally
efficient RAPS.
Welch one-sided tests (reject with p-value $<0.1$)
confirm that both deleted and ordinary \ifcp{}
are better than RAPS and CV+;
they further show that either
deleted or ordinary \ifcp{} are
better than SCP on most tasks.
We refer to \Cref{sec:discussion}
for directions to improve
\ifcp{}'s performance.

\paragraph{Validity.}
As a way of interpreting why \ifcp{}
performed better than the other methods,
we measure their empirical error rate with a fixed $\varepsilon=0.2$.
Indeed, whilst
all methods guarantee a probability of
error of at most
$\varepsilon$, a more conservative
(i.e., smaller) empirical error may correspond to
larger prediction sets.
\Cref{fig:all_errors_CIFAR_A} shows the difference between the expected and the observed error rate on CIFAR-10 (i.e., $\varepsilon-\hat{\varepsilon}$).
We observe that \ifcp{} achieves an error rate
close to the significance level in most
cases; this indicates this method fully exploits
its error margin to optimize efficiency.\footnote{Note a randomized version of full CP, called smooth CP, ensures the error rate is exactly $\varepsilon$ (instead of ``at most $\varepsilon$'').
Herein, we use the standard, conservatively valid definition
per 
Algorithm 1.
}
SCP shows a similar behavior.
On the other hand, RAPS and CV+ are more conservative and tend to commit fewer errors. This means that RAPS and CV+ generate unnecessarily large prediction sets that, despite including the true label on average, result in lower efficiency.  This behavior is consistent with analogous experiments on the
US Census and MNIST datasets.

\begin{table}[htb!]
\centering
\footnotesize
\begin{tabular}{l|rrrrr}
\toprule
 Model  & ACP (D) & ACP (O) & SCP & RAPS & CV+  \\ \midrule
\mlpA  & $0.01$ & $0.01$ & $0$ & $0.05$ & $0.09$ \\ 
\mlpB   & $0.02$ & $0.03$ & $0.04$ & $0.05$ & $0.13$ \\ 
\mlpC & $-0.01$ & $0$ & $0.02$ & $-0.03$ & $0.09$ \\ 
LR    & $0$ & $0.03$ & $-0.01$ & $0.02$ & $0.04$ \\
CNN & $-0.01$ & $0.01$ & $-0.01$ & $0.04$ & $0.10$\\
\toprule
\end{tabular}
        \caption{ Difference between the expected
            error ($\varepsilon=0.2$) and the empirical
            error rate on CIFAR-10. A positive value indicates a conservative prediction set: the method could output tighter prediction sets
            without losing validity;
            a negative value is a validity violation, which may be due to statistical fluctuation (100 test points).
}
        \label{fig:all_errors_CIFAR_A}
\end{table}

\begin{table}
    \centering
    \setlength{\tabcolsep}{3pt}
    \begin{footnotesize}
    \begin{tabular}{l|lll|lll}
      \multicolumn{1}{c}{} & \multicolumn{3}{c}{MNIST} & \multicolumn{3}{c}{CIFAR-10}\\  \midrule 
     Model & ACP (D) & ACP (O) & SCP &
        ACP (D) & ACP (O) & SCP\\ \midrule
     \mlpA & $\mathbf{.057}$ & $.059$ & $.068^*$ & $\mathbf{1.796}$ & $1.832$ & $1.849$ \\
     \mlpB & $\mathbf{.032}$ & $.041$ & $.043^{*}$ & $\mathbf{1.967}$ & $2.050$ & $2.132^{*}$ \\
     \mlpC & $\mathbf{.012}$ & $.040$ & $.044^{*}$ & $\mathbf{1.728}$ & $1.843$ & $1.878^*$ \\
     LR & $.187$ & $.189$ & $\mathbf{.183}$ & $\mathbf{2.269}$ & $2.280$ & $2.332$ \\
     CNN & $\mathbf{.002}$ & $.002$ & $.002$ & $\mathbf{1.186}$ & $1.645$ & $1.235$ \\
    \toprule
    \end{tabular}
    \end{footnotesize}
    \caption{
    Fuzziness of \ifcp{} and SCP on MNIST and CIFAR-10.
    A smaller fuzziness corresponds to more
    efficient prediction sets. The table indicates that differences in fuzziness are statistically significant as compared to ACP (D) ($*$) and ACP (O) ($\dagger$).}
    \label{tab:fuzz_CIFAR_MNIST}
\end{table}

\paragraph{P-values fuzziness.}
Average prediction set size is a coarse
criterion of efficiency:
small variations in the algorithm may
lead to substantially different prediction sets.
An alternative efficiency criterion is fuzziness~\citep{vovk2016criteria}, which measures how
p-values distribute in the label space.
Of the methods we considered, fuzziness
can only be measured for \ifcp{} and SCP
(RAPS and CV+ do not produce
p-values).
For every test point $x$, and computed p-values
$\{p_{(x, \hat{y})}\}_{\hat{y}\in \labelspace}$,
fuzziness is the sum of the p-values minus the
largest p-value:
$\sum_{\hat{y}\in \labelspace} p_{(x, \hat{y})}
    - \max_{\hat{y}\in \labelspace} p_{(x, \hat{y})}$;
a small fuzziness is desirable.
In \Cref{tab:fuzz_CIFAR_MNIST}, we observe
that \ifcp{} (D) has a consistently better
fuzziness than \ifcp{} (O) and SCP.
We observe statistical significance of the
results for 3 models in MNIST, and
2 models in CIFAR-10.
Results for the US Census (\Cref{appendix:fuzziness}) are similar.
As previously observed in \Cref{tab:results_CIFAR_MNIST_CENSUS},
SCP performs better in logistic regression (MNIST);
this makes it a good alternative for
simpler underlying models.

\section{Related work}

Full CP~\citep{vovk2005algorithmic} is notoriously expensive. Many alternatives have been
proposed.
Arguably the most prominent are SCP~\cite{vovk2005algorithmic},  CV+~\cite{Romano2020ClassificationWV}, RAPS~\citep{Angelopoulos2021UncertaintySF},
Cross-CP~\cite{vovk2012cross}, aggregated CP~\cite{carlsson2014aggregated}, APS~\citep{Romano2020ClassificationWV}, and
the jackknife~\citep{Miller1974TheJR, Efron1979BootstrapMA}.
We compared ACP with the former three.
Unlike full CP, all the above methods have weaker validity guarantees or they tend to attain
less efficiency.

\citet{pmlr-v139-cherubin21a} introduced exact optimizations
for full CP for classification.
Their method saves an order of magnitude in
time complexity for many models, but is alas only
applicable to models supporting incremental
and decremental learning (e.g., k-NN).
Crucially, it is unlikely extendable to neural networks

The closest in spirit to our approach
is the Discriminative Jackknife (DJ)
by \citet{alaa2020discriminative},
which uses IF to approximate jackknife+
confidence intervals for regression~\cite{barber2021predictive}.
We
note several differences between ACP and DJ,
besides their different goals (classification
vs regression).
While DJ approximates LOO w.r.t. the parameters, we
introduce a \textit{direct approach} to approximate
the nonconformity scores in~\Cref{eq:score-approx-loss}. 
Whereas DJ only does decremental learning, which allows them to exploit Hessian-Vector-Products~\citep{pearlmutter1994fast},
approximating full CP requires us to do both
incremental \textit{and} decremental learning (\Cref{algo:ifcp}). 
We also prove that our approximation error
decreases w.r.t. the size of the training set.

\section{Conclusion and Future Work}
\label{sec:discussion}
Full CP is a statistically sound method for providing performance guarantees on the outcomes of ML models. For classification tasks, CP generates prediction sets which contain the true label with a user-specified probability.
Unfortunately, full CP is impractical to run for
more than a few hundred training points.
In this work, we develop \ifcp{}, a computationally efficient method
which approximates
full CP
via influence functions;
this strategy
avoids the numerous recalculations that full CP requires. We prove that \ifcp{} is consistent: it approaches full CP as the training set grows. Our
experiments support the use of \ifcp{} in practice.

There are many directions 
to improve ACP.
For example, we assumed that the ERM solution is unique. While our approximation works well in practice, it would be a fruitful endeavour to relax this  
assumption; initial work towards this was done for IF
by \citet{koh2017understanding}. 
Another direction 
is to
build nonconformity scores on the
\textit{studentized} scheme, a middle way
between deleted and ordinary~\cite{vovk2017nonparametric};
this may further improve \ifcp{}'s prediction power. Future work can investigate when \Cref{thm:approx-loss-vs-model} holds,
and they can try to apply \Cref{thm:consistency} to get direct error bounds on the coverage gap.

Finally, while we scale full CP to relatively large models, computing and inverting the Hessian becomes
very expensive as the number of parameters increases. Recent tools to approximate the Hessian, like the Kronecker-factored
Approximate Curvature (K-FAC) method~\citep{martens2015optimizing, ba2017distributed, tanaka2020pruning},
might help further scale \ifcp{} to larger models like ResNets.

In conclusion,
\ifcp{} 
helps scaling full CP to large datasets and ML models, for which running full CP would be impractical. Although split-based approaches like SCP and RAPS are less expensive to run, they do not attain the same efficiency as \ifcp{}. This makes the adoption of our method particularly appealing for critical real-world applications.

\section*{Acknowledgments}
\label{sec:ack}
 JA acknowledges support from the ETH AI Center. UB acknowledges support from DeepMind and the Leverhulme Trust via the Leverhulme Centre for the Future of Intelligence (CFI), and from the Mozilla Foundation. AW acknowledges support from a Turing AI Fellowship under grant EP/V025279/1, The Alan Turing Institute, and the Leverhulme Trust via CFI. GC acknowledges support from The Alan Turing Institute. The authors are grateful to Volodya Vovk, Pang Wei Koh and Manuel Gomez Rodriguez for useful comments and pointers.
 
\bibliography{main}
\appendix
\onecolumn
\section{Additional information about ACP}
\subsection{Ordinary Full CP and \ifcp{}}
\label{sec:ordinary-cp}

\begin{algorithm}[H]
	\begin{algorithmic}[1]
	\FOR {$x$ in test points}
	    \FOR {$\hat{y} \in \labelspace$}
		    \STATE $\hat{z} = (x, \hat{y})$
		    \STATE $\modelzhat = \argmin\limits_{\hat{\model} \in \modelspace} \risk(\zset \cup \{\hat{z}\}, \hat{\model})$
    		\FOR{$z_i \in \zset \cup \{\hat{z}\}$}
    		\STATE $\score_i = \loss(z_i, \modelzhat)$
    		\ENDFOR
    		\STATE $\pval_{(x, \hat{y})} = \frac{\card\{ i=1, ..., N+1 \suchthat \score_i \geq \score_{N+1}\}}{N+1}$
    	\STATE If $\pval_{(x, \hat{y})} > \varepsilon$, include
    	$\hat{y}$
    	in prediction set $\predset^\varepsilon_x$
    	\ENDFOR
    \ENDFOR
	\end{algorithmic}
	\caption{Full CP - Ordinary}
	\label{algo:cp-ordinary}
\end{algorithm}

Algorithm~\ref{algo:cp-ordinary} shows the Full CP
algorithm constructed for a nonconformity measure
defined on an ordinary scheme.
\ifcp{} with the ordinary prediction can
be defined similarly to the deleted one
as follows.
Once again, we can use the direct or indirect
approach.

\paragraph{Indirect approach.}
Let
$\ifmodelzhat \equiv \model_{\zset}
+\influence_{\model_\zset}(\hat{z})$.
Then:
\begin{equation}
    \label{eq:score-approx-model-ordinary}
    \score_i \approx \loss(z_i, \ifmodelzhat) \,.
\end{equation}

\paragraph{Direct approach.}
\begin{align}
    \label{eq:score-approx-loss-ordinary}
    \score_i
    &\approx
        \ifloss(z_i, \modelzhat) \nonumber \\
    &\equiv
        \loss(z_i, \model_\zset) +
        \influence_\loss(z_i, \hat{z})
\end{align}

Observe that the only difference w.r.t.
\ifcp{} (deleted) is that here we only need to
add the influence of $\hat{z}$; that is,
we do not need to remove the influence of
$z_i \in \zset \cup \{\hat{z}\}$.
This spares the very costly LOO procedure of
retraining the model, which makes this method
significantly more computationally efficient.

\subsection{Time complexity:  Full CP Vs ACP}

Predicting $m$ test points via  full CP
in an $l$-label classification setting 
costs $\bigoh{nlm(T_n+P)}$,
where $T_n$ is the cost of training a model on
$n$ training points, and $P$ is the cost of making a prediction with it.

The \ifcp{} procedure is split into two steps: training
and prediction.
Training costs
$\bigoh{T_n + H_n + n(G_n+P)}$,
where $H_n$ and $G_n$ are the costs of computing the Hessian inverse and a gradient, respectively. We denote the dependency on the training set size with the subscript $n$. 
Computing a prediction for $m$ points
costs $\bigoh{lm(G_n+nI_n+P)}$, where $I_n$ is the (usually small)
cost of computing the influence;
this gives an order speed-up over  CP.

\begin{table}[htb!]
    \centering
    \begin{normalsize}
    \begin{tabular}{lll}
    \toprule
          & \textbf{Train} & \textbf{Predict}\\ \midrule
         Full CP & N/A & $nlm(T_n+P)$ \\
         \textbf{ACP}  & $T_n + H_n + n(G_n+P)$ & $lm(G_n+nI_n+P)$ \\
         \bottomrule
    \end{tabular}
    \end{normalsize}
    \caption{Time complexities for full CP and ACP.}
    \label{tab:complexities}
\end{table}
As an illustrative example, let us assume a standard classification problem with $l$ labels, a training set with size $N$, and a smaller test set $M$, where we fit a 3-layers neural network. The  full CP algorithm would retrain the model $l\times N$ times per \textbf{each} test point in $M$. This is infeasible for relatively large datasets with, e.g., more than $1000$ points, either in the training or the test set. Conversely, ACP  only needs to train the model once, compute the Hessian inverse and the $N$ gradients (training step). Although inverting the Hessian requires $\bigoh{W^3}$ operations, this is expected to be significantly faster than full CP's retraining procedure for this standard scenario. ACP only needs access to the gradient of the test point to inexpensively compute the IF (prediction step). 

We measured the running times in the synthetic dataset (\Cref{app:exp}) with 50 features,  1000 training points, and 100 test points. As indicative figures, running full CP in this setting took approximately 5 days, whereas ACP took less than 1 hour (CPU times with an Intel Core i7-8750H). 

\subsection{Space complexity}

ACP requires storing the Hessian inverse, i.e., $\bigoh{W^2}$ with $W$ being the number of parameters, and one gradient per training point, i.e., $\bigoh{nW}$.  Full CP stores the full training set.

\subsection{Additional discussion on the nonconformity score}

Any nonconformity measure $\ncm: (\objspace\times\labelspace) \times (\objspace\times\labelspace)^N \rightarrow \mathbf{R}$ defines a conformal predictor (CP); given an example $x \in \objspace$ and a significance level $\varepsilon$, this predictor generates a  set that contains the true label $y \in \labelspace$ with probability $1-\varepsilon$. The intuition here is that $y$ will have a value that makes the pair $(x, y)$ conform with the previous examples; the validity is therefore conditioned on the \textit{training set}. The nonconformity score  is simply a measure of how different a new example is from old examples. In a general setting, the nonconformity measure is arbitrary; the validity guarantees always hold, e.g., a random score will still generate valid prediction sets, although likely with low efficiency. Whether a function $\ncm$ is an appropriate nonconformity score will always be open to discussion since it greatly depends on the context~\citep{vovk2005algorithmic}. 

 \ifcp{} (Algorithm 2) restricts the nonconformity score to be the loss at a point. Although the instance-wise loss might be suboptimal, the underlying algorithm is arbitrary. \citet{shafer2008tutorial} point out that the score choice is relatively
unimportant and that the critical step in the CP framework is determining the underlying classifier. It is therefore not expected
that ACP degrades its efficiency by using the loss as score as long as the wrapped classifier is appropriate.

\section{Proofs}
\label{proofs}

\subsection{\ifcp{} consistency}

We first declare the following set of assumptions introduced and
thoroughly discussed by \citet{giordano2019swiss}.

\begin{assumption}
\label{assumption:giordano}
~
\begin{itemize}
	\item For all $\model \in \modelspace$ and all $z_i \in \zset$,
		$\nabla_\model \loss(z_i, \model)$ is continuously
		differentiable in $\model$.
    \item For all $\model \in \modelspace$, the Hessian
		$H_\model$ is non-singular with
		$\sup_{\model \in \modelspace} ||H_\theta^{-1}||_{op} < \infty$.
	\item $\exists$ finite constants $C_g, C_h$ s.t.
			$\sup\limits_{\model \in \modelspace} \frac{1}{\sqrt{N}}
				||\nabla_\model\loss(z, \model)||_2 \leq
					C_g$ and
			$\sup\limits_{\model \in \modelspace} \frac{1}{\sqrt{N}}
			||\nabla^2_\model\loss(z, \model)||_2 \leq
			C_h$.
	\item 
	Let $h(\model) = \nabla^2_{\model}\loss(z, \model)$.
	There exists $\Delta_\model > 0$ and a finite constant $L_h$ such that $||\model-\modelz||_2 \leq \Delta_\model$
	implies that $\frac{||h(\model)-h(\modelz)||_2}{\sqrt{N}} \leq
	L_h ||\model - \modelz||_2$.
\end{itemize} 
\end{assumption}

To prove \Cref{thm:consistency} we exploit the following
result by \citet{giordano2019swiss}:
Let $\ifmodelloo = \modelz - \influence_\model(z_i, z_i)$
be the approximation of $\modelloo$ obtained from
\Cref{eq:if-model}.

\begin{theorem}[Consistency of $\ifmodelloo$ \citep{giordano2019swiss}]
\label{thm:giordano-consistency}
Under \Cref{assumption:giordano},
for every $N$
there is a constant $C$ s.t., for every
$z_i \in \zset$:
\begin{align*}
	||\ifmodelloo - \modelloo||_2 & \leq C \frac{||g||^2_\infty}{N^2}\\
		& \leq C \frac{\max \{C_g, C_h\}^2}{N}
\end{align*}

(Note, the result by Giordano et al. is stated more generally for
the case of leave $k$ out; we only report its LOO version.
Also, the general case requires 2 further assumptions,
which we omit as they are always satisfied for LOO.)
\end{theorem}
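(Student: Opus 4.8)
The plan is to recast both the full-data estimator $\modelz$ and the leave-one-out estimator $\modelloo$ as roots of a single weighted estimating equation, to recognize $\ifmodelloo$ as the first-order (infinitesimal-jackknife) linearization of the root map, and then to bound the quadratic Taylor remainder using the Hessian-Lipschitz property in \Cref{assumption:giordano}. Concretely, write $g_n(\model) = \nabla_\model\loss(z_n,\model)$ and, for a weight vector $w\in\mathbf{R}^N$, define $G(w,\model) = \frac1N\sum_{n=1}^N w_n\,g_n(\model)$. The full ERM solution satisfies $G(\mathbf{1},\modelz)=0$, and the LOO solution satisfies $G(\mathbf{1}-e_i,\modelloo)=0$, since dropping $z_i$ is exactly setting its weight to zero.

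Next I would identify $\ifmodelloo$ with the infinitesimal-jackknife term. Applying the implicit function theorem to $G(w,\hat\model(w))=0$ and differentiating in $w_i$ at $w=\mathbf{1}$ gives $\partial_{w_i}\hat\model|_{\mathbf{1}} = -H_{\modelz}^{-1}\tfrac1N g_i(\modelz)$, with $H_{\modelz}=\nabla^2_\model\risk(\zset,\modelz)$ invertible by assumption. Moving the weight of $z_i$ from $1$ to $0$ therefore predicts the increment $-\partial_{w_i}\hat\model|_{\mathbf{1}} = \tfrac1N H_{\modelz}^{-1}g_i(\modelz) = -\influence_{\modelz}(z_i)$, which is precisely $\ifmodelloo-\modelz$ as defined through \Cref{eq:if-model}.

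The core estimate comes from a second-order Taylor expansion. Setting $\delta = \modelloo-\modelz$ and expanding $0=G(\mathbf{1}-e_i,\modelz+\delta)$ about $\modelz$, I use $G(\mathbf{1}-e_i,\modelz)=G(\mathbf{1},\modelz)-\tfrac1N g_i(\modelz)=-\tfrac1N g_i(\modelz)$ and $\partial_\model G(\mathbf{1}-e_i,\modelz)=H_{\modelz}-\tfrac1N\nabla^2_\model\loss(z_i,\modelz)$, so that
\[ 0 = -\tfrac1N g_i(\modelz) + \bigl(H_{\modelz}-\tfrac1N\nabla^2_\model\loss(z_i,\modelz)\bigr)\delta + r(\delta), \]
where $r(\delta)$ is the quadratic remainder. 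Solving for $\delta$ and subtracting the linear prediction $\tfrac1N H_{\modelz}^{-1}g_i(\modelz)$ isolates $\ifmodelloo-\modelloo$ as the sum of two second-order contributions: the mismatch between the perturbed Jacobian $H_{\modelz}-\tfrac1N\nabla^2_\model\loss(z_i,\modelz)$ and $H_{\modelz}$ acting on $\delta$, and the remainder $r(\delta)$. Bounding these via $\sup_\model\|H_\model^{-1}\|_{op}<\infty$, the Hessian-Lipschitz constant $L_h$, and the growth bounds $C_g,C_h$ yields a bound of order $\|g\|_\infty^2/N^2$; the second inequality then follows from $\|g\|_\infty\le\sqrt{N}\,C_g$, which gives $\|g\|_\infty^2/N^2\le\max\{C_g,C_h\}^2/N$.

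The main obstacle is controlling $r(\delta)$ uniformly, which requires an a priori bound $\|\delta\|=\|\modelloo-\modelz\|=\mathcal{O}(1/N)$ ensuring that $\modelloo$ lies inside the radius $\Delta_\model$ on which the per-example Hessian is Lipschitz (the last part of \Cref{assumption:giordano}). I would establish this crude bound separately, e.g.\ by a contraction/fixed-point argument on the estimating equation or by continuity of the root map, before invoking the Lipschitz estimate; bootstrapping from the $\mathcal{O}(1/N)$ bound to the sharp $\mathcal{O}(1/N^2)$ bound is the delicate step, and the constant $C$ simply absorbs $\sup_\model\|H_\model^{-1}\|_{op}$, $L_h$, and numerical factors.
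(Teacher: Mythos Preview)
The paper does not prove this theorem; it is quoted verbatim as a result of \citet{giordano2019swiss} and used as a black box in the proof of \Cref{thm:consistency}. So there is no ``paper's own proof'' to compare against here.

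That said, your sketch is essentially the argument Giordano et al.\ give: parametrize the ERM problem by a weight vector $w$, identify the LOO estimate as the root at $w=\mathbf{1}-e_i$, recognize the influence-function approximation as the first-order (infinitesimal-jackknife) linearization of the root map $w\mapsto\hat\model(w)$, and control the quadratic remainder via the Hessian Lipschitz assumption together with the uniform bound on $\|H_\model^{-1}\|_{op}$. You have also correctly flagged the one nontrivial step, namely the a~priori $\mathcal{O}(1/N)$ bound on $\|\modelloo-\modelz\|$ needed to place $\modelloo$ inside the Lipschitz ball of radius $\Delta_\model$ before the remainder estimate applies; Giordano et al.\ handle this via an implicit-function/contraction argument on the estimating equation, exactly along the lines you indicate. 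Your derivation of the second displayed inequality from $\|g\|_\infty\le\sqrt{N}\max\{C_g,C_h\}$ is also the right reading of the constants in \Cref{assumption:giordano}.
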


We now restate and prove the consistency of
\ifcp{}.

\consistency*
\begin{proof}
First, from \Cref{thm:giordano-consistency} \citep{giordano2019swiss} we have that, for every $N$
there exists constants $C$ and $C'$ such that:
\begin{align*}
    &||\modelzhat -\influence_\model(z_i) - \modelncm||_2 \leq C \frac{\max \{C_g, C_h\}^2}{N} \\
    &||\modelzhat - \influence_\model(\hat{z}) - \modelz||_2 \leq C' \frac{\max \{C_g, C_h\}^2}{N}\,;
\end{align*}
observe that the second expression is obtained by
removing $\hat{z}$ from the training data of
a model trained on $\zset \cup \{\hat{z}\}$.

Fix $z_i \in \zset \cup \{\hat{z}\}$.
\begin{align*}
    |\ifloss(z_i, \modelncm) - \loss(z_i, \modelncm)|
    &\leq
        |\loss(z_i, \ifmodelncm) - \loss(z_i, \modelncm)|\\
    &= |\loss(z_i, \ifmodelncm) - \loss(z_i, \modelncm)|\\
    &\leq K||\ifmodelncm - \modelncm||_2\\
    &= K||\modelz + \influence_\model(\hat{z}) -
        \influence_\model(z_i) -
        \modelncm||_2\\
    &\leq K ||\modelzhat -\influence_\model(z_i) - \modelncm||_2 + ||\modelzhat -\influence_\model(\hat{z}) - \modelz ||_2 \\
    &\leq K(C+C')\frac{\max \{C_g, C_h\}^2}{N}
\end{align*}
First step applies \Cref{thm:approx-loss-vs-model},
third step uses Lipschitz continuity,
fourth step uses triangle inequality,
and the final step uses \Cref{thm:giordano-consistency}.
\end{proof}

\subsection{Dependence on the regularization parameter $\reg$}
\label{appendix:theory-regularization}

Our result on the effect of the regularization
parameter on the approximation error of \ifcp{}
(\Cref{thm:approx-regularizer}) is an extension
of a result by \citet{koh2019accuracy}
on the approximation error of IF for LOO.

\subsubsection*{Background on result by \citet{koh2019accuracy}}

The LOO loss can be
written as
$$\loss(z_i, \modelloo) = \loss(z_i, \modelz) - \influence_\loss^*(z_i, z_i) \approx \loss(z_i, \modelz) - \influence_\loss(z_i, z_i) \,,$$
where $\influence_\loss^*(z_i, z_i)$ is defined to
be the true influence
of training point $z_i \in \zset$ on the loss of the model at $z_i$.
For simplicity, we omit the arguments of
$\influence_\loss$ when clear from the context.
In the following, we assume $\model$ is the minimizer of
an ERM problem with regularization parameter $\reg$.

\citet{koh2019accuracy} decompose the
influence function error
into Newton-actual error ($E_A$) and
Newton-influence error ($E_\influence$):
$$\influence^*_\loss-\influence_\loss =
    E_A + E_\influence$$
with $E_A \equiv \influence_\loss^* - \influence_\loss^\mathcal{N}$, and
$E_\influence \equiv
\influence_\loss^\mathcal{N} -
\influence_\loss$ . Here $\influence_\loss^\mathcal{N}$ is the Newton approximation of the true influence $\influence_\loss^*$.

\citet{koh2019accuracy} show that $E_A$
decreases with $\bigoh{\frac{1}{\lambda^3}}$,
and they observed
that its value tends to be negligible in practice.

\begin{proposition}[\citet{koh2019accuracy}]
\label{thm:koh-self-loss}
Assume that $\loss$ is $C_f$-Lipschitz and that the
Hessian is $C_H$-Lipschitz.
Further, assume the third derivative of $\loss$ exists and
that it is bounded in norm by $C_{f,3}$.
The influence on the self-loss is such that:
$$\influence_\loss(z_i) + E_{f,3} \leq
    \influence^\mathcal{N}_\loss(z_i) \leq
    \left(1 + \frac{3\sigma_{max}}{2\reg} +
        \frac{\sigma_{max}^2}{2\reg^2}\right)\influence_\loss(z_i)
    + E_{f,3}(z_i) \,.$$

Furthermore:
$$|E_{f,3}(z_i)| \leq
    \frac{C_{f,3}C^3_\loss}
        {6(\sigma_{min}+\reg)^3} \,.$$
\end{proposition}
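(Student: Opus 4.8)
The plan is to make the Newton quantity $\influence_\loss^\mathcal{N}$ explicit as a second-order Taylor model plus a third-order remainder, and then sandwich that model between $\influence_\loss$ and $g(\reg)\influence_\loss$ using only the positive-semidefinite (PSD) ordering of the Hessians. Write $M = \tfrac1N\nabla^2_\model\loss(z_i,\modelz)$ and $H_{-i} = H - M$ for the leave-one-out Hessian, and let $\delta^{\mathcal{N}} = \tfrac1N H_{-i}^{-1}\nabla_\model\loss(z_i,\modelz)$ and $\delta^{IF} = \tfrac1N H^{-1}\nabla_\model\loss(z_i,\modelz)$ be the one-step Newton and influence-function parameter displacements, so that $\influence_\loss(z_i) = \nabla_\model\loss(z_i,\modelz)^\top\delta^{IF}$. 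I would take $\influence_\loss^\mathcal{N}(z_i)$ to be the actual change in the self-loss at the Newton-updated parameter $\modelz+\delta^{\mathcal{N}}$, and Taylor-expand it around $\modelz$ to second order:
\[
\influence_\loss^\mathcal{N}(z_i) = \nabla_\model\loss(z_i,\modelz)^\top\delta^{\mathcal{N}} + \tfrac12(\delta^{\mathcal{N}})^\top\nabla^2_\model\loss(z_i,\modelz)\,\delta^{\mathcal{N}} + E_{f,3}(z_i),
\]
with $E_{f,3}(z_i)$ the Lagrange remainder.

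The ``Furthermore'' bound is the most routine piece. I would bound $|E_{f,3}(z_i)|\le\tfrac{C_{f,3}}{6}\|\delta^{\mathcal{N}}\|_2^3$ from the $C_{f,3}$-bound on the third derivative, then control the displacement by $\|\delta^{\mathcal{N}}\|_2\le C_\loss/(\sigma_{min}+\reg)$, using $\|\nabla_\model\loss\|_2\le C_\loss$ (Lipschitzness) and $\|H_{-i}^{-1}\|_{op}\le(\sigma_{min}+\reg)^{-1}$ (the leave-one-out regularized Hessian is still bounded below by $(\sigma_{min}+\reg)I$). Cubing yields exactly $|E_{f,3}(z_i)|\le C_{f,3}C_\loss^3/\bigl(6(\sigma_{min}+\reg)^3\bigr)$.

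For the lower inequality I would set $Q = \influence_\loss^\mathcal{N}(z_i)-E_{f,3}(z_i)$, the quadratic Newton model, and argue $Q\ge\influence_\loss(z_i)$ directly from convexity. Since $\nabla^2_\model\loss(z_i,\modelz)\succeq0$ we have $H_{-i}\preceq H$, hence $H_{-i}^{-1}\succeq H^{-1}$; the first-order term of $Q$ then dominates $\influence_\loss(z_i)=\tfrac1N\nabla_\model\loss^\top H^{-1}\nabla_\model\loss$, while its quadratic term is nonnegative. Adding $E_{f,3}(z_i)$ back recovers the claimed lower bound $\influence_\loss(z_i)+E_{f,3}(z_i)\le\influence_\loss^\mathcal{N}(z_i)$.

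The upper bound is the crux. Substituting $w = H_{-i}^{-1}\nabla_\model\loss(z_i,\modelz)$ and using $\nabla_\model\loss(z_i,\modelz)=H_{-i}w$ collapses the quadratic model to $Q=\tfrac1N w^\top(H-\tfrac12 M)w$ and the influence to $\influence_\loss(z_i)=\tfrac1N w^\top(H-M)H^{-1}(H-M)w$, so the goal becomes the matrix inequality $H-\tfrac12 M\preceq g(\reg)\,(H-M)H^{-1}(H-M)$, equivalently $\lambda_{max}\!\bigl((H-\tfrac12 M)\,[(H-M)H^{-1}(H-M)]^{-1}\bigr)\le g(\reg)$. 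When $H$ and $M$ are simultaneously diagonalizable this reduces to maximizing the scalar ratio $h(h-\tfrac12 m)/(h-m)^2$ over the spectral ranges $0\le m\le\sigma_{max}$ and $h-m\ge\reg$; the maximum is attained at $m=\sigma_{max}$, $h-m=\reg$ and equals $g(\reg)=1+\tfrac{3\sigma_{max}}{2\reg}+\tfrac{\sigma_{max}^2}{2\reg^2}$, which explains the exact coefficients. Combining with $\influence_\loss^\mathcal{N}(z_i)=Q+E_{f,3}(z_i)$ then gives the stated sandwich. The main obstacle is that $H$ and $M$ need not commute, so this scalar maximization does not transfer verbatim; the non-commuting case must be handled through the resolvent identity $H_{-i}^{-1}=H^{-1}+H_{-i}^{-1}MH^{-1}$ together with operator-norm bounds $\|H^{-1}\|_{op}\le\reg^{-1}$ and $\|M\|_{op}\le\sigma_{max}$, following the accounting of \citet{koh2019accuracy}, and one must verify that discarding the higher-order resolvent terms preserves the inequality rather than reversing it.
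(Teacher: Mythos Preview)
The paper does not prove this proposition: it is quoted verbatim as a background result from \citet{koh2019accuracy} and then used as a black box in the proof of \Cref{thm:approx-regularizer}. There is therefore no proof in this paper to compare your proposal against; for a line-by-line comparison you must consult Koh et al.\ directly.

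That said, your reconstruction is broadly on the right track and largely matches how such bounds are derived in that line of work. The second-order Taylor expansion of the self-loss at the Newton step, the Lagrange remainder giving the $|E_{f,3}|$ estimate, and the PSD-ordering argument $H_{-i}^{-1}\succeq H^{-1}$ for the lower bound are all correct. Your own caveat is the genuine gap: the upper bound via the scalar ratio $h(h-\tfrac12 m)/(h-m)^2$ is only valid when $H$ and $M$ commute, and you have not closed the non-commuting case---the resolvent expansion you sketch produces terms with the right $\reg^{-1}$ and $\reg^{-2}$ scaling but does not by itself collapse to the exact constant $g(\reg)=1+\tfrac{3\sigma_{max}}{2\reg}+\tfrac{\sigma_{max}^2}{2\reg^2}$ without further work. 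You should also double-check what $\sigma_{max}$ denotes: the present paper declares it to be the largest eigenvalue of the full Hessian $H$, whereas your scalar maximization implicitly uses $m\le\sigma_{max}$ as a bound on eigenvalues of $M=\tfrac1N\nabla^2_\model\loss(z_i,\modelz)$, which is a different object; reconcile this with the normalization in the original Koh et al.\ statement before finalizing.
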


\subsubsection*{Extension to \ifcp{}}

If we assume $E_A$ to be negligible, we obtain that
$\influence^*_\loss \approx \influence^\mathcal{N}_\loss$.
Further, ignoring $O(\reg^{-3})$ terms, Proposition~\ref{thm:koh-self-loss} gives:
$$\influence_\loss \leq \influence^*_\loss \leq g(\reg) \influence_\loss \,,$$
that is (by subtracting each term from $\loss(z_i, \modelz)$):
$$\loss(z_i, \modelz)-g(\reg)\influence_\loss(z_i, z_i) \leq
    \loss(z_i, \modelloo) \leq
    \loss(z_i, \modelz)-\influence_\loss(z_i, z_i) =
    \ifloss(z_i, \modelloo)\,.$$

Note, however, that this result only applies to the
LOO loss.
We use 
the following simplifying assumption to extend
the above result to \ifcp{}.

\begin{assumption}
For all $z_i$ and $\hat{z}$:
$$\influence^*_\loss(z_i, \hat{z}) \approx
\influence_\loss(z_i, \hat{z})$$
\label{assumption:influence-zhat}
\end{assumption}

We can now restate and prove our result.

\regularizer*

\begin{proof}

Koh et al. show that (ignoring $O(\lambda^{-3})$ terms):
    $$\influence_\loss(z_i) \leq
    \influence^\mathcal{N}_\loss(z_i) \leq
    \left(1 + \frac{3\sigma_{max}}{2\reg} +
        \frac{\sigma_{max}^2}{2\reg^2}\right)\influence_\loss(z_i) \,.$$
(For a definition of all the terms please refer to
\Cref{appendix:theory-regularization}.)
Ignoring $O(\reg^{-3})$ terms, we apply this to the loss:
    $$\loss(z_i, \modelz) - g(\reg)\influence_\loss(z_i, z_i) \leq
    \loss(z_i, \modelloo) \leq
    \loss(z_i, \modelz) - \influence_\loss(z_i, z_i)$$

We apply this to the augmented training set
$\zset \cup \{\hat{z}\}$ (now $z_i$ ranges in
$\zset \cup \{\hat{z}\}$):

$$\loss(z_i, \modelzhat) - g(\reg)\influence_\loss(z_i, z_i) \leq \loss(z_i, \modelncm) \leq
    \loss(z_i, \modelzhat) - \influence_\loss(z_i, z_i) \,.$$

By using Assumption~\ref{assumption:influence-zhat}, replace
    $\loss(z_i, \modelzhat)$ with
    $\loss(z_i, \modelz) + \influence_\loss(z_i, \hat{z})$.
This concludes the proof.
\end{proof}

\section{Experimental details}
\label{app:exp}

\subsection{Synthetic data (\Cref{sec:synthetic})}
We generate synthetic data for a binary classification problem using scikit-learn’s \texttt{make\_classification()} \cite{scikit-learn}. We sample points from four Gaussian-distributed clusters (two per class).

\subsection{Real data (\Cref{sec:experiments})}
\label{appendix:datasets}
Follows a brief description of the datasets we used and of their pre-processing. 

\begin{itemize}
\item \textbf{MNIST}: we use 60,000 images for training and 10,000 for testing from the MNIST dataset \cite{lecun1998mnist} – from which we take the first 100 test points. The 28×28 grayscale images represent handwritten digits between 0 and 9. In all settings except the CNN, these images are standardized in the interval $[0,1]$ by dividing the pixel intensities by $255$.
\item \textbf{CIFAR-10}:  we use 50,000 images for training and 10,000 for testing from the CIFAR-10 dataset \cite{Cifar} – from which we take the first 100 test points. These correspond to images from 10 mutually exclusively classes. We also standardize them in the interval $[0,1]$ (except in setting CNN).
\item \textbf{US Census}: we use the ACSIncome data for the state of New York from the US Census dataset \cite{ding2021retiring}. Each data point represents an individual above the age of 16, who reported usual working hours of at least 1 hour per week in the past year, and an income of at least \$100. Ten features characterize each point, and the task consists of predicting whether the individual’s income is above \$50,000. We split the dataset of 103,021 points and keep the 90\% (92,718) for training. We take 100 test points from the remaining. 
\end{itemize}

\subsection{Implementation details} \label{appendix:implementations}
We use dense layers with Rectified Linear Units (ReLU) as nonlinear activation functions in all the models. Settings \mlpA, \mlpB and \mlpC are based on multilayer perceptrons with different widths and depths, LR uses logistic regression, and CNN a convolutional neural network.
All models are trained with Adam for a maximum of 200 epochs. We fix a learning rate of 0.001, minibatches of size 100, and an early stopping based on a 20\% validation split. We also set a regularization $\lambda=10^{-5}$. We use a cross-entropy loss function.

We use an autoencoder (AE) to reduce the dimensionality of MNIST and CIFAR-10 for settings \mlpA, \mlpB, \mlpC, and LR. The AE uses a symmetric encoder-decode architecture with two dense layers of 128 and 64 neurons, separated by dropout with  $p=0.2$ and ReLU activations. 

We now give specific details of each setting:
 
\begin{itemize}
\item \textbf{\mlpA}: two layers, the first with 20 and the second with 10 neurons. The embedding size for the AE is 8.
\item \textbf{\mlpB}: one layer with 100 neurons. Embedding size of 16.
\item \textbf{\mlpC}: three layers with 100, 50, and 20 neurons. Embedding size of 32.
\item \textbf{LR}: embedding size of 8.
\item \textbf{CNN}: two convolutional layers with 16 and 32 kernels of size 5, respectively. They include a ReLU activation and max-pool layer with kernel size 4. A multilayer-perceptron makes the classification. 
\end{itemize}

\subsection{Design choices for the methods} \label{designs}
Here we comment on some design choices that we made when including the competing methods.

\begin{itemize}
\item \textbf{SCP}: we fix a calibration split of 20\%, as advised in \cite{Linusson2014EfficiencyCO}. We obtain the nonconformity scores by fitting the model in the proper training set and computing the loss for the calibration data.

\item \textbf{RAPS}: we fix a calibration split of 20\%. We use the randomized version of the algorithm, and we do not allow zero sets (i.e., we use the default version in \cite{Angelopoulos2021UncertaintySF}). The randomized version, although slightly unstable, provides a less conservative coverage. The parameters ``kreg'' and ``lamda'' are picked in an optimal fashion that minimizes the prediction set size. We therefore compare ACP to the most efficient version of RAPS.
\item \textbf{CV+}: we set $K=5$ as the number of folds for the cross-validation.
\end{itemize}

For ACP, we add a damping term of $\lambda=0.01$ to the Hessian to enforce positive eigenvalues. This is equivalent to an l2-regularisation.

\section{Fuzziness comparison}
\label{appendix:fuzziness}

\Cref{tab:fuzz_Census} shows the fuzziness
of \ifcp{} and SCP on the US Census dataset.
Results match the behavior observed for MNIST
and CIFAR-10 (\Cref{sec:experiments}).

\begin{table}[htb!]
    \centering
    \begin{tabular}{l|lll}
    \toprule
      & \multicolumn{3}{c}{US Census}\\  \midrule 
     Model & ACP (D) & ACP (O) & SCP \\ \midrule
     \mlpA & $.1086$ & $\mathbf{.0844}$ & $.1121^\dagger$ \\
     \mlpB & $\mathbf{.1144}$ & $.1366$ & $.2173^{*}$ \\
     \mlpC & $\mathbf{.0733}$ & $.0768$ & $.0741$ \\
     LR & $\mathbf{.0781}$ & $.0782$ & $.0885$ \\
    \toprule
    \end{tabular}
    \caption{Fuzziness of \ifcp{} and SCP on US Census.
    A smaller fuzziness corresponds to more
    efficient prediction sets.\\
     $^*$Differences in the fuzziness are statistically significant compared to ACP (D). $^\dagger$Differences are significant compared to ACP (O).}
    \label{tab:fuzz_Census}
\end{table}

\section{Further results}

\subsection{Additional experiments with synthetic data}
We run additional experiments on synthetic data to further support  the theory in \Cref{sec:theory}. Specifically, we show how ACP's p-values approximate Full CP's as the training set increases, for different numbers of features and regularization strengths. We also compute the Kendall tau distance between the p-values rankings.

\begin{figure}[htb]
\begin{subfigure}{.235\textwidth}
  \centering
  \includegraphics[width=0.99\linewidth]{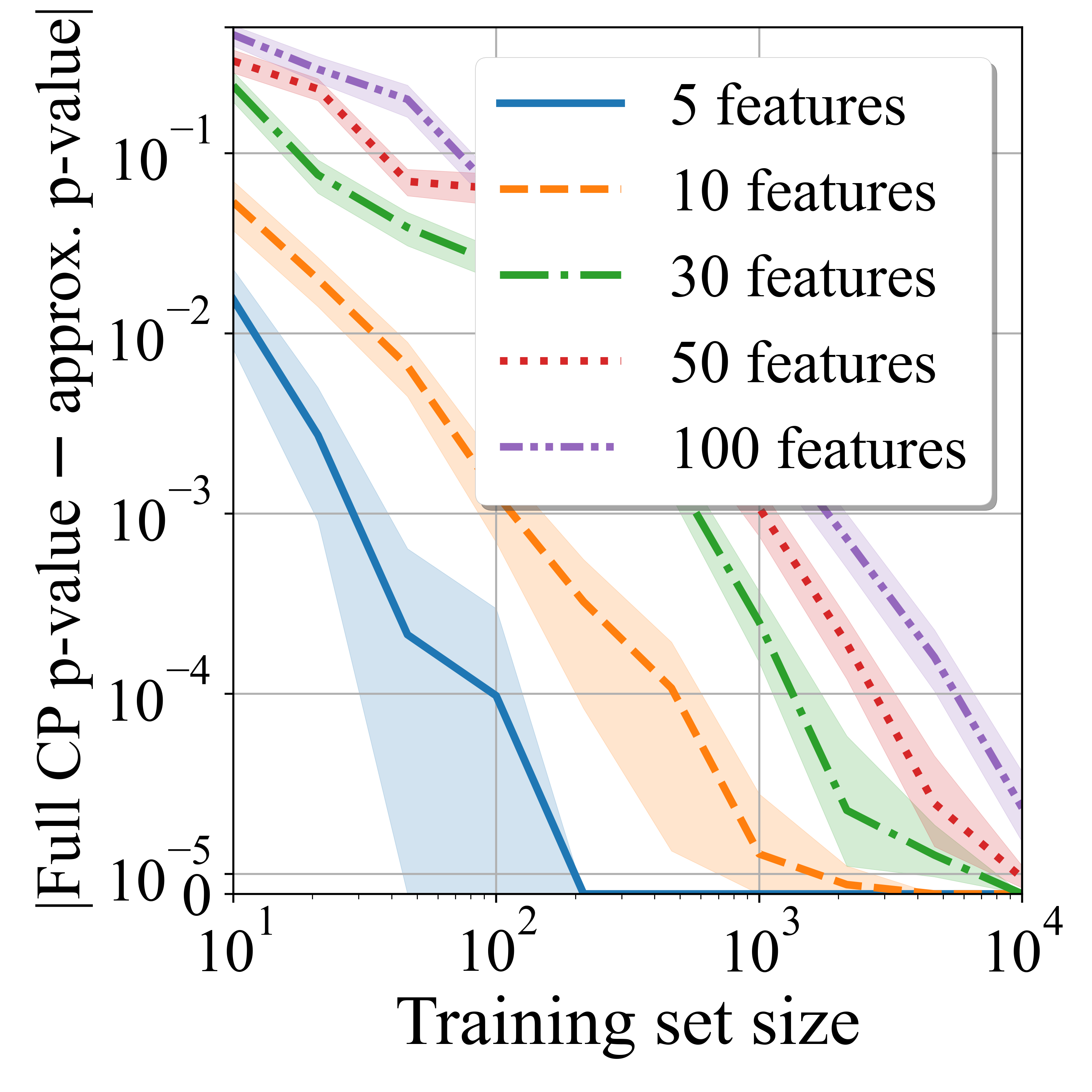}
\end{subfigure}
\begin{subfigure}{.235\textwidth}
  \centering
  \includegraphics[width=0.99\linewidth]{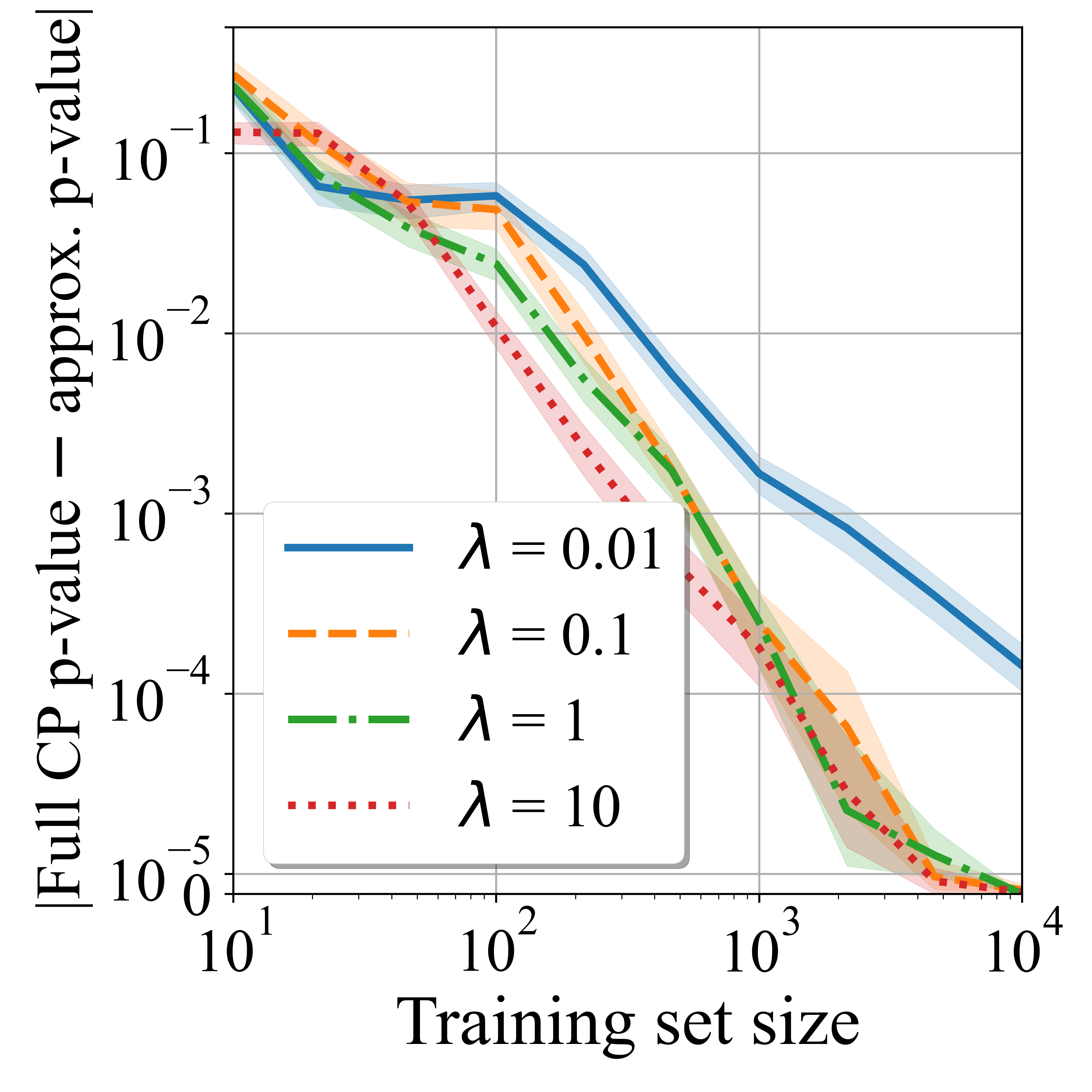}
\end{subfigure}
\begin{subfigure}{.235\textwidth}
  \centering
  \includegraphics[width=0.99\linewidth]{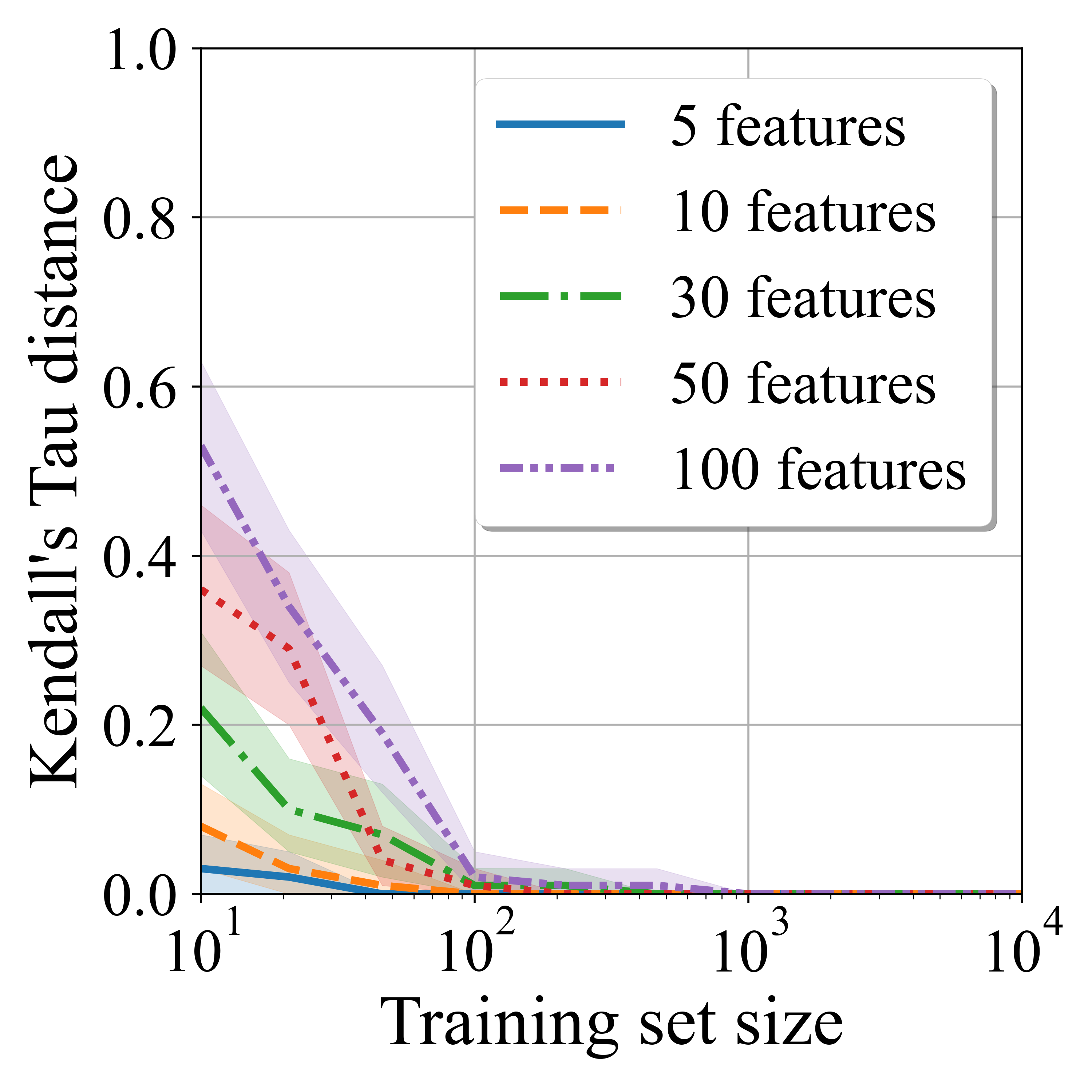}
\end{subfigure}
\begin{subfigure}{.235\textwidth}
  \centering
  \includegraphics[width=0.99\linewidth]{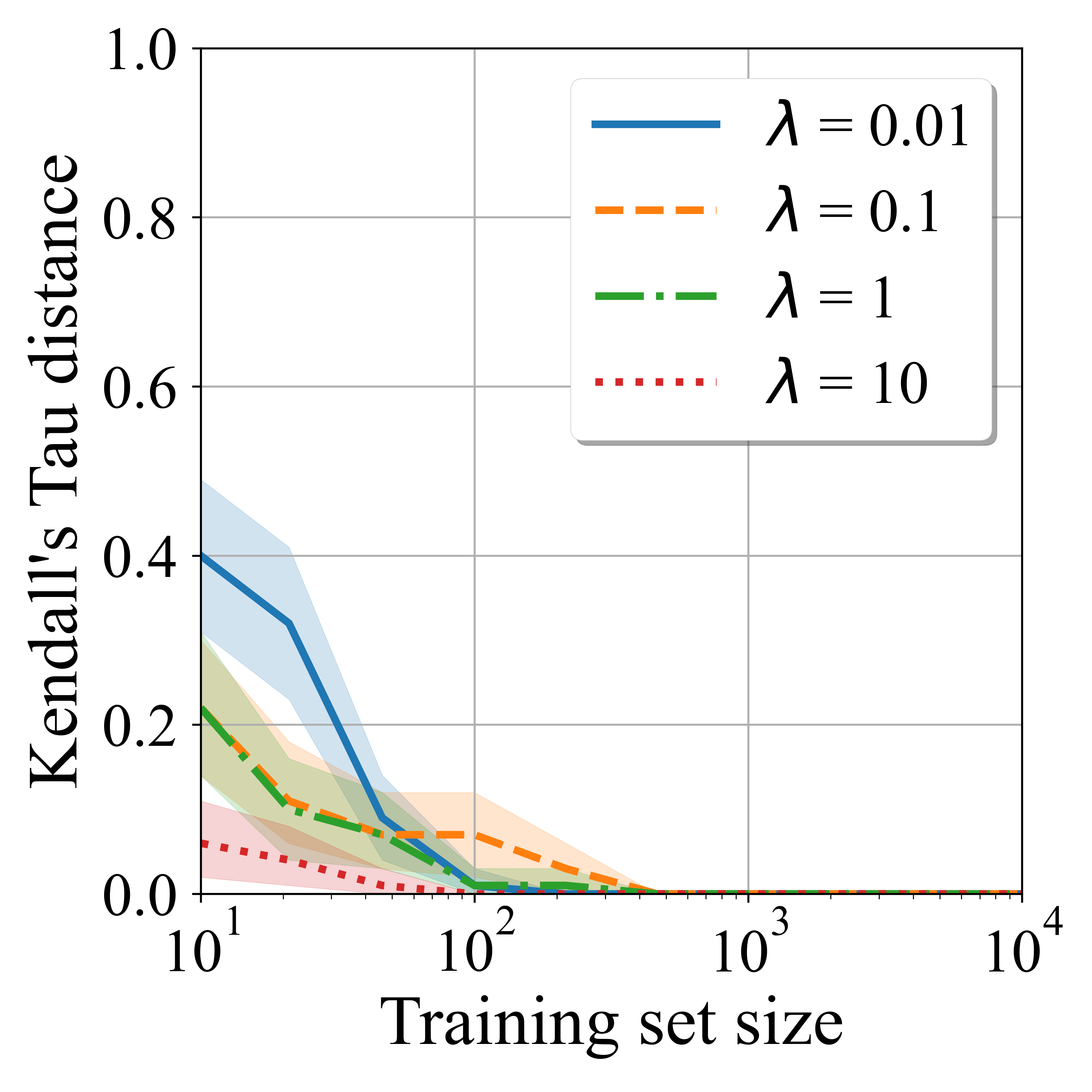}
\end{subfigure}
\caption{Additional experiments on synthetic data. We further support that ACP generates the same p-values as full CP as the training set increases.}
\end{figure}

\subsection{Additional motivating examples} \label{appendix:additional:examples}
We include additional motivating examples with specific instances from CIFAR-10 and MNIST for setting \mlpC.

\begin{figure*}[!htb]
\centering
\sbox{\bigpicturebox}{%
  \begin{subfigure}[b]{.6\textwidth}
  \adjustbox{raise=-5pc}{\scalebox{2}[2]{\includegraphics[width=0.5\textwidth]{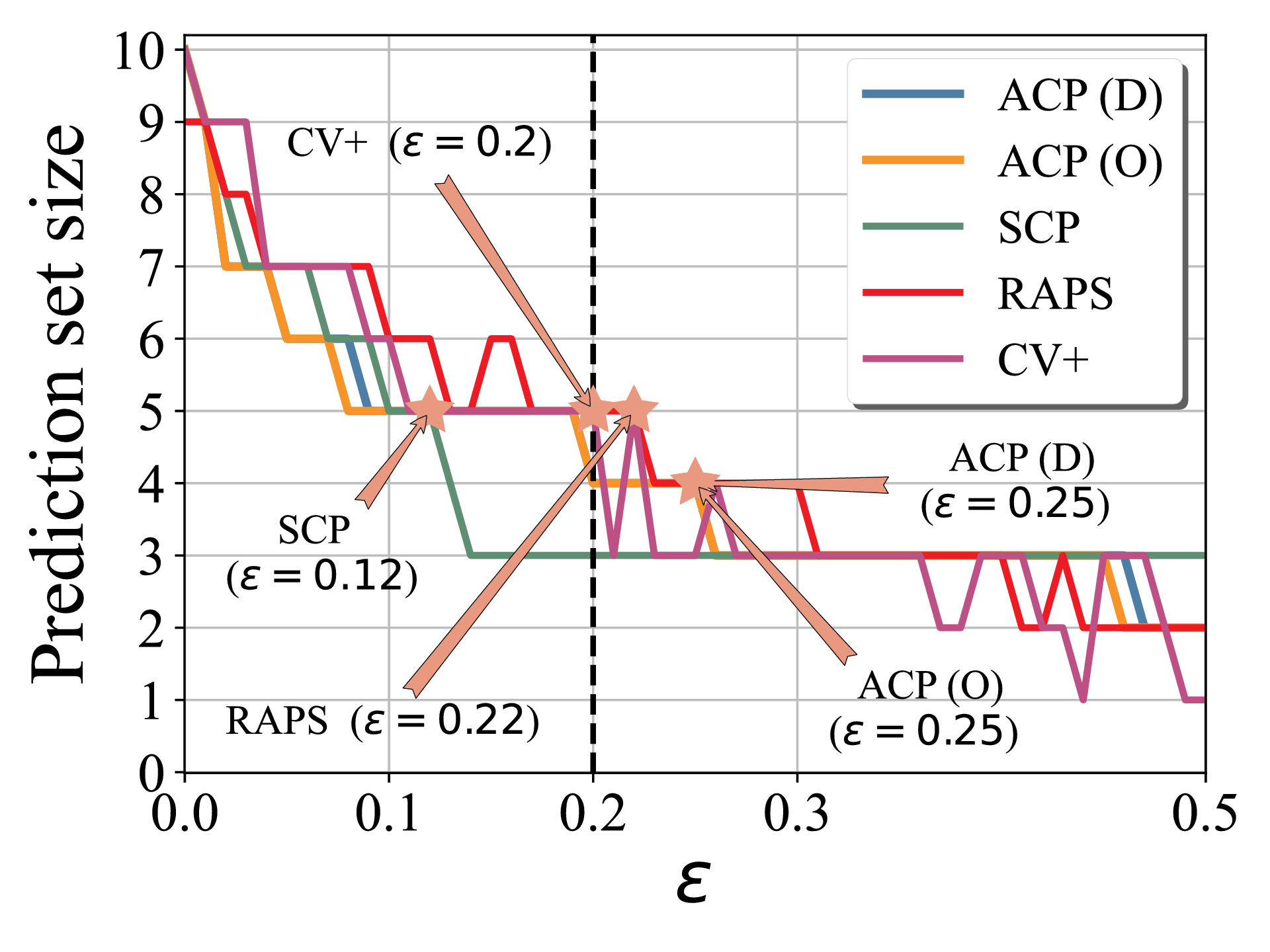}}}%
\end{subfigure}}
\usebox{\bigpicturebox}
\begin{minipage}[b][\ht\bigpicturebox][s]{.3\textwidth}
\begin{subfigure}{.3\textwidth}
\adjustbox{raise=-5pc}{\scalebox{4}[4]{\includegraphics[width=0.5\linewidth]{figures/CIFAR_91.JPG}}}
\end{subfigure}
\vfill
\begin{subfigure}[b]{.3\textwidth}
\begin{center}
\small 
\begin{tabular}{ll}
    \toprule
    \multicolumn{2}{c}{$\varepsilon = 0.2$}\\
    \toprule
     Method & Prediction set \\ \midrule
     ACP (D) & bird, \underline{\textbf{cat}}, deer, frog \\ 
     ACP (O) & bird, \underline{\textbf{cat}}, deer, frog \\  
     SCP & bird, deer, frog \\ 
     RAPS & bird, \underline{\textbf{cat}}, deer, dog, frog \\ 
     CV+ & bird, \underline{\textbf{cat}}, deer, dog, frog
 \end{tabular}
\end{center}
\end{subfigure}
\end{minipage}
\caption{Prediction set size as a function of $\varepsilon$ for a specific instance from CIFAR-10. We indicate with a star the point ($\varepsilon$) at which each method starts containing the true label. ACP (D) and ACP (O) are the \textit{fastest}, meaning that they include the true label earlier than the rest of methods (i.e., at higher $\varepsilon$). For a typical significance $\varepsilon=0.2$, the prediction sets generated by both methods are the smallest that contain the true label. RAPS and CV+ are more conservative, while SCP fails to include it. }
\label{examples_cat}
\end{figure*}

\begin{figure*}[!htb]
\centering
\sbox{\bigpicturebox}{%
  \begin{subfigure}[b]{.6\textwidth}
  \adjustbox{raise=-5pc}{\scalebox{2}[2]{\includegraphics[width=0.5\textwidth]{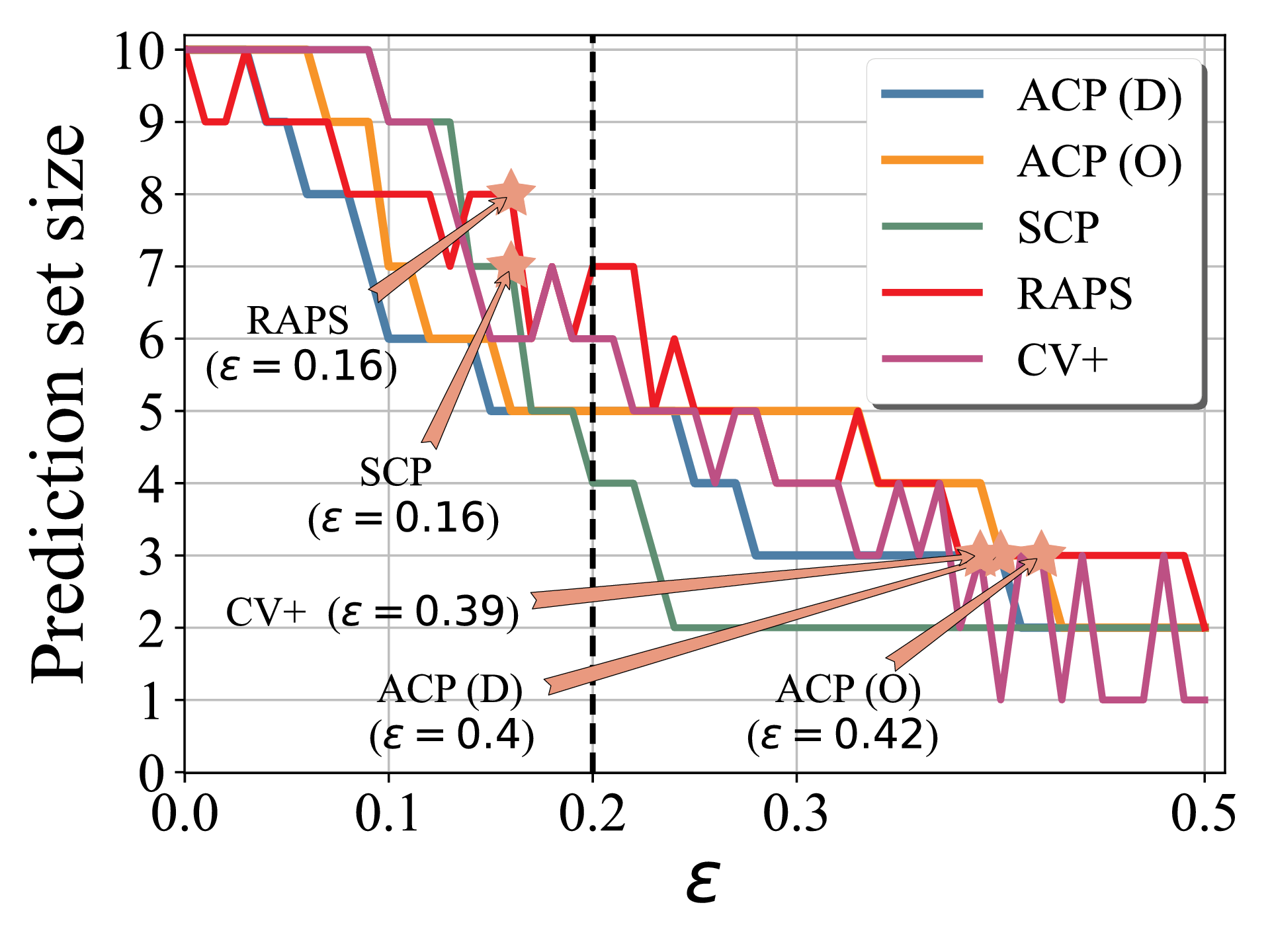}}}%
\end{subfigure}}
\usebox{\bigpicturebox}
\begin{minipage}[b][\ht\bigpicturebox][s]{.3\textwidth}
\begin{subfigure}{.3\textwidth}
\adjustbox{raise=-5pc}{\scalebox{4}[4]{\includegraphics[width=0.5\linewidth]{figures/CIFAR_13.JPG}}}
\end{subfigure}
\vfill
\begin{subfigure}[b]{.3\textwidth}
\begin{center}
\small 
\begin{tabular}{ll}
    \toprule
    \multicolumn{2}{c}{$\varepsilon = 0.2$}\\
    \toprule
     Method & Prediction set \\ \midrule
     ACP (D) & auto, cat, frog, \underline{\textbf{horse}}, truck \\ 
     ACP (O) & auto, cat, frog, \underline{\textbf{horse}}, truck \\ 
     SCP & auto, deer, frog, truck \\ 
     RAPS & plane, auto, bird, deer, frog, ship, truck \\ 
     CV+ & plane, auto, deer, frog, \underline{\textbf{horse}}, truck
 \end{tabular}
\end{center}
\end{subfigure}
\end{minipage}
\caption{Additional motivating example in CIFAR-10. For a typical significance $\varepsilon=0.2$, ACP (D) and ACP (O) yield the smallest prediction sets that contain the true label. ACP (D) and ACP (O) are the first to consider the true label.}
\label{examples_horse}
\end{figure*}

\begin{figure*}[!htb]
\centering
\sbox{\bigpicturebox}{%
  \begin{subfigure}[b]{.6\textwidth}
  \adjustbox{raise=-5pc}{\scalebox{2}[2]{\includegraphics[width=0.5\textwidth]{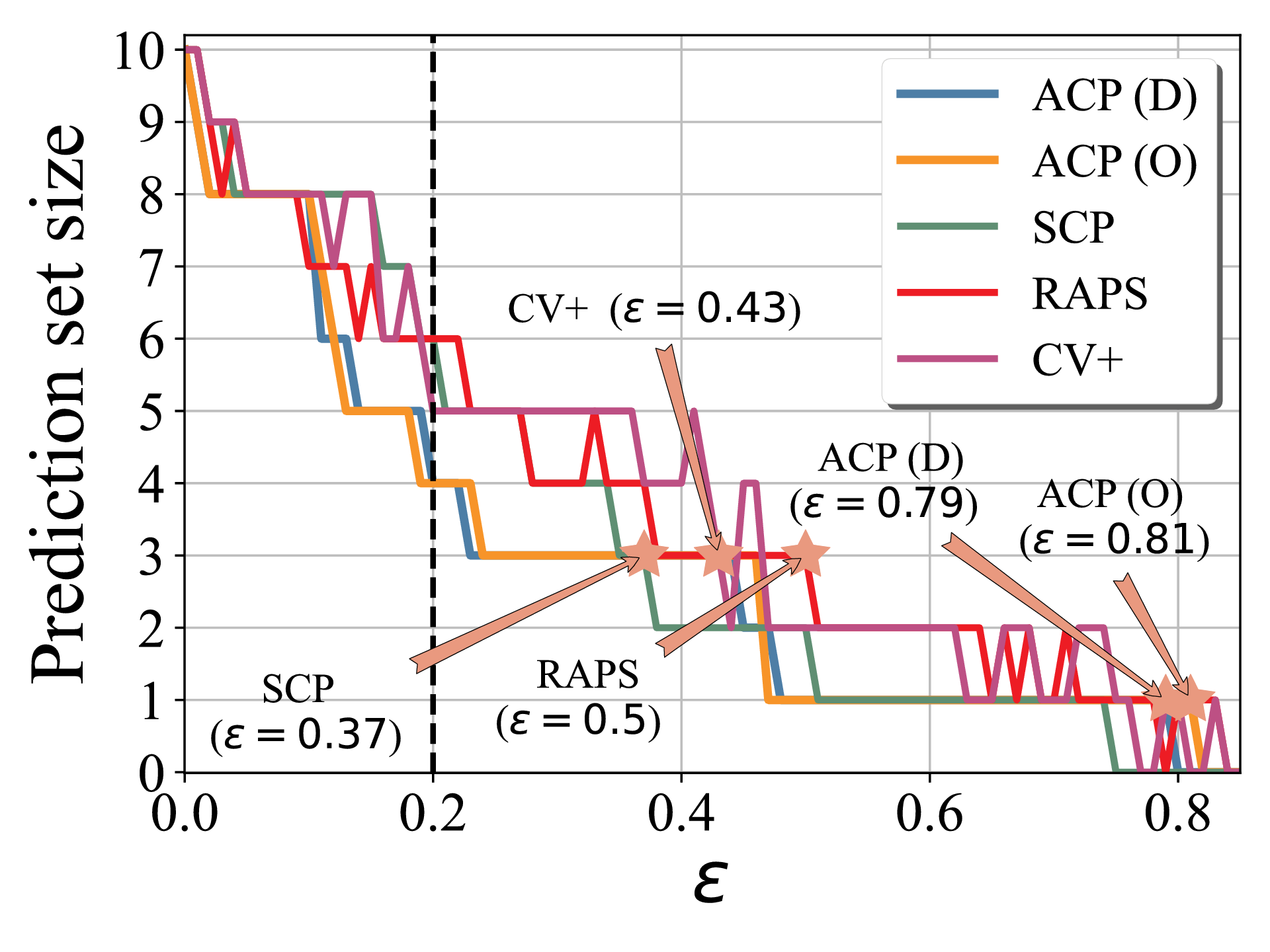}}}%
\end{subfigure}}
\usebox{\bigpicturebox}
\begin{minipage}[b][\ht\bigpicturebox][s]{.3\textwidth}
\begin{subfigure}{.3\textwidth}
\adjustbox{raise=-5pc}{\scalebox{4}[4]{\includegraphics[width=0.5\linewidth]{figures/CIFAR_43.JPG}}}
\end{subfigure}
\vfill
\begin{subfigure}[b]{.3\textwidth}
\begin{center}
\small 
\begin{tabular}{ll}
    \toprule
    \multicolumn{2}{c}{$\varepsilon = 0.2$}\\
    \toprule
     Method & Prediction set \\ \midrule
     ACP (D) & cat, deer, \underline{\textbf{frog}}, horse \\ 
     ACP (O) & cat, deer, \underline{\textbf{frog}}, horse \\  
     SCP & cat, deer, dog, \underline{\textbf{frog}}, horse, truck \\  
     RAPS & cat, deer, dog, \underline{\textbf{frog}}, horse, truck \\ 
     CV+ & cat, deer, dog, \underline{\textbf{frog}}, horse
 \end{tabular}
\end{center}
\end{subfigure}
\end{minipage}
\caption{Additional motivating example in CIFAR-10. For a typical significance $\varepsilon=0.2$, all methods include the true label, ACP (D) and ACP (O) yielding the smallest prediction sets. ACP (D) and ACP (O) are the first to consider the true label.}
\label{examples_frog}
\end{figure*}

\begin{figure*}[!htb]
\centering
\sbox{\bigpicturebox}{%
  \begin{subfigure}[b]{.6\textwidth}
  \adjustbox{raise=-5pc}{\scalebox{2}[2]{\includegraphics[width=0.5\textwidth]{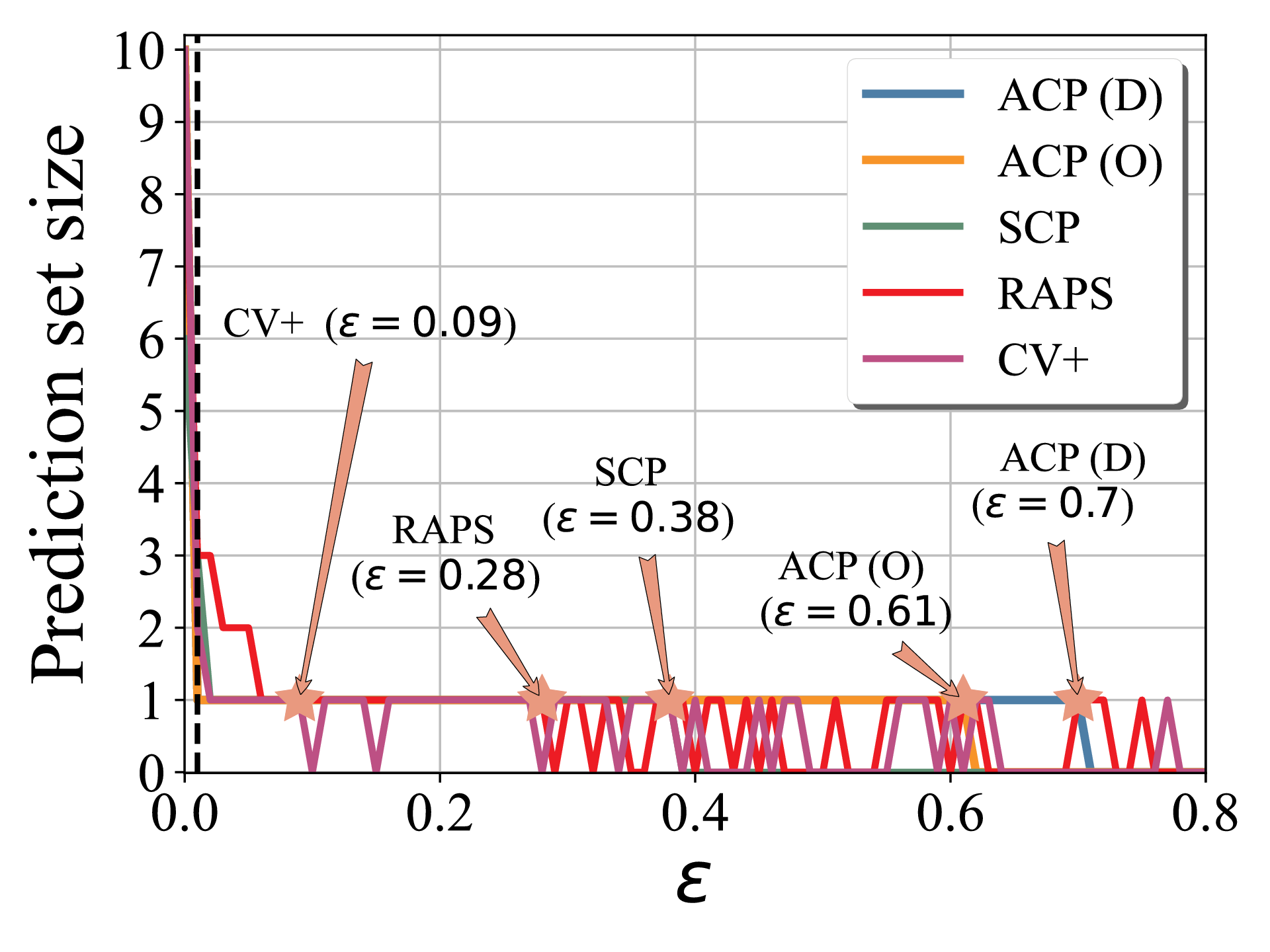}}}%
\end{subfigure}}
\usebox{\bigpicturebox}
\begin{minipage}[b][\ht\bigpicturebox][s]{.3\textwidth}
\begin{subfigure}{.3\textwidth}
\adjustbox{raise=-5pc}{\scalebox{4}[4]{\includegraphics[width=0.5\linewidth]{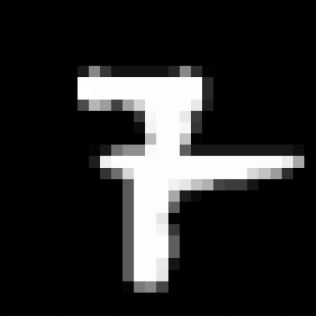}}}
\end{subfigure}
\vfill
\begin{subfigure}[b]{.3\textwidth}
\begin{center}
\small
\begin{tabular}{ll}
    \toprule
    \multicolumn{2}{c}{$\varepsilon = 0.01$}\\
    \toprule
     Method & Prediction set \\ \midrule
     ACP (D) & \underline{\textbf{7}} \\ 
     ACP (O) & \underline{\textbf{7}} \\ 
     SCP & 4, \underline{\textbf{7}}, 8 \\ 
     RAPS & 4, \underline{\textbf{7}}, 8 \\ 
     CV+ & 1, \underline{\textbf{7}}
 \end{tabular}
\end{center}
\end{subfigure}
\end{minipage}
\caption{Additional motivating example in MNIST. For a typical significance $\varepsilon=0.01$, all methods include the true label, ACP (D) and ACP (O) yielding the smallest prediction sets. ACP (D) and ACP (O) are also the first to consider the true label.}
\label{}
\end{figure*}

\begin{figure*}[!htb]
\centering
\sbox{\bigpicturebox}{%
  \begin{subfigure}[b]{.6\textwidth}
  \adjustbox{raise=-5pc}{\scalebox{2}[2]{\includegraphics[width=0.5\textwidth]{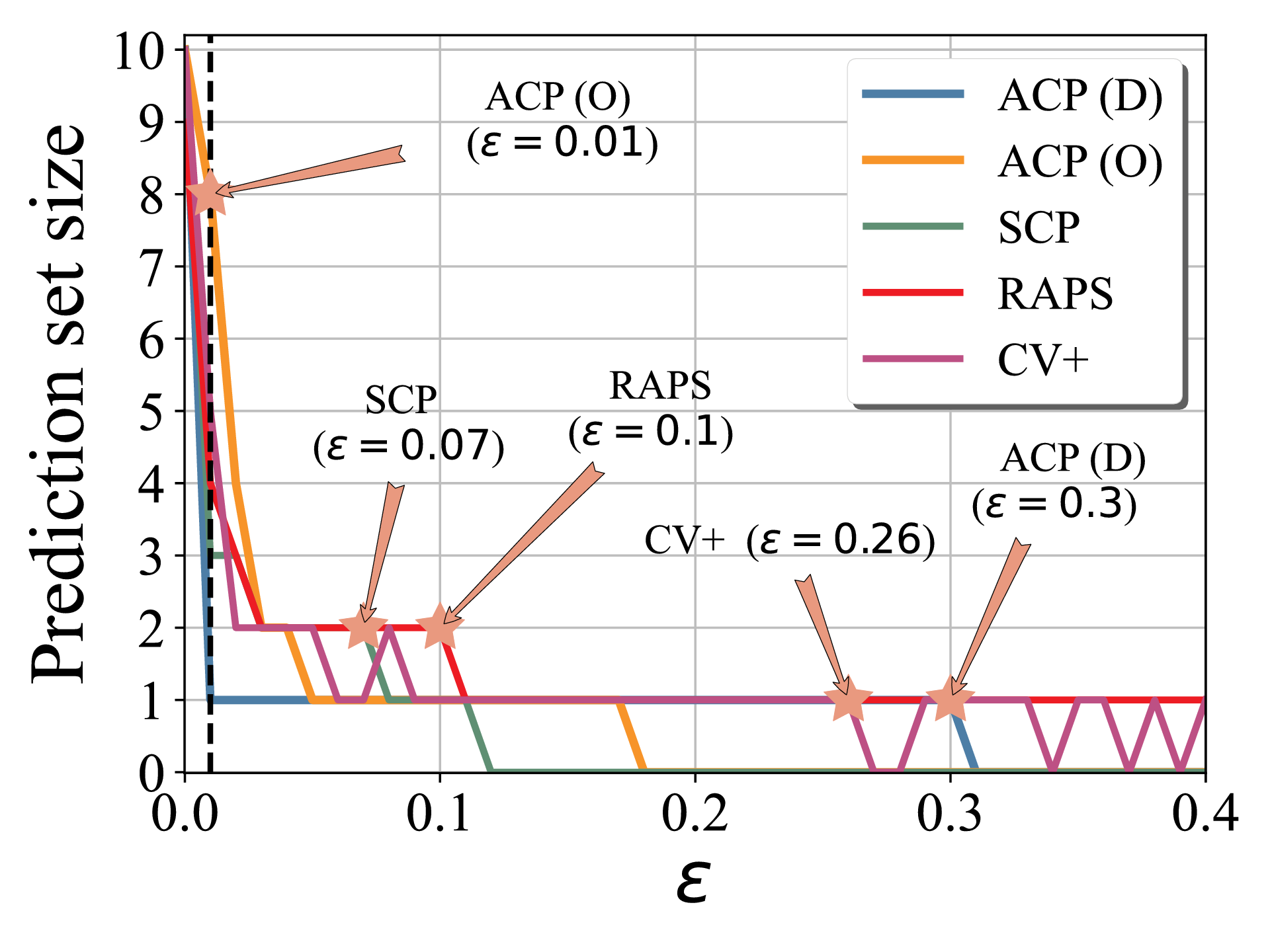}}}%
\end{subfigure}}
\usebox{\bigpicturebox}
\begin{minipage}[b][\ht\bigpicturebox][s]{.3\textwidth}
\begin{subfigure}{.3\textwidth}
\adjustbox{raise=-5pc}{\scalebox{4}[4]{\includegraphics[width=0.5\linewidth]{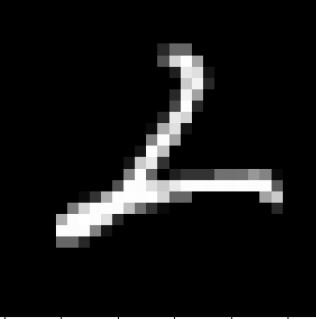}}}
\end{subfigure}
\vfill
\begin{subfigure}[b]{.3\textwidth}
\begin{center}
\small
\begin{tabular}{ll}
    \toprule
    \multicolumn{2}{c}{$\varepsilon = 0.01$}\\
    \toprule
     Method & Prediction set \\ \midrule
     ACP (D) & \underline{\textbf{2}} \\ 
     ACP (O) & 1, \underline{\textbf{2}}, 3, 5, 6, 7, 8, 9 \\ 
     SCP & 1, \underline{\textbf{2}}, 7 \\ 
     RAPS & 1, \underline{\textbf{2}}, 6, 7 \\ 
     CV+ & \underline{\textbf{2}}, 3, 6
 \end{tabular}
\end{center}
\end{subfigure}
\end{minipage}
\caption{Additional motivating example in MNIST. For a typical significance $\varepsilon=0.01$, all methods include the true label, ACP (D) yielding the smallest prediction sets. ACP (D) is the first method to consider the true label, followed by CV+. ACP (O) builds the largest set and is the \textit{slowest} to consider the true label.}
\label{}
\end{figure*}

\subsection{Additional curves}\label{appendix:additional_curves}
\Cref{fig:all_curves} plots the average prediction set size w.r.t the significance $\varepsilon$ for all combinations of settings (\mlpA, \mlpB, \mlpC, LR, CNN) and datasets (MNIST, CIFAR-10, US Census).

\begin{figure*}[htb!]
        \centering
        \begin{subfigure}[hb]{0.16\textwidth}
             \includegraphics[width=\linewidth]{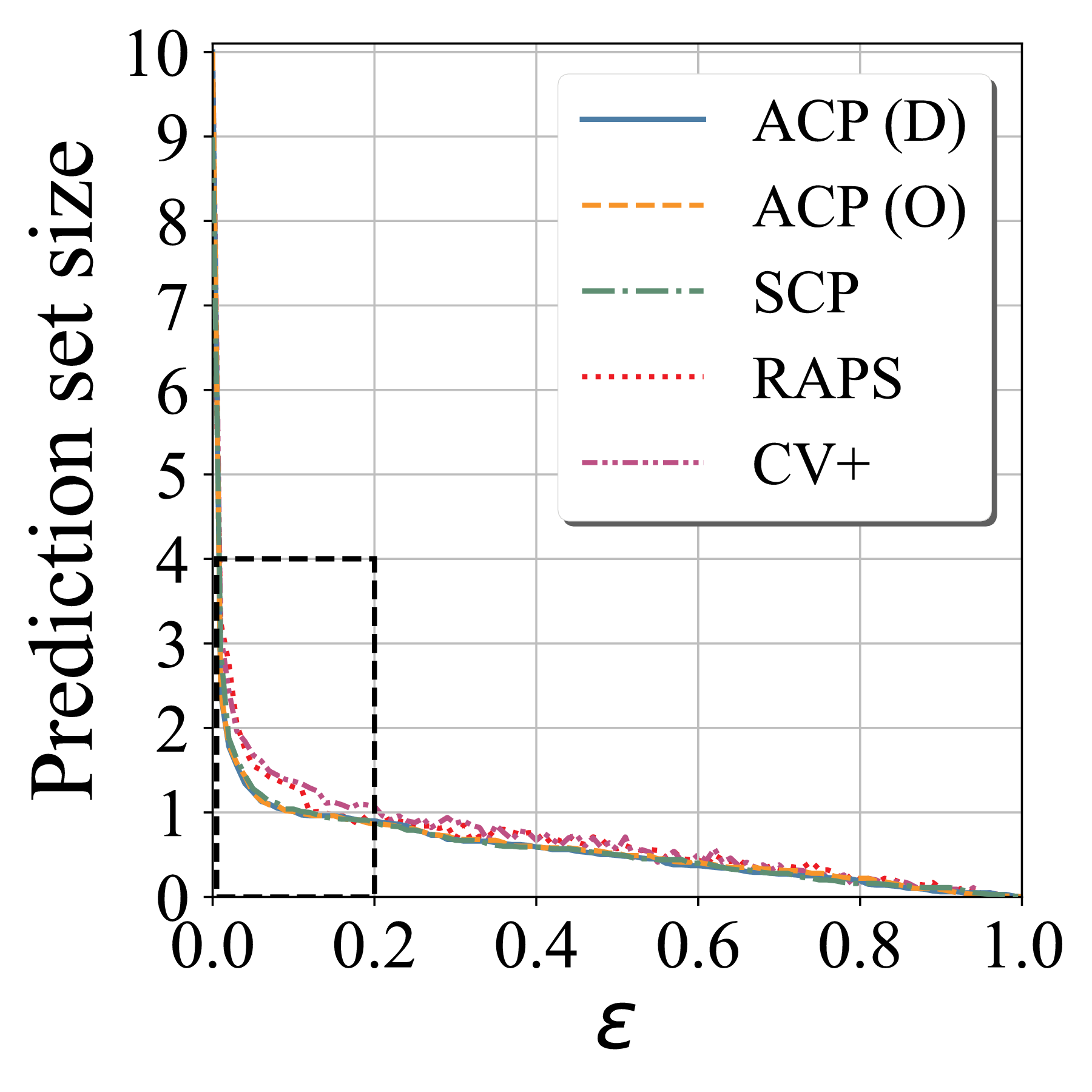}
             \caption*{MNIST (\mlpA)}
        \end{subfigure} 
        \begin{subfigure}[hb]{0.16\textwidth}
                \includegraphics[width=\linewidth]{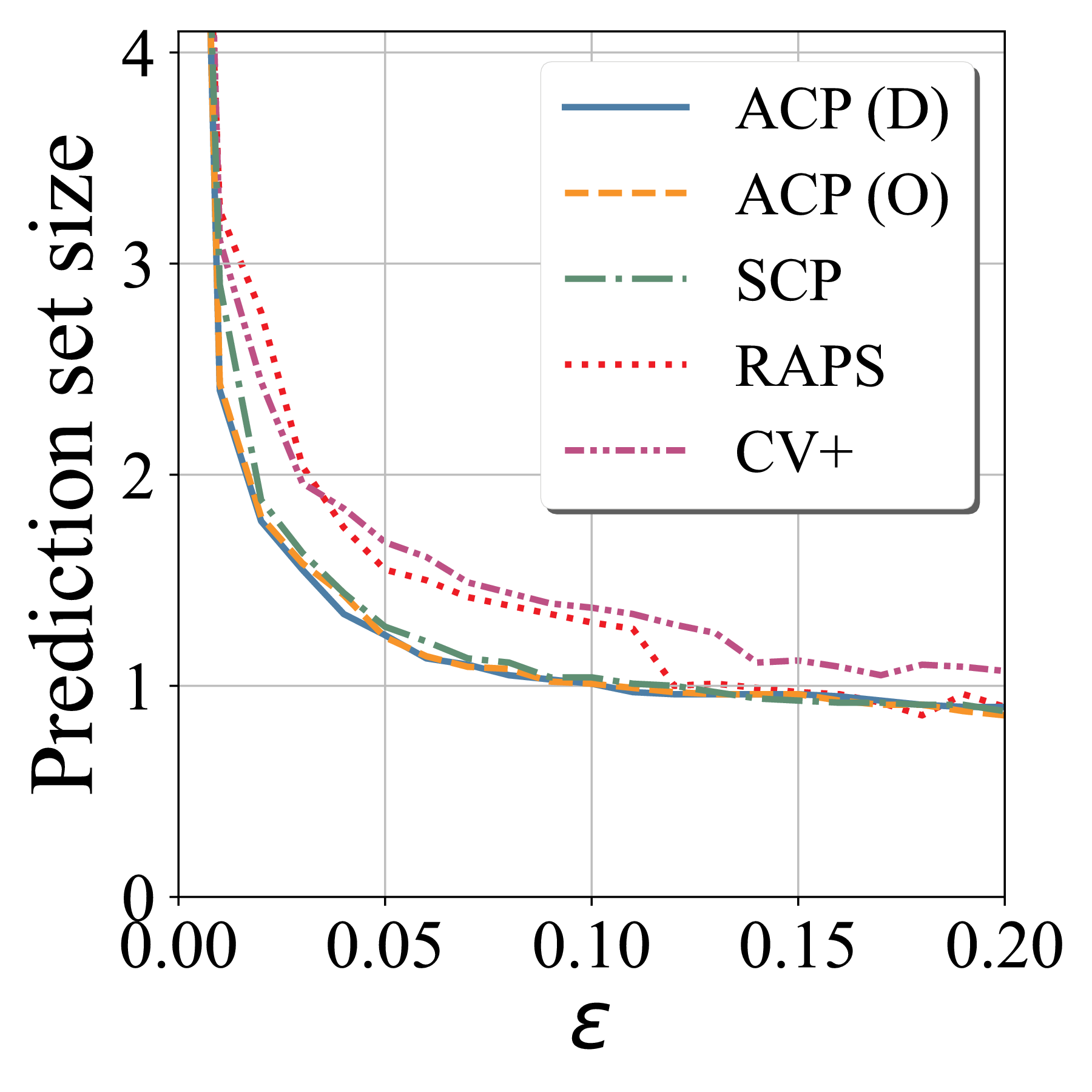}
                \caption*{MNIST (\mlpA)}
\end{subfigure} 
\begin{subfigure}[hb]{0.16\textwidth}
     \includegraphics[width=\linewidth]{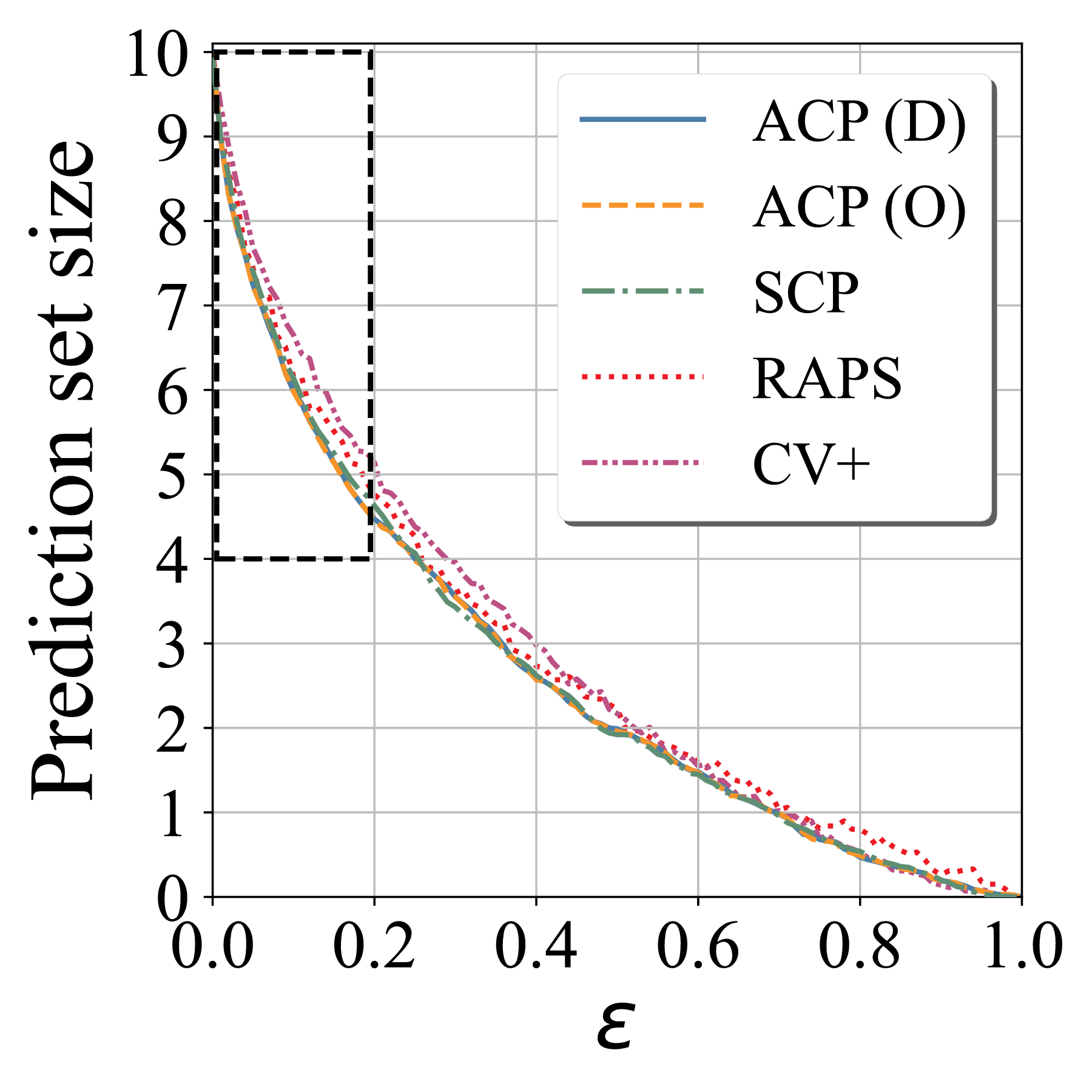}
     \caption*{CIFAR (\mlpA)}
\end{subfigure} 
\begin{subfigure}[hb]{0.16\textwidth}
        \includegraphics[width=\linewidth]{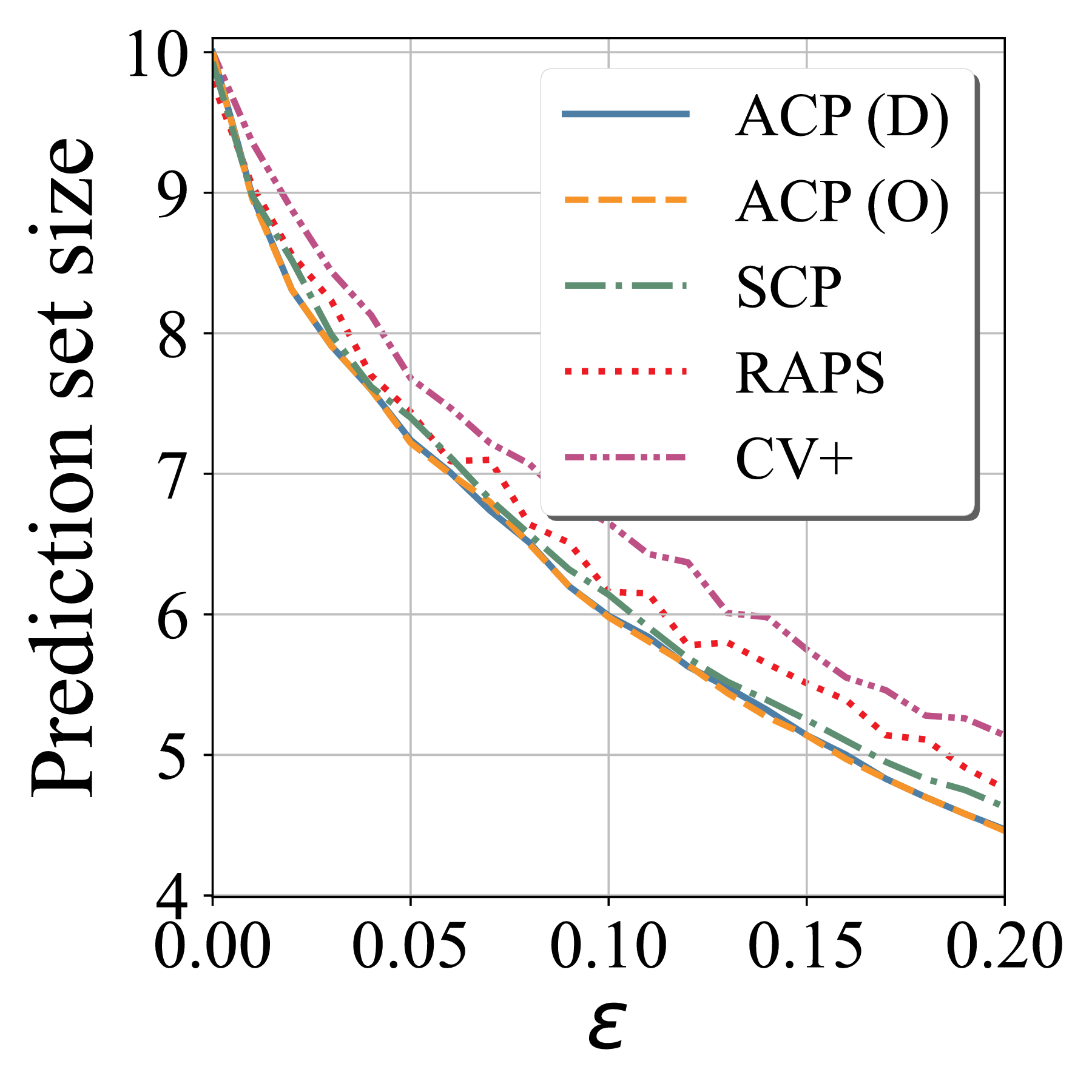}
        \caption*{CIFAR (\mlpA)}
\end{subfigure}
\begin{subfigure}[hb]{0.16\textwidth}
     \includegraphics[width=\linewidth]{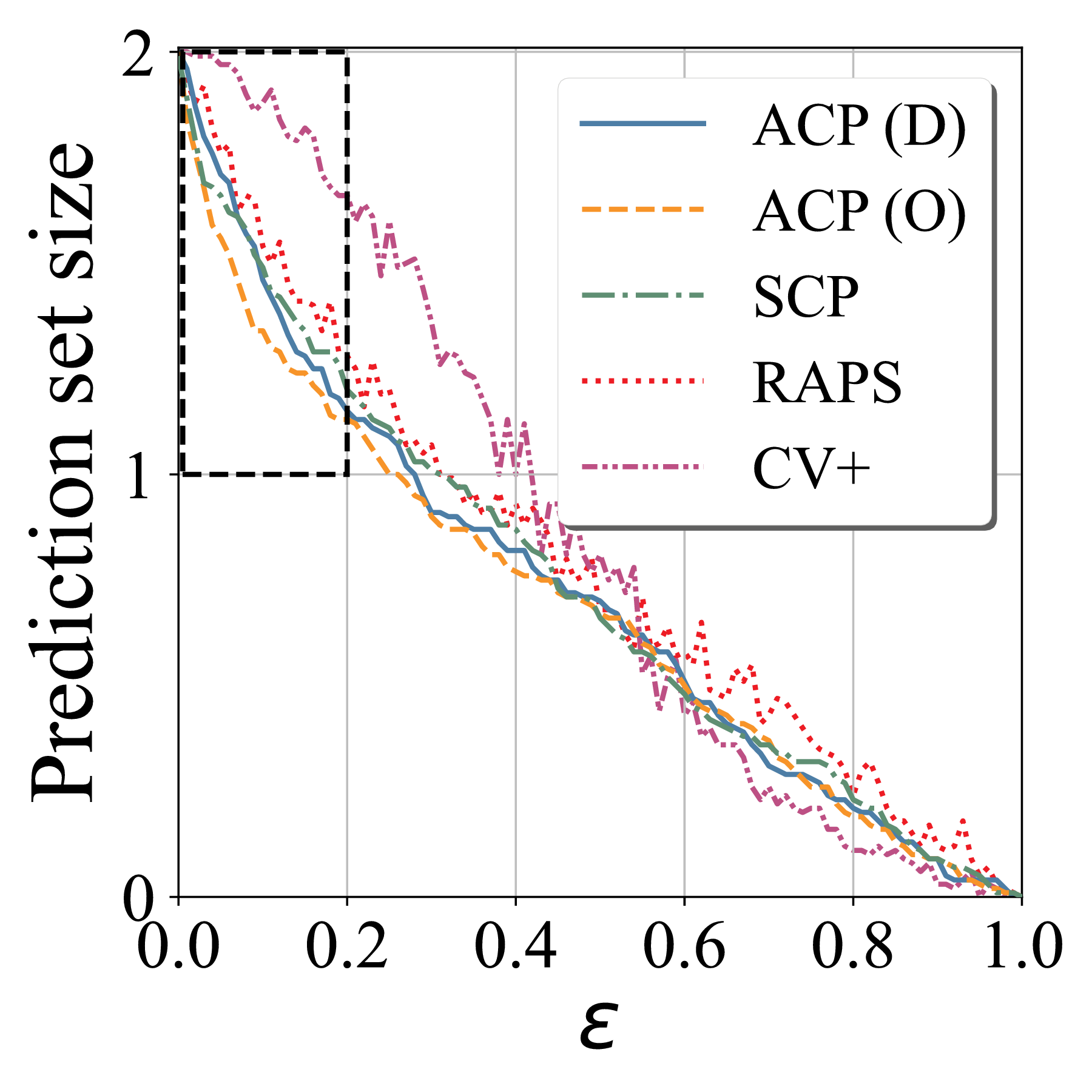}
     \caption*{US Census (\mlpA)}
\end{subfigure} 
\begin{subfigure}[hb]{0.16\textwidth}
        \includegraphics[width=\linewidth]{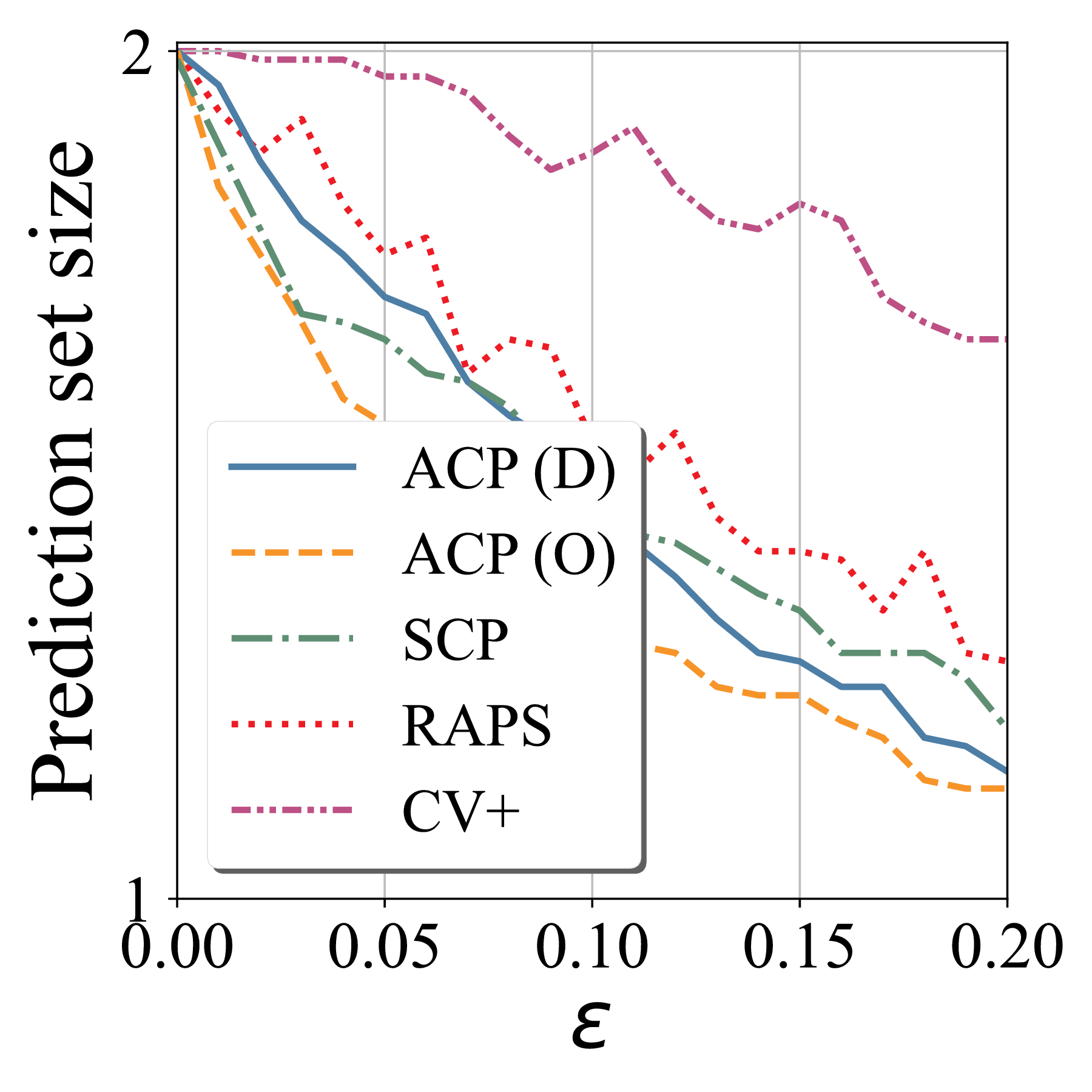}
        \caption*{US Census (\mlpA)}
\end{subfigure} \\
\begin{subfigure}[hb]{0.16\textwidth}
     \includegraphics[width=\linewidth]{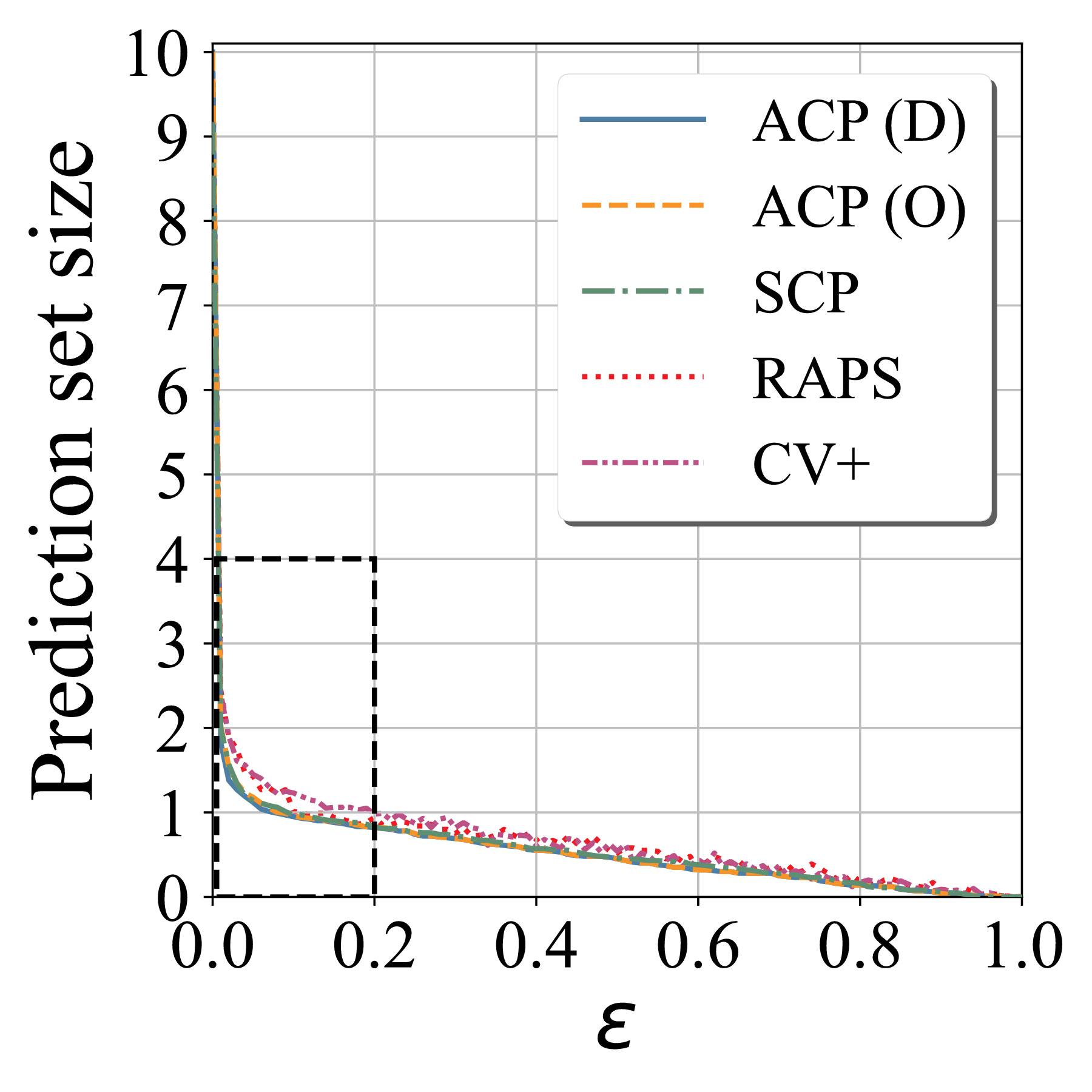}
     \caption*{MNIST (\mlpB)}
\end{subfigure} 
\begin{subfigure}[hb]{0.16\textwidth}
        \includegraphics[width=\linewidth]{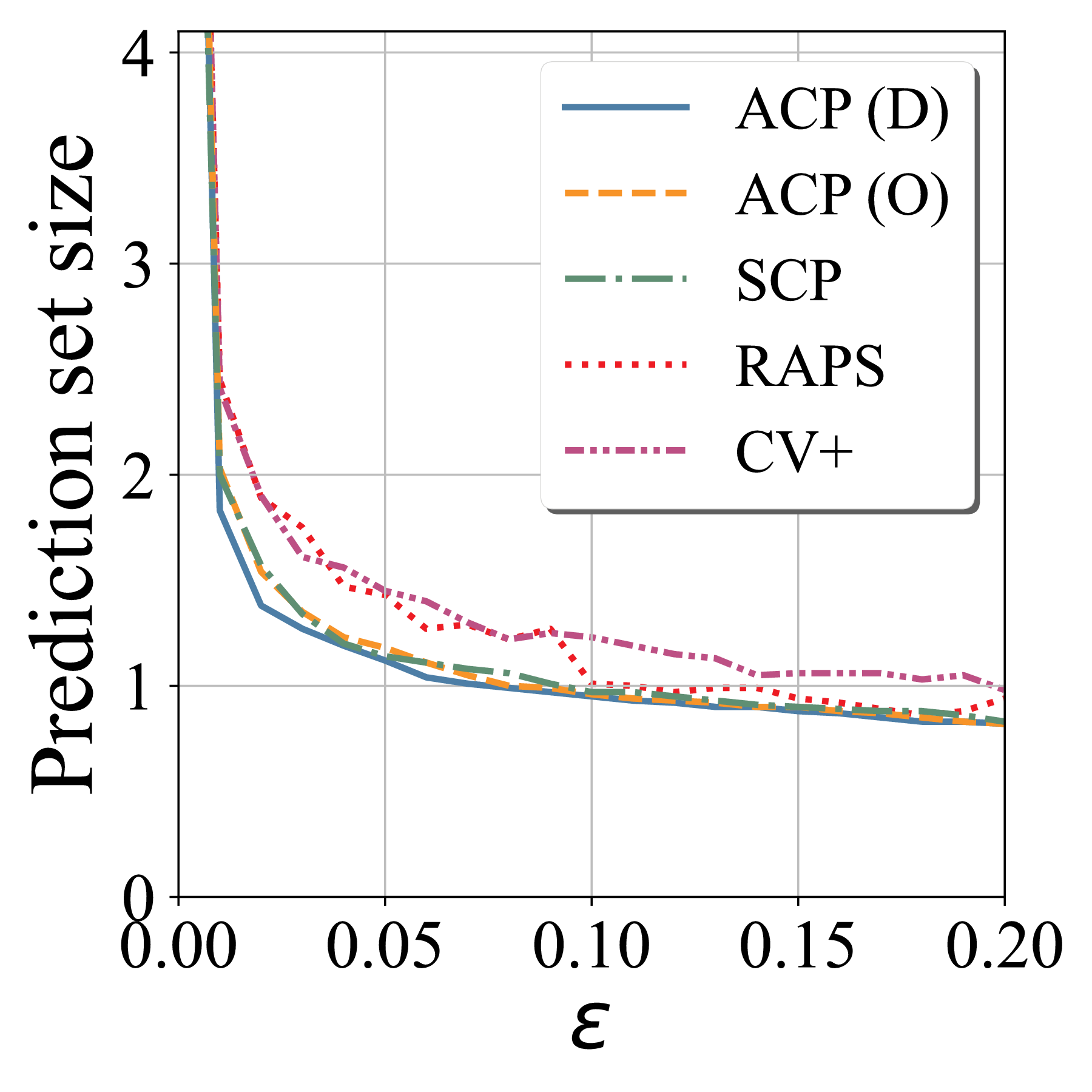}
        \caption*{MNIST (\mlpB)}
\end{subfigure} 
\begin{subfigure}[hb]{0.16\textwidth}
     \includegraphics[width=\linewidth]{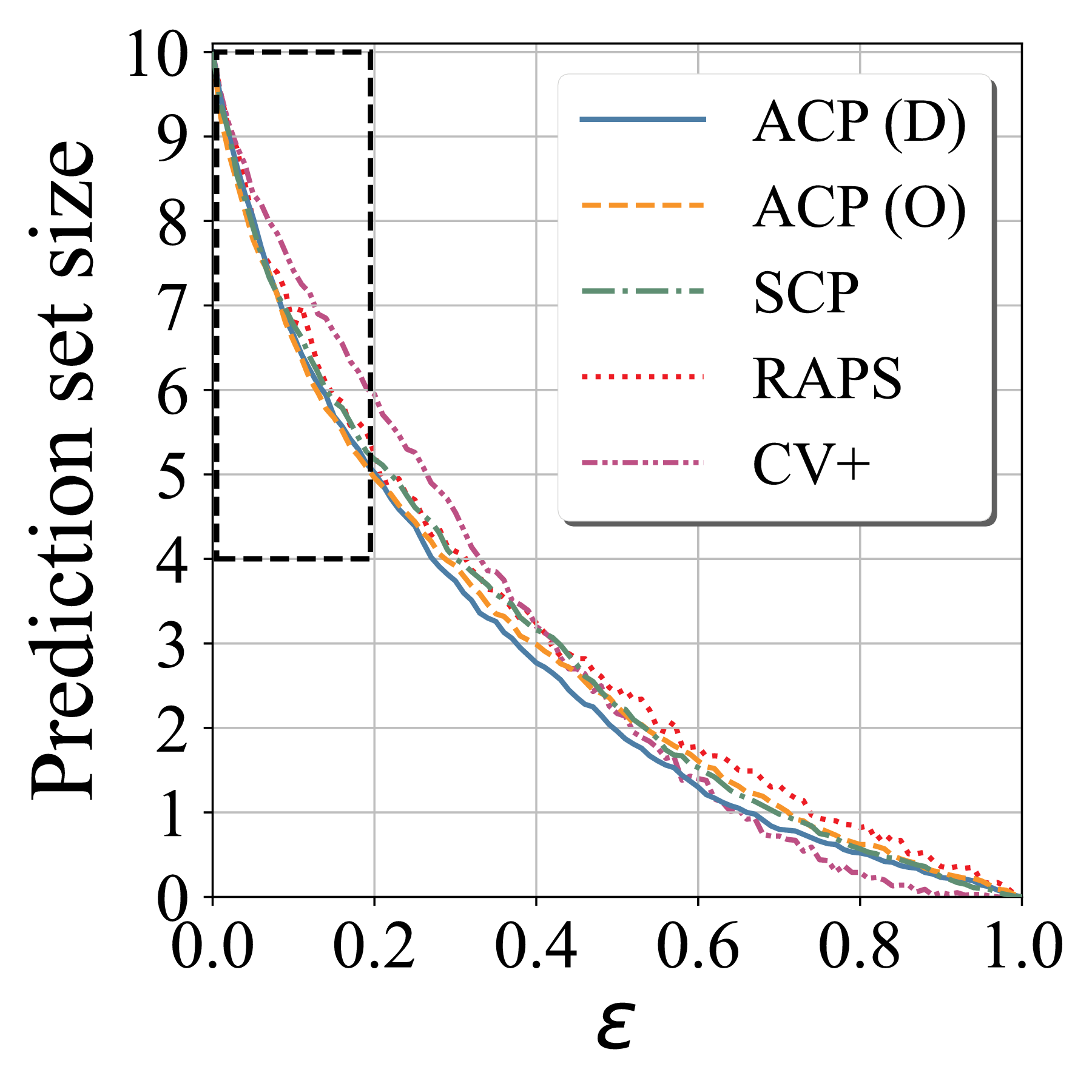}
     \caption*{CIFAR (\mlpB)}
\end{subfigure} 
\begin{subfigure}[hb]{0.16\textwidth}
        \includegraphics[width=\linewidth]{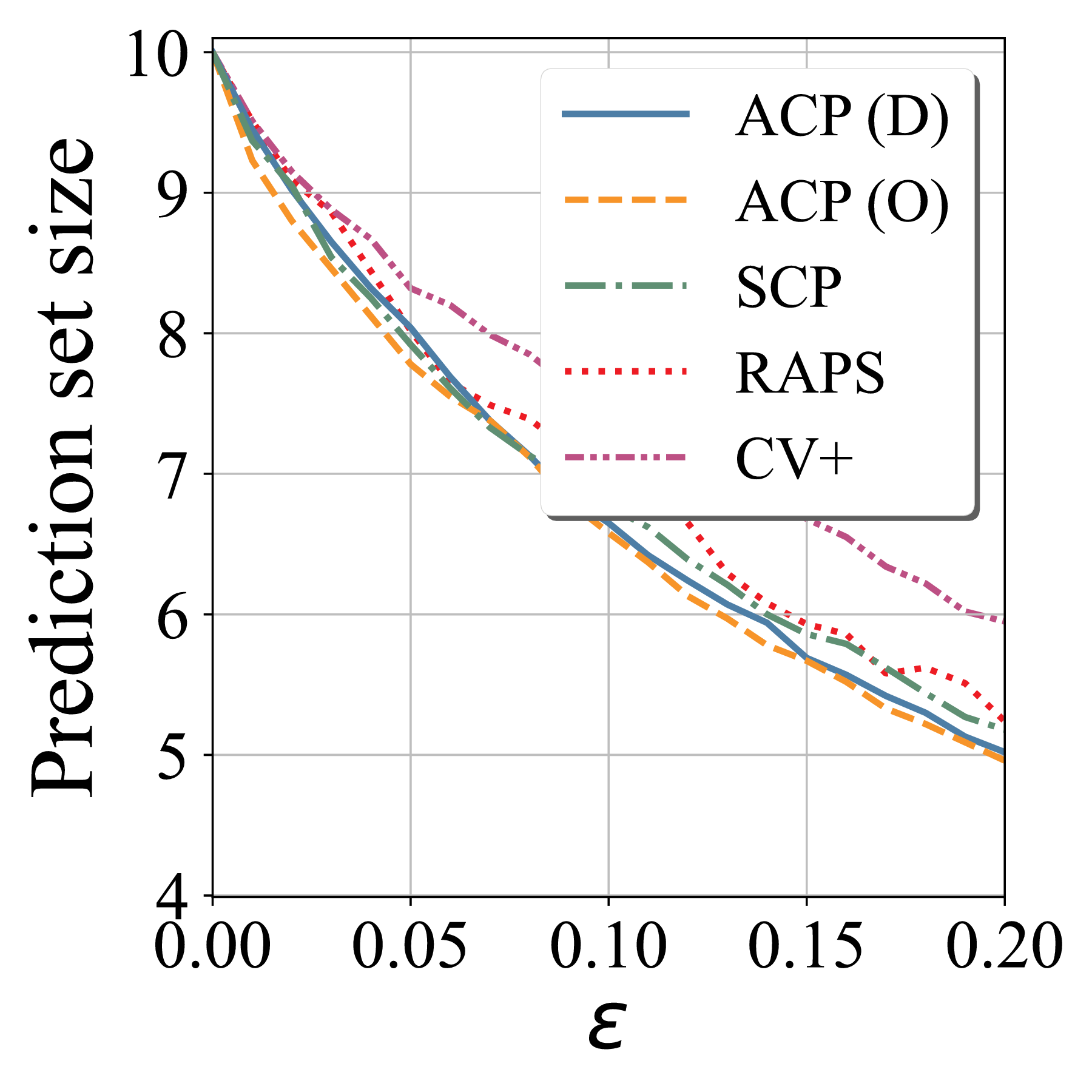}
        \caption*{CIFAR (\mlpB)}
\end{subfigure}
\begin{subfigure}[hb]{0.16\textwidth}
     \includegraphics[width=\linewidth]{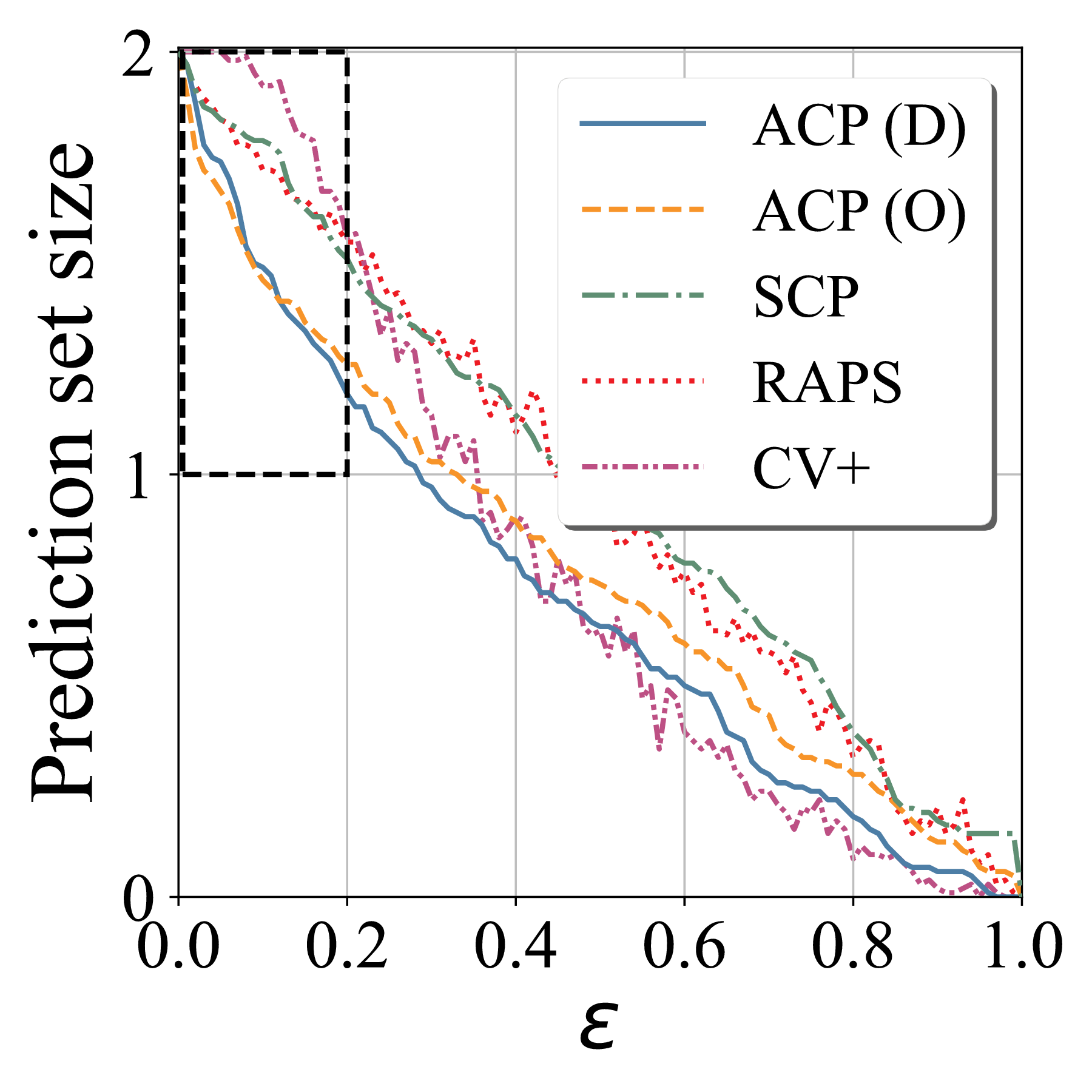}
     \caption*{US Census (\mlpB)}
\end{subfigure} 
\begin{subfigure}[hb]{0.16\textwidth}
        \includegraphics[width=\linewidth]{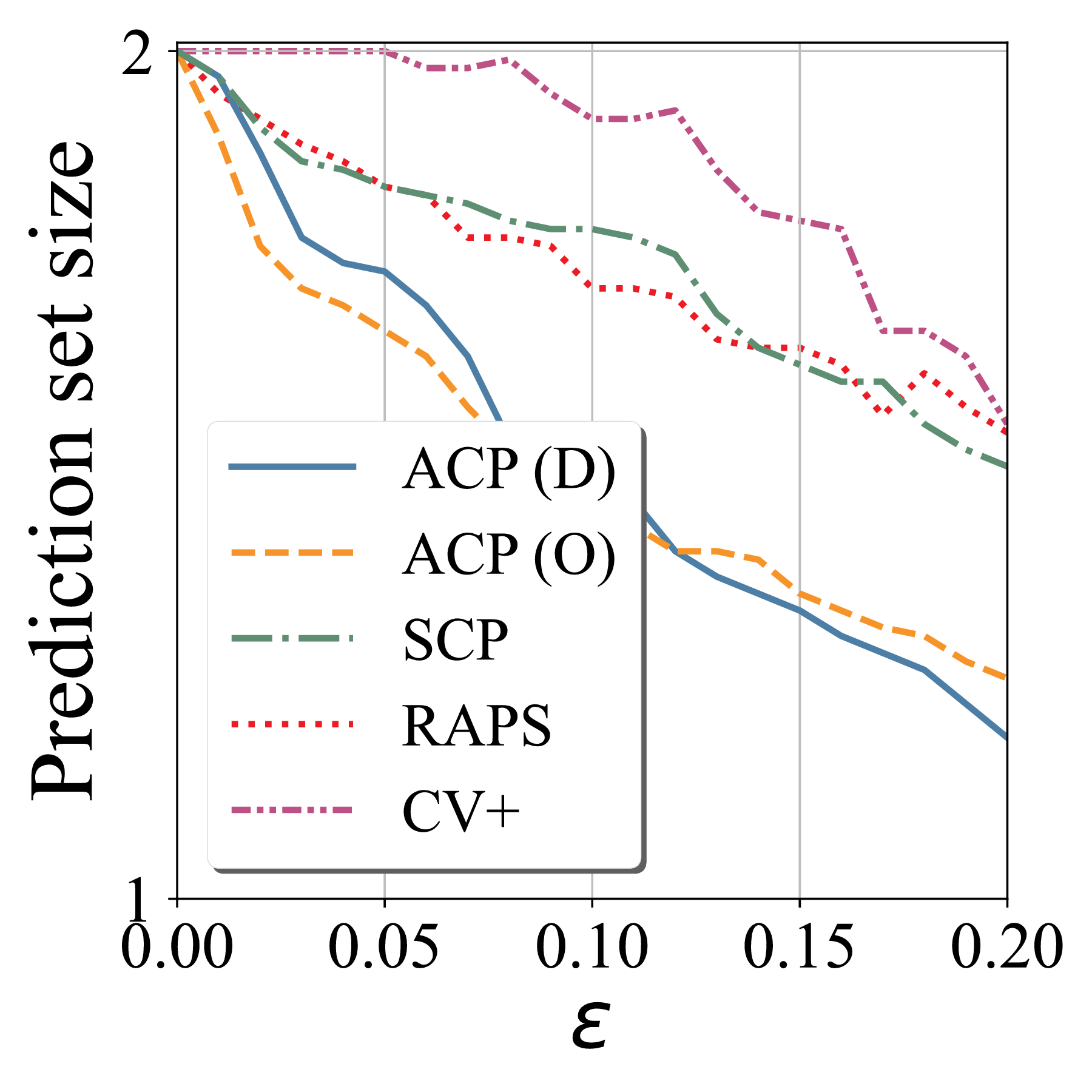}
        \caption*{US Census (\mlpB)}
\end{subfigure} 
\begin{subfigure}[hb]{0.16\textwidth}
     \includegraphics[width=\linewidth]{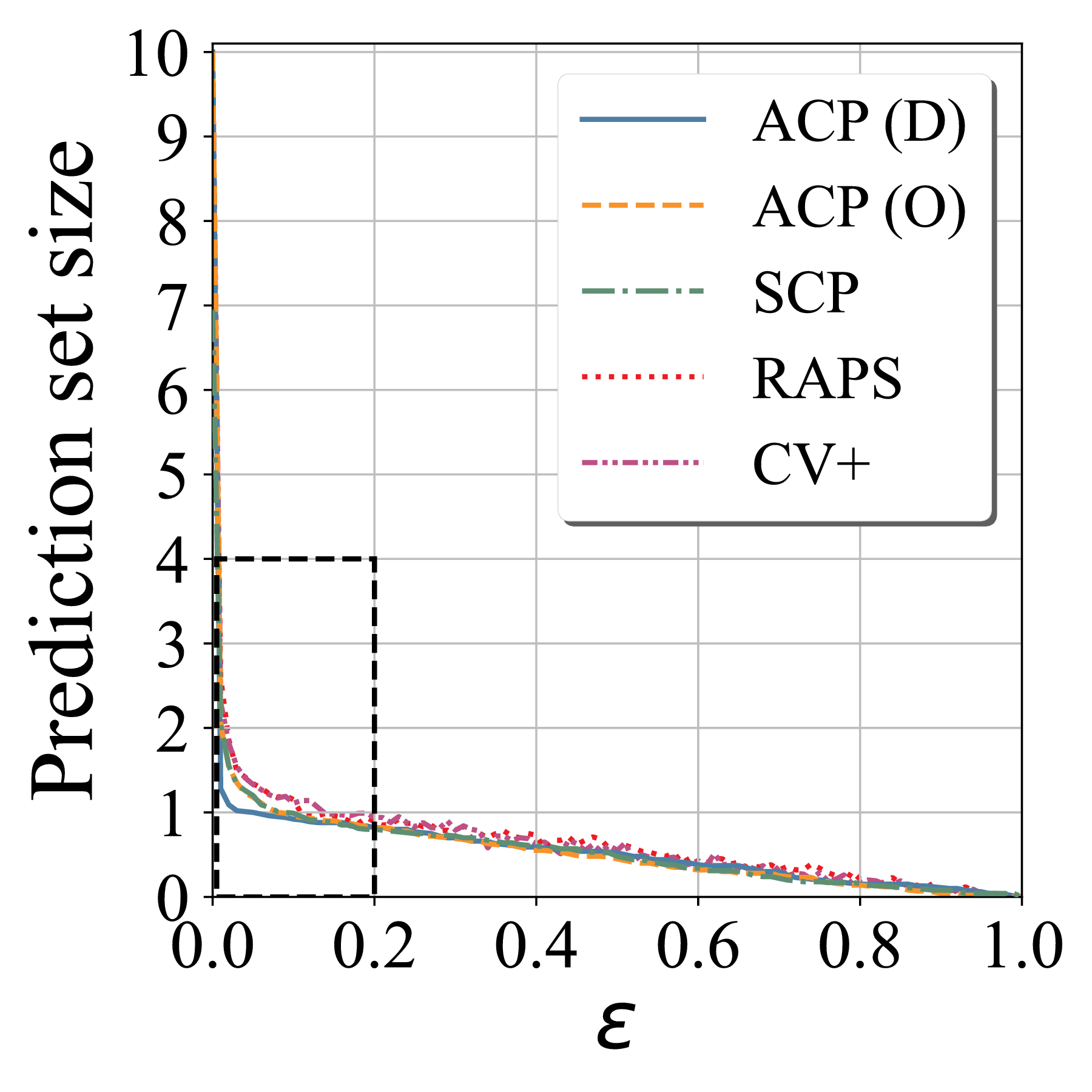}
     \caption*{MNIST (\mlpC)}
\end{subfigure} 
\begin{subfigure}[hb]{0.16\textwidth}
        \includegraphics[width=\linewidth]{figures/MNIST_global_sizes_short_C.png}
        \caption*{MNIST (\mlpC)}
\end{subfigure} 
\begin{subfigure}[hb]{0.16\textwidth}
     \includegraphics[width=\linewidth]{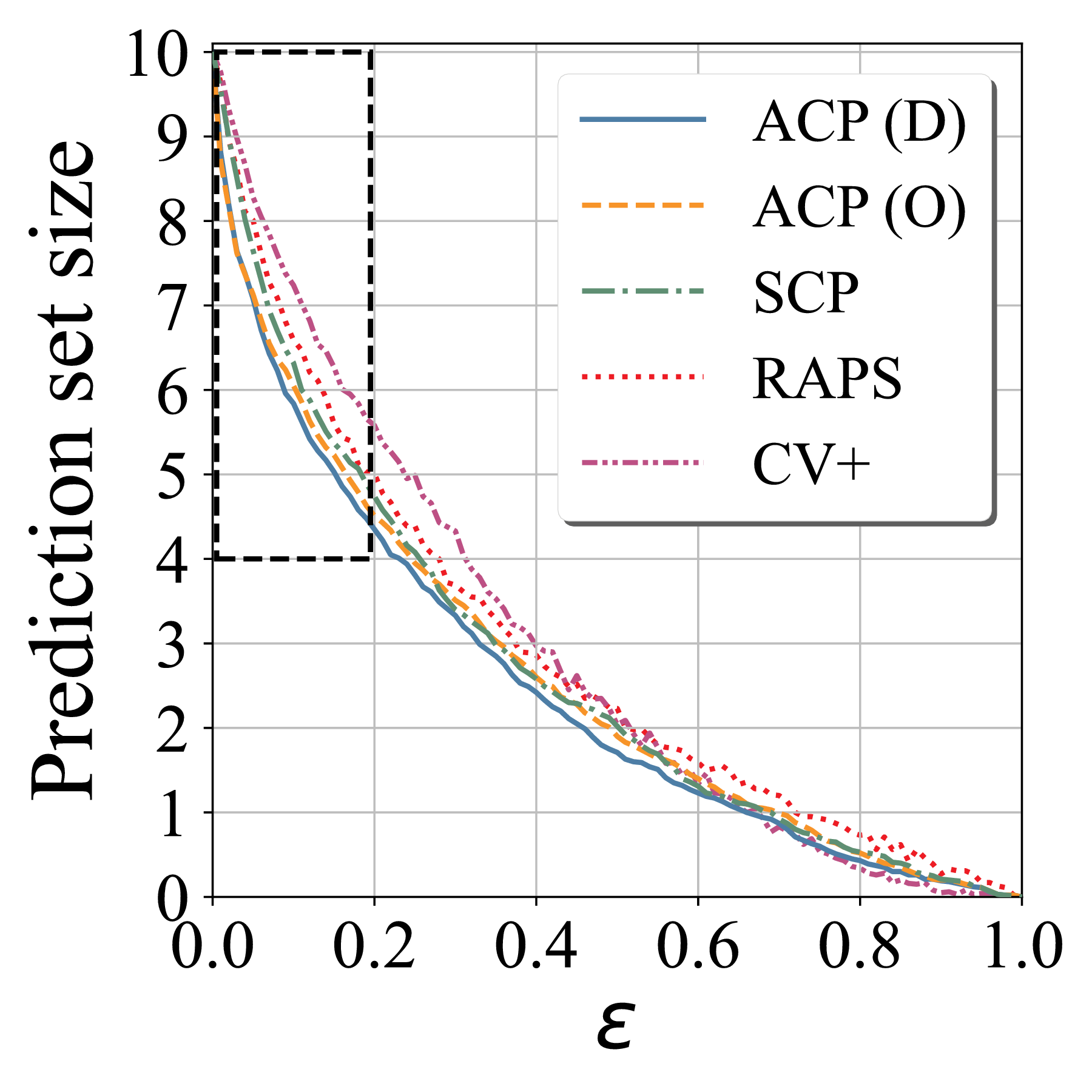}
     \caption*{CIFAR (\mlpC)}
\end{subfigure} 
\begin{subfigure}[hb]{0.16\textwidth}
        \includegraphics[width=\linewidth]{figures/CIFAR_global_sizes_short_C.png}
        \caption*{CIFAR (\mlpC)}
\end{subfigure}
\begin{subfigure}[hb]{0.16\textwidth}
     \includegraphics[width=\linewidth]{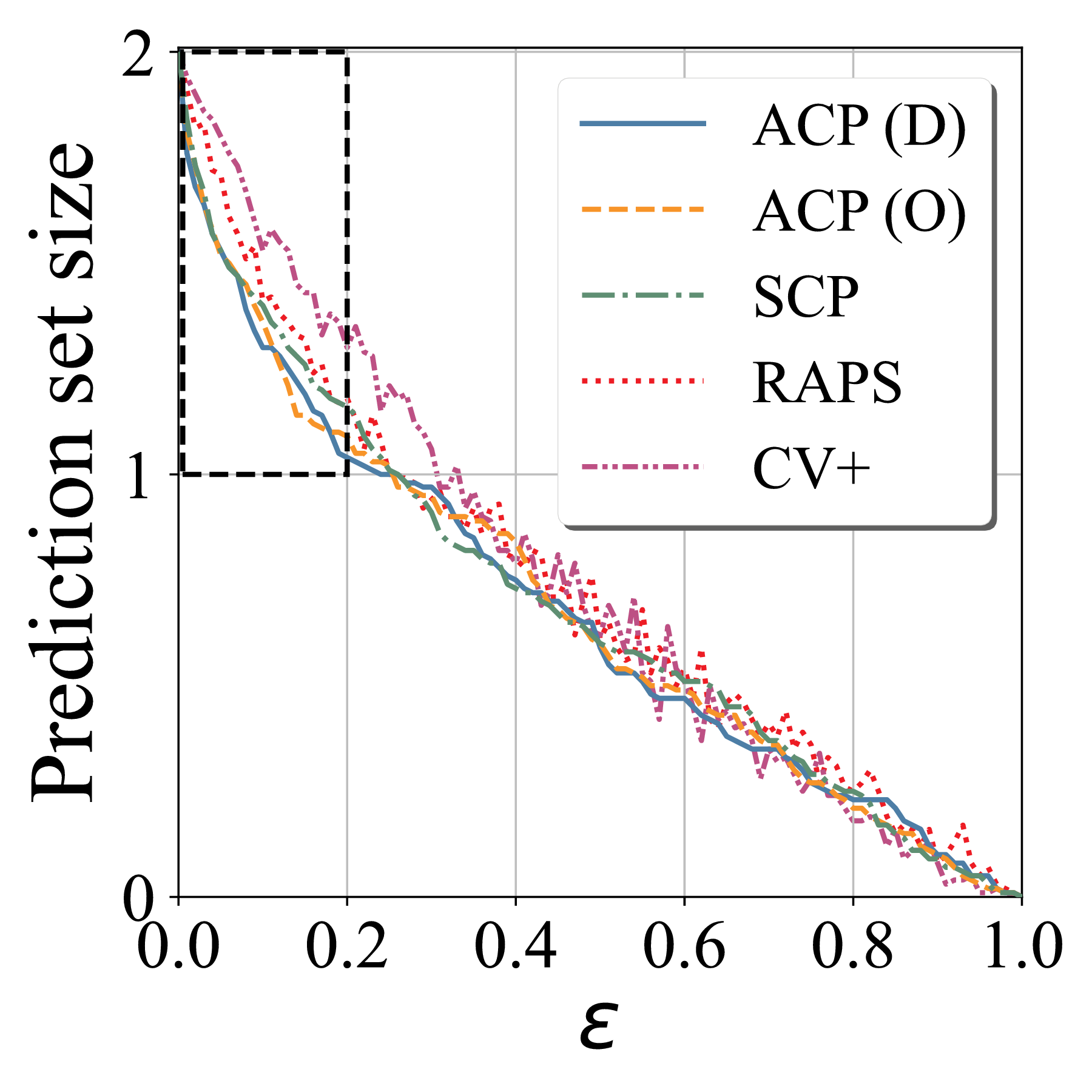}
     \caption*{US Census (\mlpC)}
\end{subfigure} 
\begin{subfigure}[hb]{0.16\textwidth}
        \includegraphics[width=\linewidth]{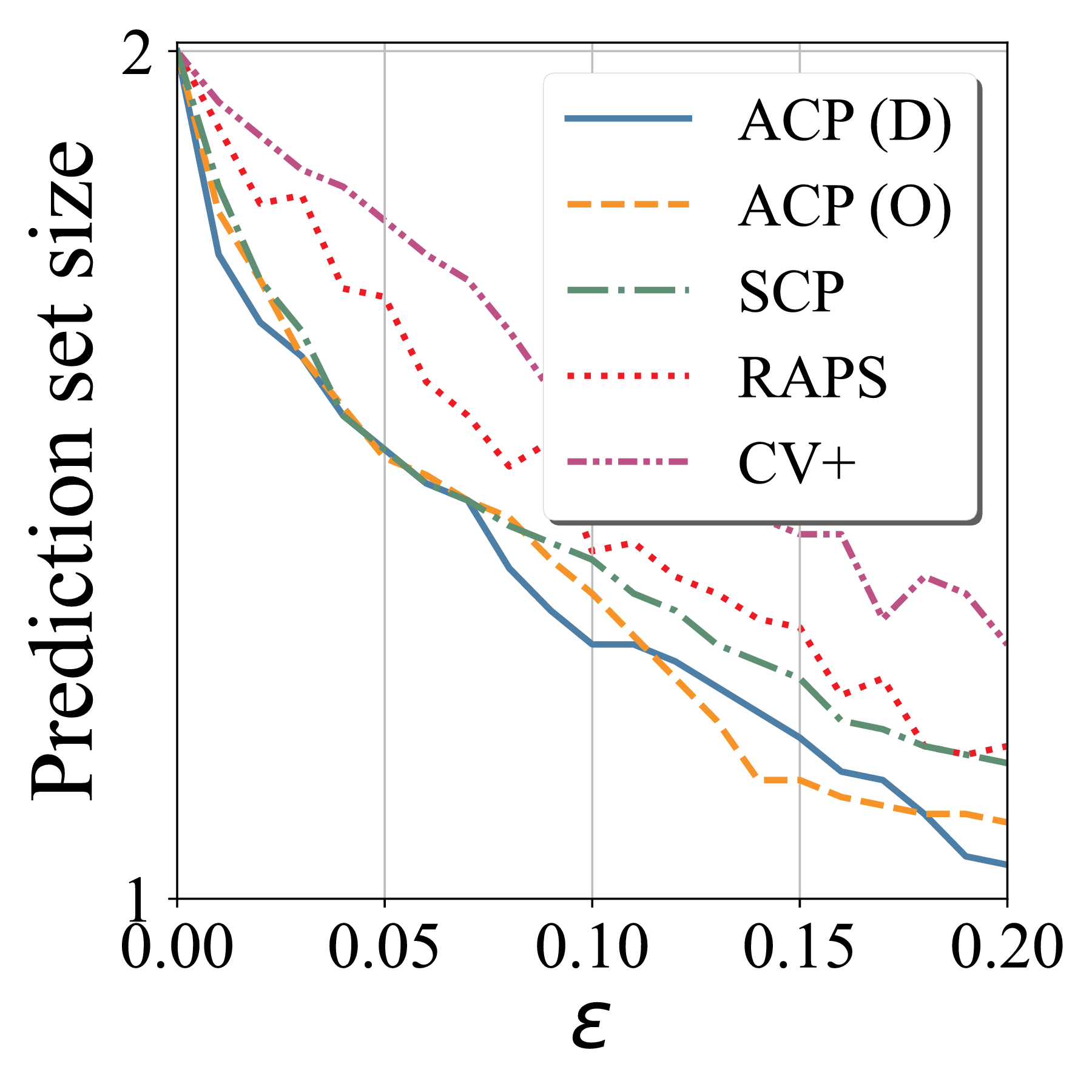}
        \caption*{US Census (\mlpC)}
\end{subfigure} 
\begin{subfigure}[hb]{0.16\textwidth}
     \includegraphics[width=\linewidth]{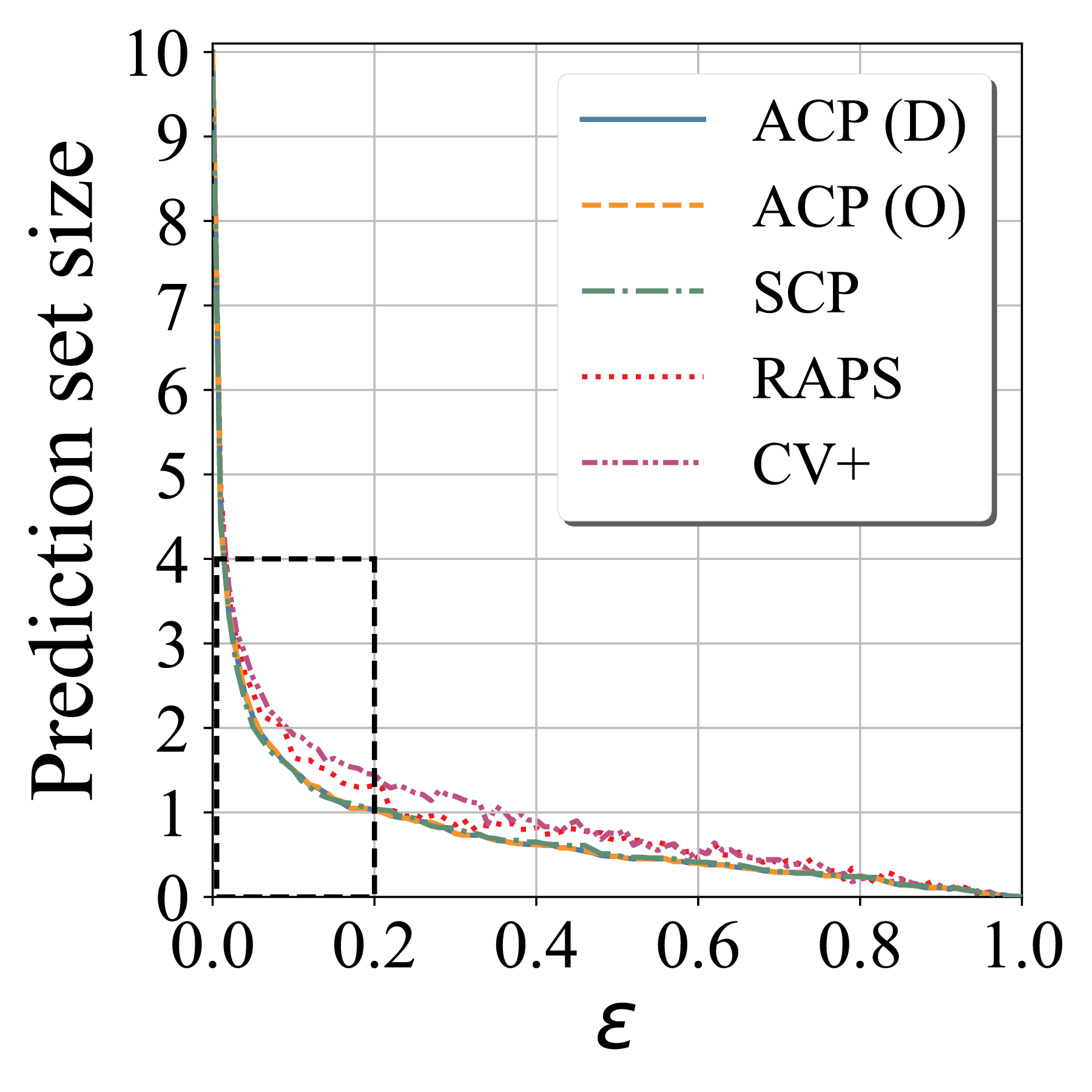}
     \caption*{MNIST (LR)}
\end{subfigure} 
\begin{subfigure}[hb]{0.16\textwidth}
        \includegraphics[width=\linewidth]{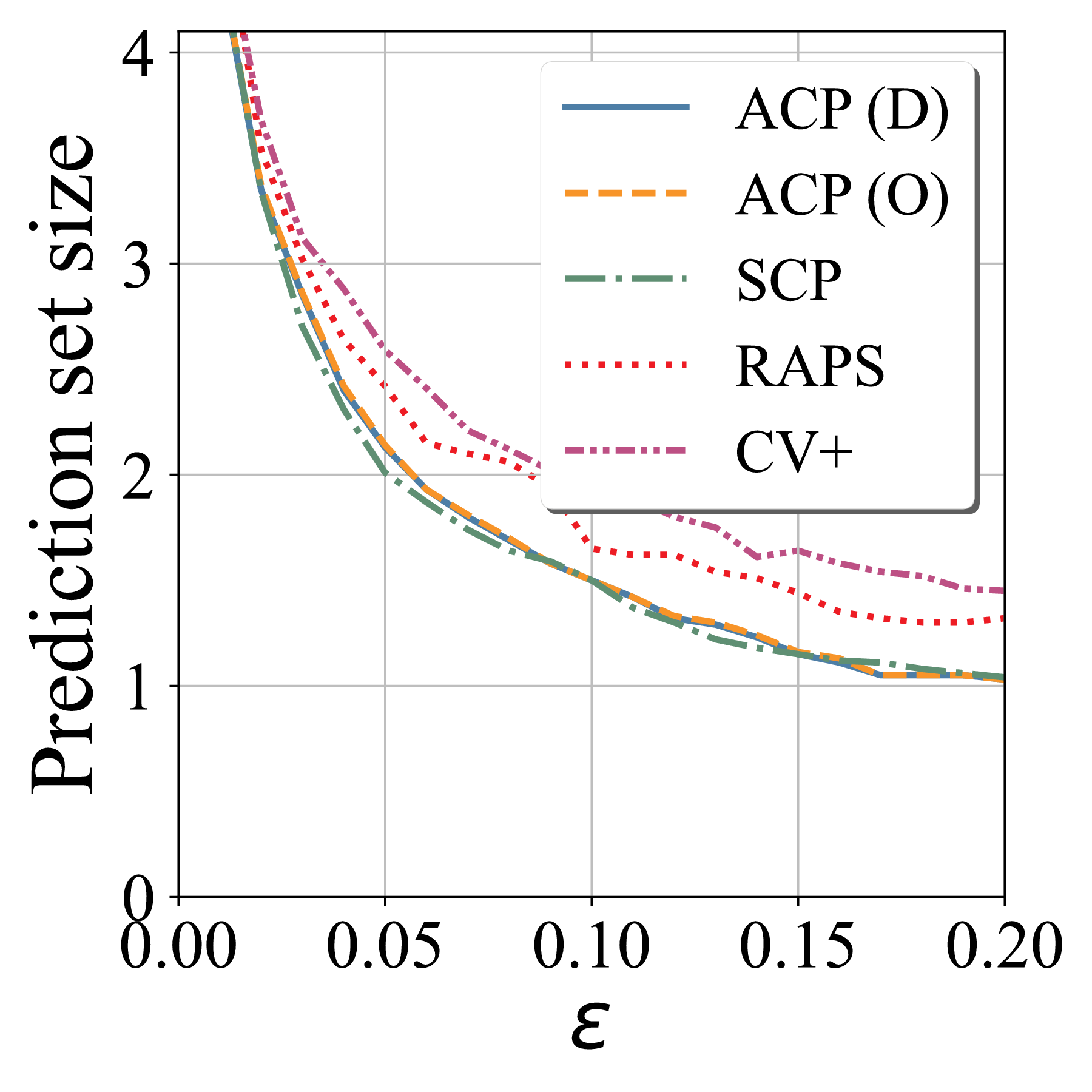}
        \caption*{MNIST (LR)}
\end{subfigure} 
\begin{subfigure}[hb]{0.16\textwidth}
     \includegraphics[width=\linewidth]{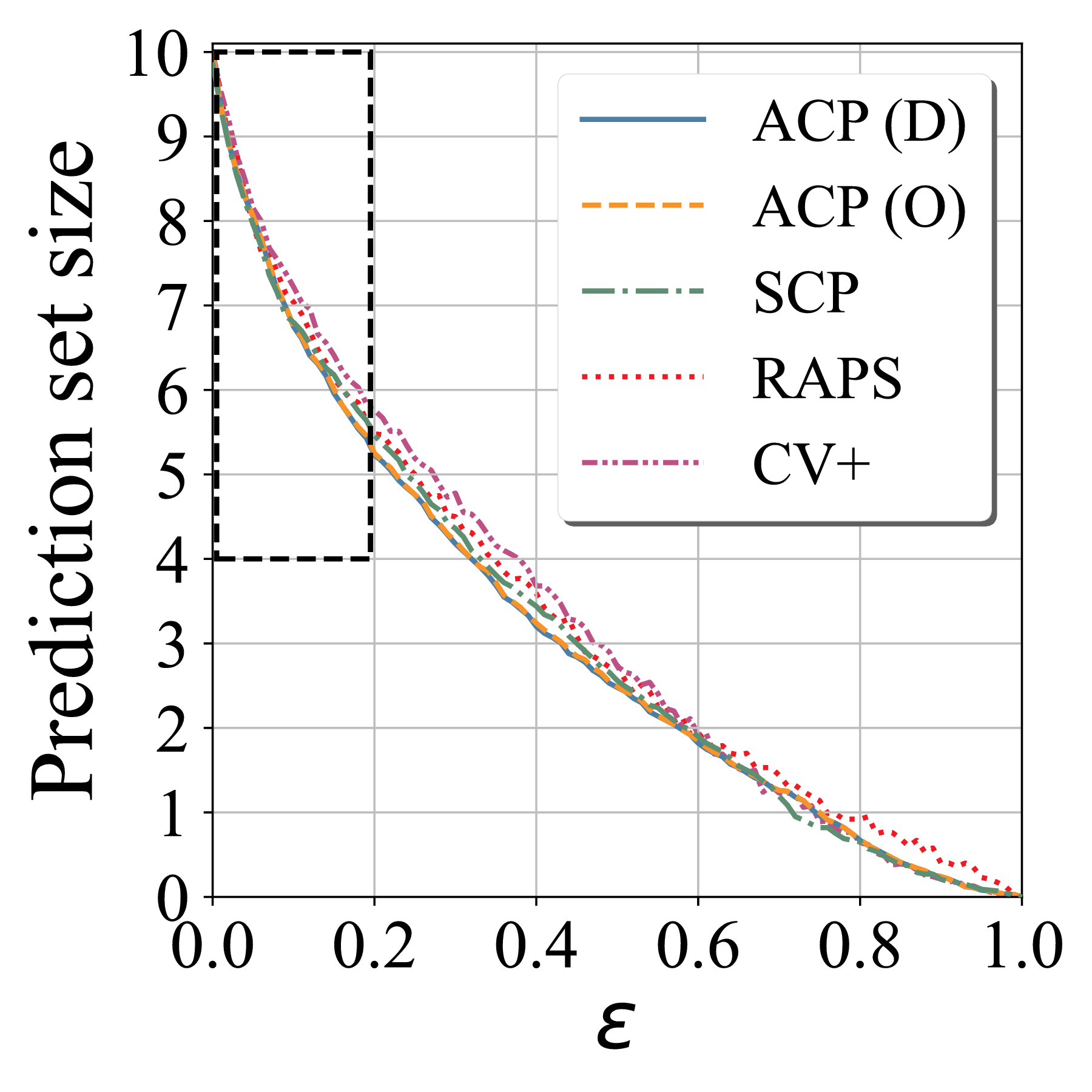}
     \caption*{CIFAR (LR)}
\end{subfigure} 
\begin{subfigure}[hb]{0.16\textwidth}
        \includegraphics[width=\linewidth]{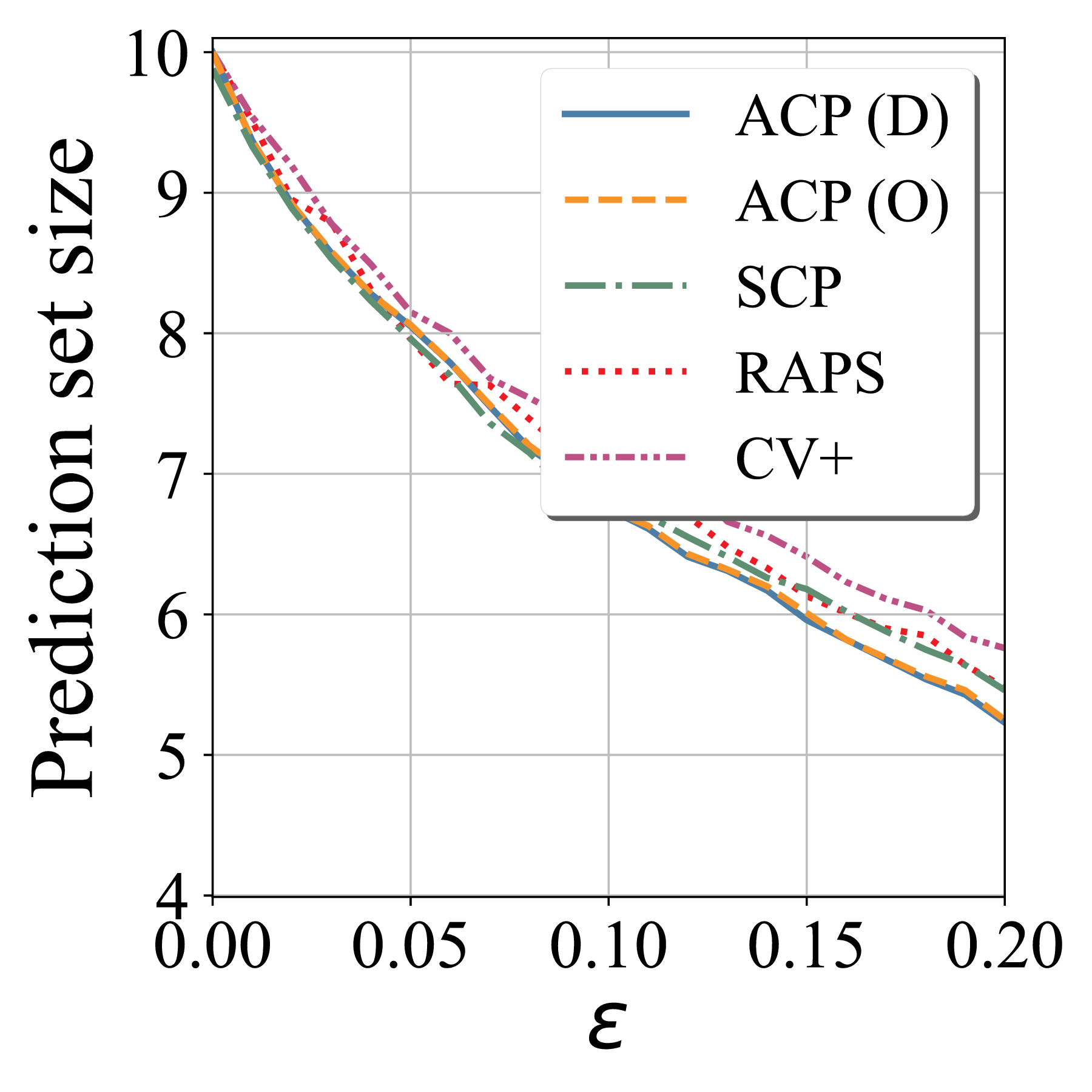}
        \caption*{CIFAR (LR)}
\end{subfigure}
\begin{subfigure}[hb]{0.16\textwidth}
     \includegraphics[width=\linewidth]{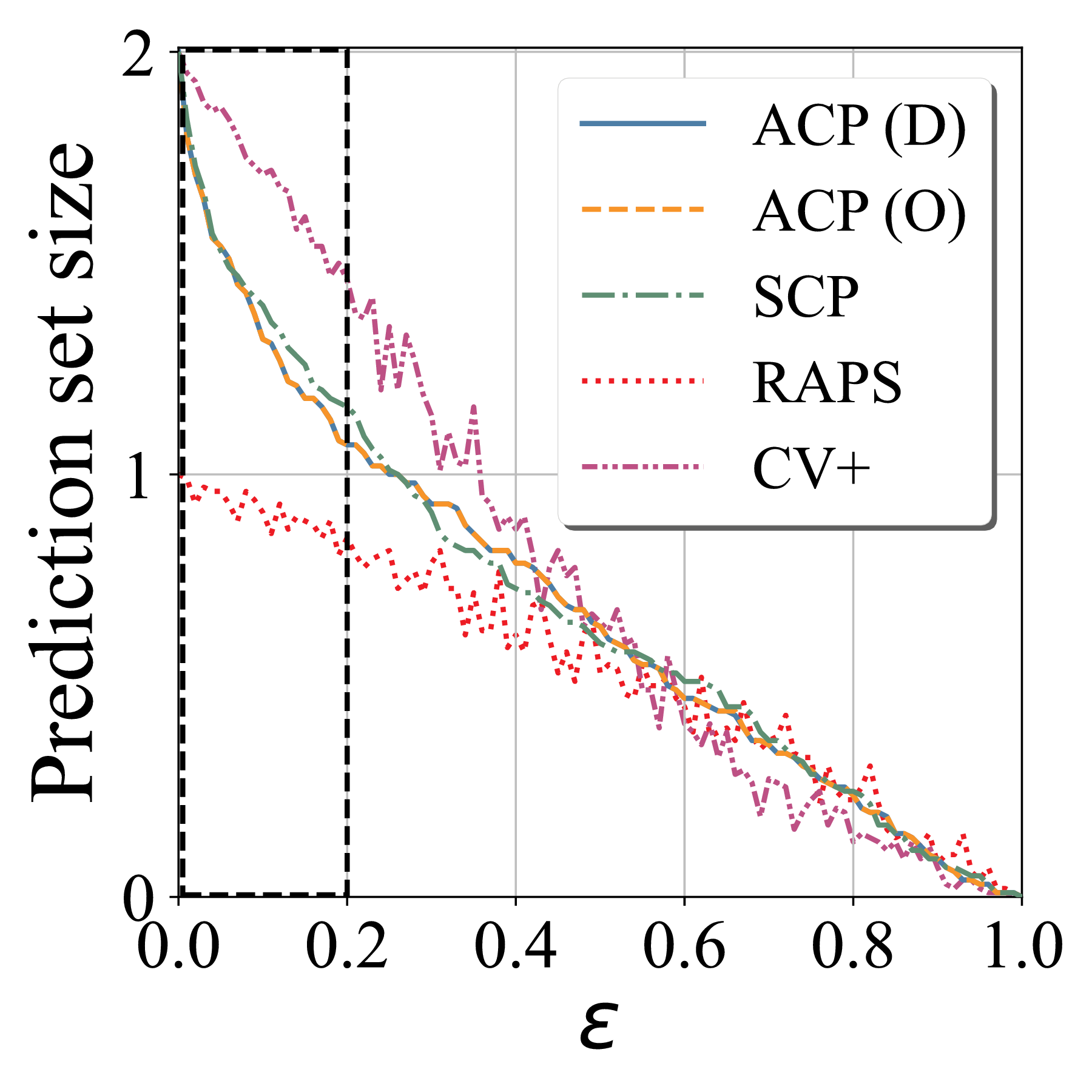}
     \caption*{US Census (LR)}
\end{subfigure} 
\begin{subfigure}[hb]{0.16\textwidth}
        \includegraphics[width=\linewidth]{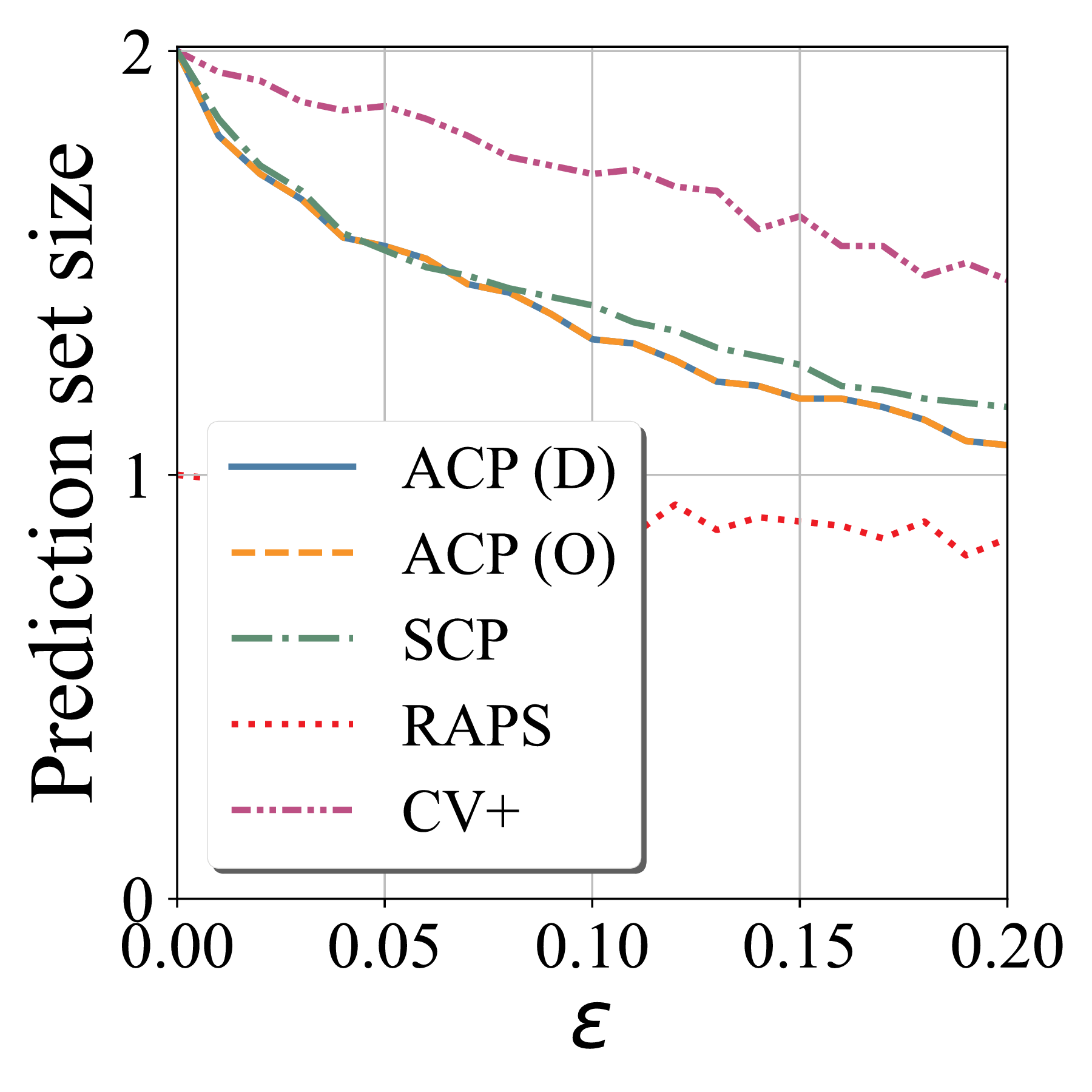}
        \caption*{US Census (LR)}
\end{subfigure} 
\begin{subfigure}[hb]{0.16\textwidth}
     \includegraphics[width=\linewidth]{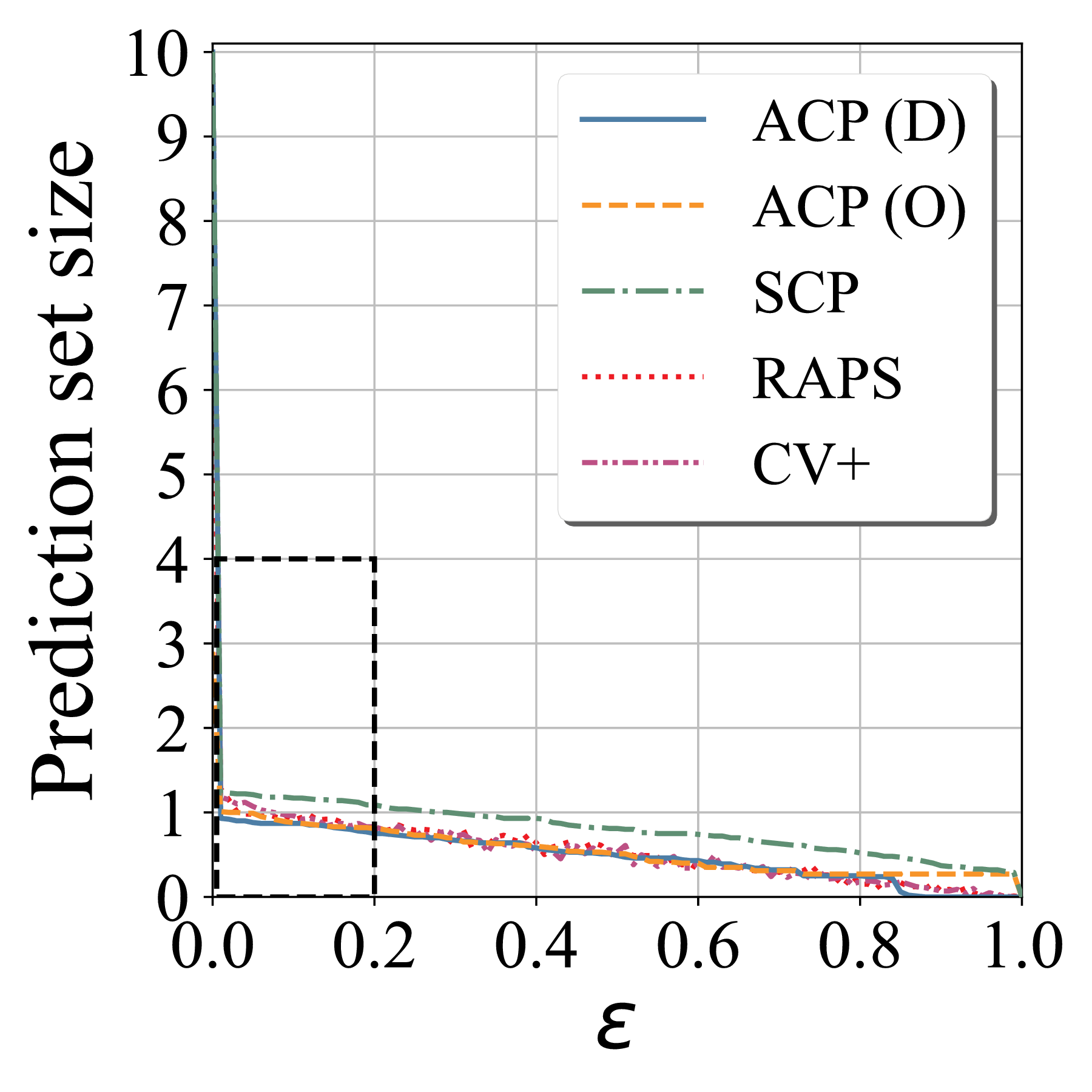}
     \caption*{MNIST (CNN)}
\end{subfigure} 
\begin{subfigure}[hb]{0.16\textwidth}
        \includegraphics[width=\linewidth]{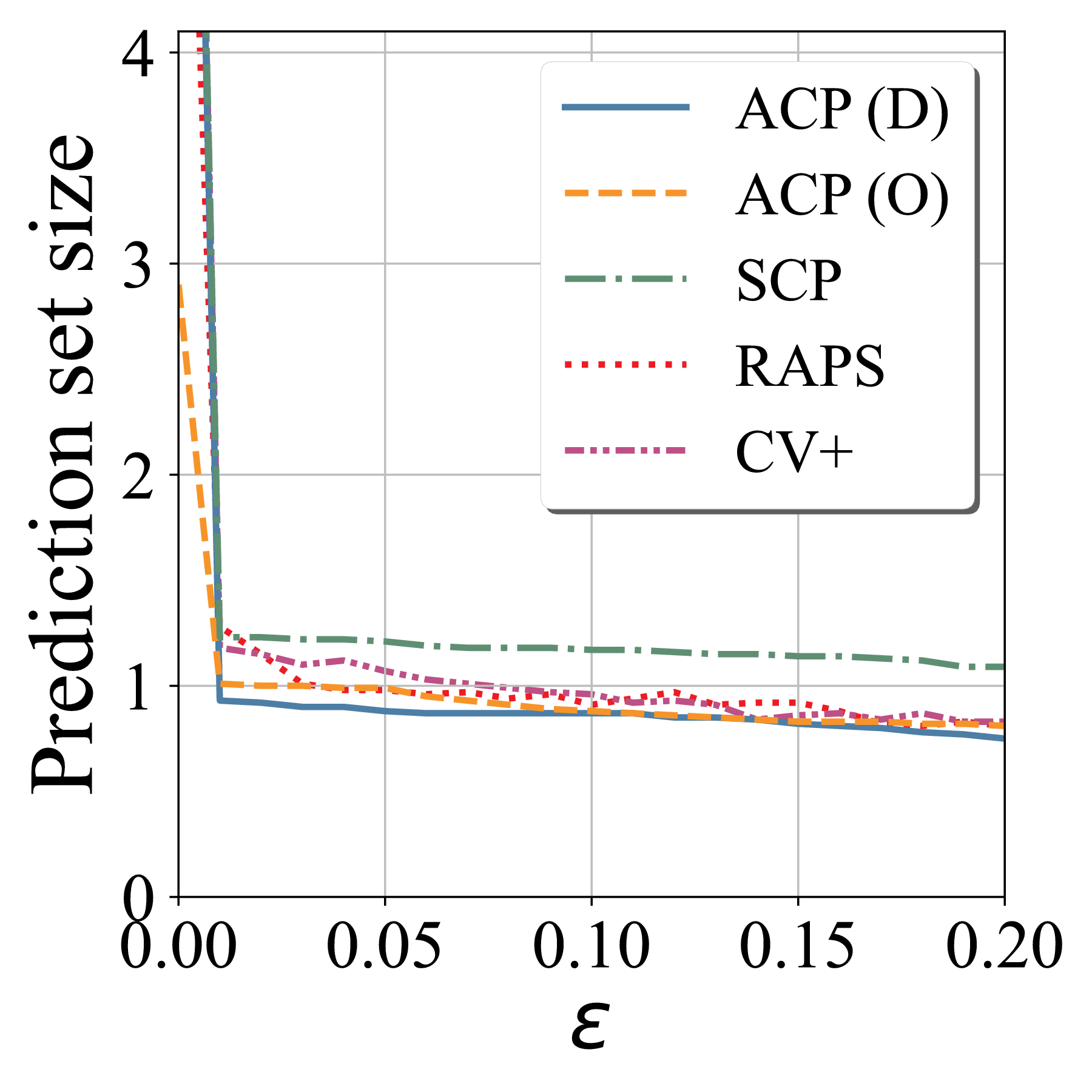}
        \caption*{MNIST (CNN)}
\end{subfigure} 
\begin{subfigure}[hb]{0.16\textwidth}
     \includegraphics[width=\linewidth]{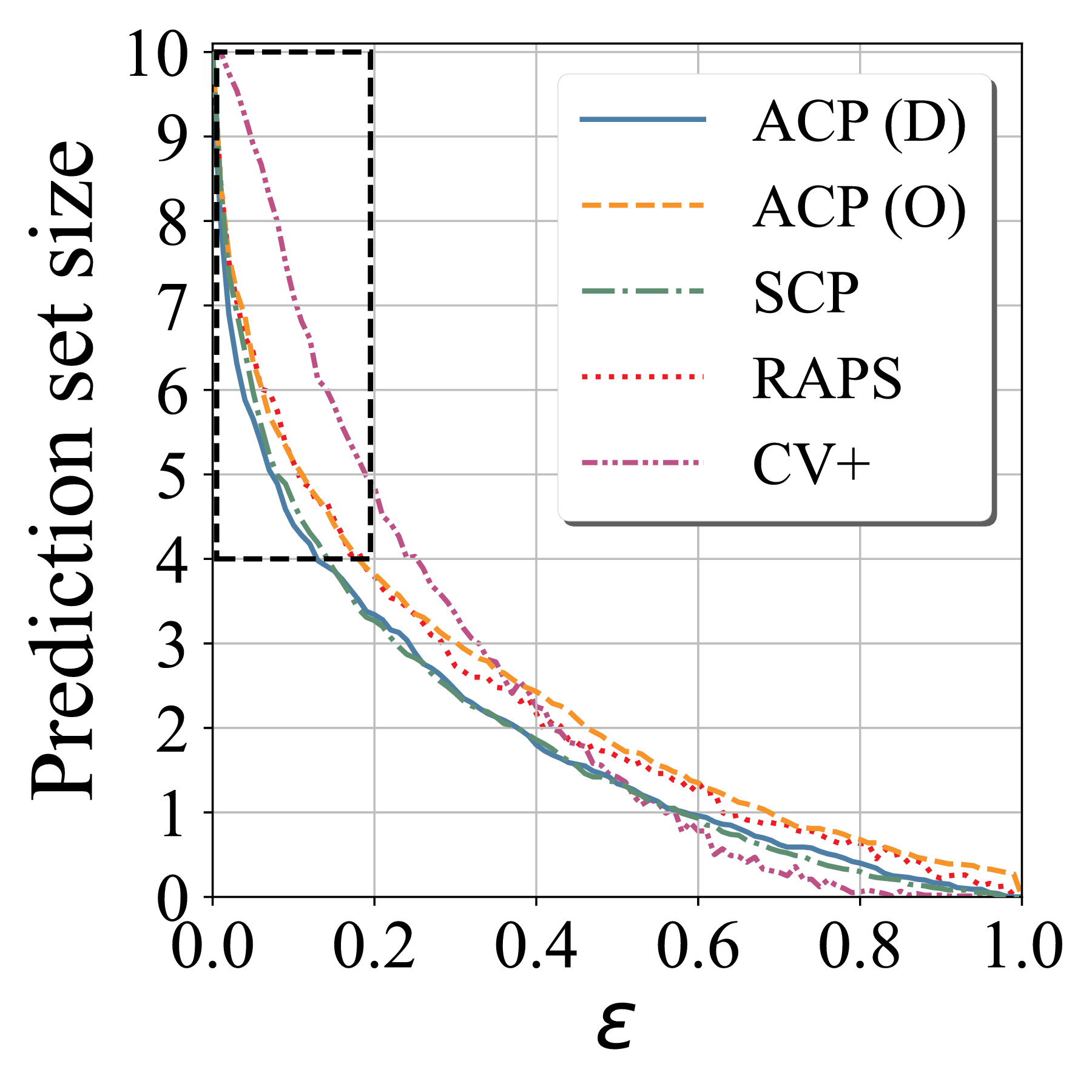}
     \caption*{CIFAR (CNN)}
\end{subfigure} 
\begin{subfigure}[hb]{0.16\textwidth}
        \includegraphics[width=\linewidth]{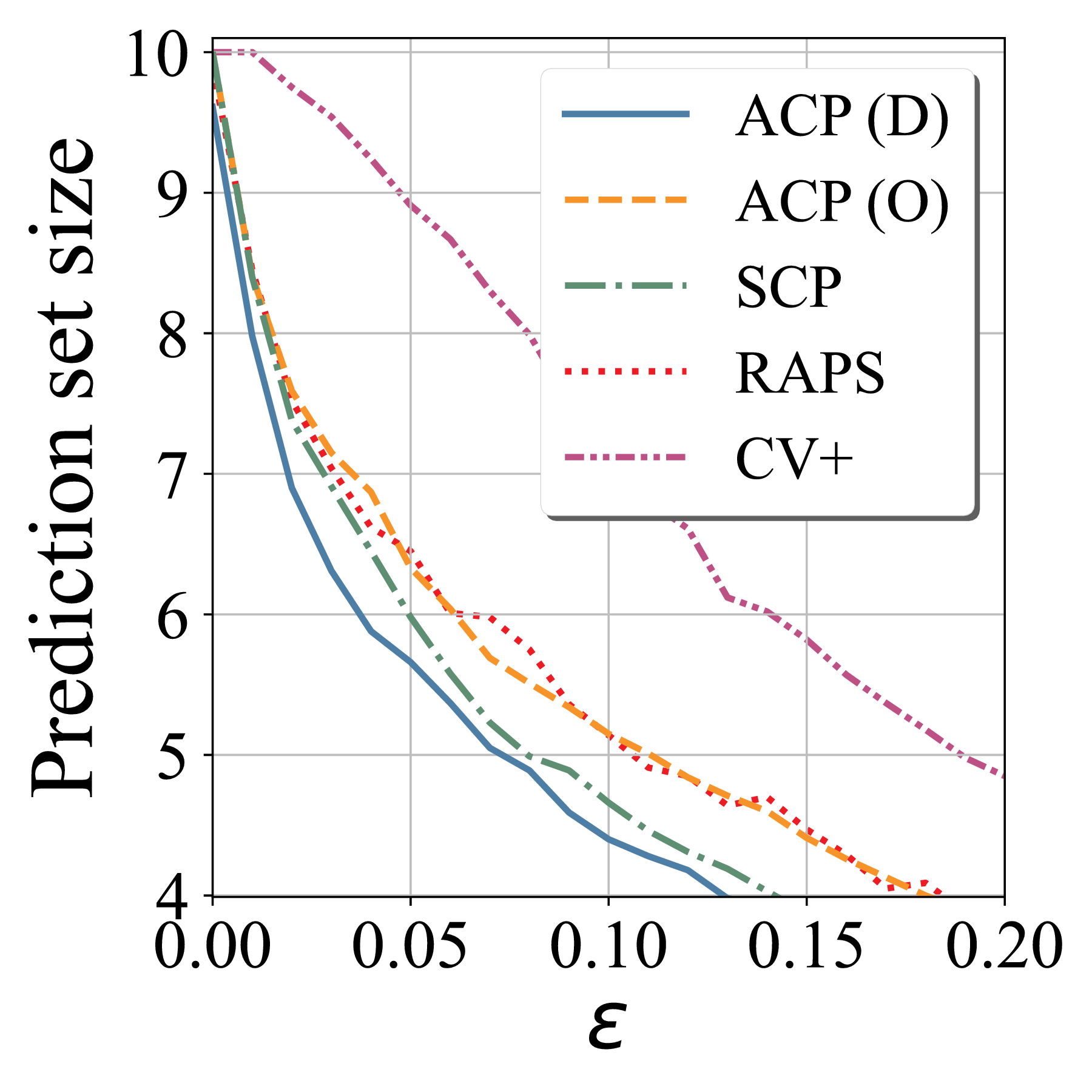}
        \caption*{CIFAR (CNN)}
\end{subfigure}
\caption{Average prediction set size w.r.t the significance level $\varepsilon$ for all settings and datasets. We show both the full curve and the corresponding to the interval of interest $\varepsilon \in [0,0.2]$} \label{fig:all_curves}
\end{figure*}

\end{document}